\definecolor{mplblue}{RGB}{31, 119, 180}
\definecolor{darkorange}{RGB}{255, 140, 0}
\newcommand\numberthis{\addtocounter{equation}{1}\tag{\theequation}}
\definecolor{mydarkblue}{rgb}{0,0.08,0.45}
\definecolor{DarkBlue}{rgb}{0,0,0.55}
\definecolor{mygreen}{rgb}{0,0.7,0.7}
\def\vr{{\bm{r}}}
\DeclareMathAlphabet{\mathsfit}{\encodingdefault}{\sfdefault}{m}{sl}
\SetMathAlphabet{\mathsfit}{bold}{\encodingdefault}{\sfdefault}{bx}{n}
\def\gA{{\mathcal{A}}}
\def\gB{{\mathcal{B}}}
\def\gC{{\mathcal{C}}}
\def\gE{{\mathcal{E}}}
\def\gG{{\mathcal{G}}}
\def\gH{{\mathcal{H}}}
\def\gN{{\mathcal{N}}}
\def\gT{{\mathcal{T}}}
\def\gX{{\mathcal{X}}}
\def\sN{{\mathbb{N}}}
\def\sP{{\mathbb{P}}}
\def\sR{{\mathbb{R}}}
\def\sS{{\mathbb{S}}}
\def\sV{{\mathbb{V}}}
\newcommand{\tr}{\textup{Tr}}
\newcommand{\hatmu}{\widehat{\mu}}
\newcommand{\hatsig}{\widehat{\Sigma}}
\newcommand{\sigr}{\Sigma_\textup{r}}
\newcommand{\mur}{\mu_\textup{r}}
\newcommand{\E}{\mathbb{E}}
\DeclareMathOperator*{\argmax}{arg\,max}
\DeclareMathOperator*{\argmin}{arg\,min}
\newenvironment{enumerate*}%
{\begin{enumerate}[leftmargin=*,topsep=0pt]%
		\setlength{\itemsep}{0pt}%
		\setlength{\parskip}{0pt}}%
	{\end{enumerate}}
\theoremstyle{plain}
\newtheorem{theorem}{Theorem}
\newtheorem{lemma}{Lemma}
\theoremstyle{definition}
\newtheorem{definition}{Definition}
\theoremstyle{remark}
\newtheorem{remark}{Remark}
\def\showcomments{1}
\begin{document}

%

%

\twocolumn[

\aistatstitle{A Multi-Armed Bandit Approach to Online Selection and Evaluation of Generative Models}

\aistatsauthor{ Xiaoyan Hu \And Ho-fung Leung \And  Farzan Farnia }

\aistatsaddress{The Chinese University of Hong Kong \And Independent Researcher \And  The Chinese University of Hong Kong} ]

\begin{abstract}
  Existing frameworks for evaluating and comparing generative models consider an offline setting, where the evaluator has access to large batches of data produced by the models. However, in practical scenarios, the goal is often to identify and select the best model using the fewest possible generated samples to minimize the costs of querying data from the sub-optimal models. In this work, we propose an online evaluation and selection framework to find the generative model that maximizes a standard assessment score among a group of available models. We view the task as a multi-armed bandit (MAB) and propose upper confidence bound (UCB) bandit algorithms to identify the model producing data with the best evaluation score that quantifies the quality and diversity of generated data. Specifically, we develop the MAB-based selection of generative models considering the Fréchet Distance (FD) and Inception Score (IS) metrics, resulting in the FD-UCB and IS-UCB algorithms. We prove regret bounds for these algorithms and present numerical results on standard image datasets. Our empirical results suggest the efficacy of MAB approaches for the sample-efficient evaluation and selection of deep generative models. The project code is available at \url{https://github.com/yannxiaoyanhu/dgm-online-eval}.
\end{abstract}

\section{\MakeUppercase{INTRODUCTION}}
\label{sec:intro}
Deep generative models have achieved astonishing results across a wide array of machine learning datasets. Quantitative comparisons between generative models, trained using different methods and architectures, are commonly performed by evaluating assessment metrics such as Fréchet Inception Distance (FID)~\citep{NIPS2017_8a1d6947, NEURIPS2023_0bc795af} and Inception Score (IS)~\citep{NIPS2016_8a3363ab}. Due to the growing applications of generative models to various learning tasks, the machine learning community has continuously adapted evaluation methodologies to better suit the characteristics of the newly introduced applications.

A common characteristic of standard evaluation frameworks for deep generative models is their offline assessment process, which requires a full batch of generated data for assigning scores to the models. While this offline evaluation does not incur significant costs for moderate-sized generative models, producing large batches of samples from large-scale models can be costly. In particular, generating a large batch of high-resolution image or video data could be expensive and hinder the application of existing evaluation scores for ranking generative models.

In this work, we focus on the \emph{online evaluation and selection} of generative modeling schemes, where we consider a group of generative models and attempt to identify the model with the best score by assessing the fewest number of produced data. By limiting the number of generated samples in the assessment process, online evaluation can save on the costs associated with producing large batches of samples. Additionally, online evaluation can significantly reduce the time and computational expenses required to identify a well-performing model in assessing a large group of generative models. Figure~\ref{fig1} shows an example of FD-based evaluation of three pretrained models where the embeddings are extracted by InceptionNet.V3. While the offline evaluation procedure requires a large batch of generated data from each model, our online evaluation approach queries generated data from models adaptively and limits sample generation from the sub-optimal models, thus collecting samples from the best model more frequently~(Figure~\ref{fd-results-fig1}).

\begin{figure*}[!t]
\centering
\includegraphics[width=0.78\textwidth]{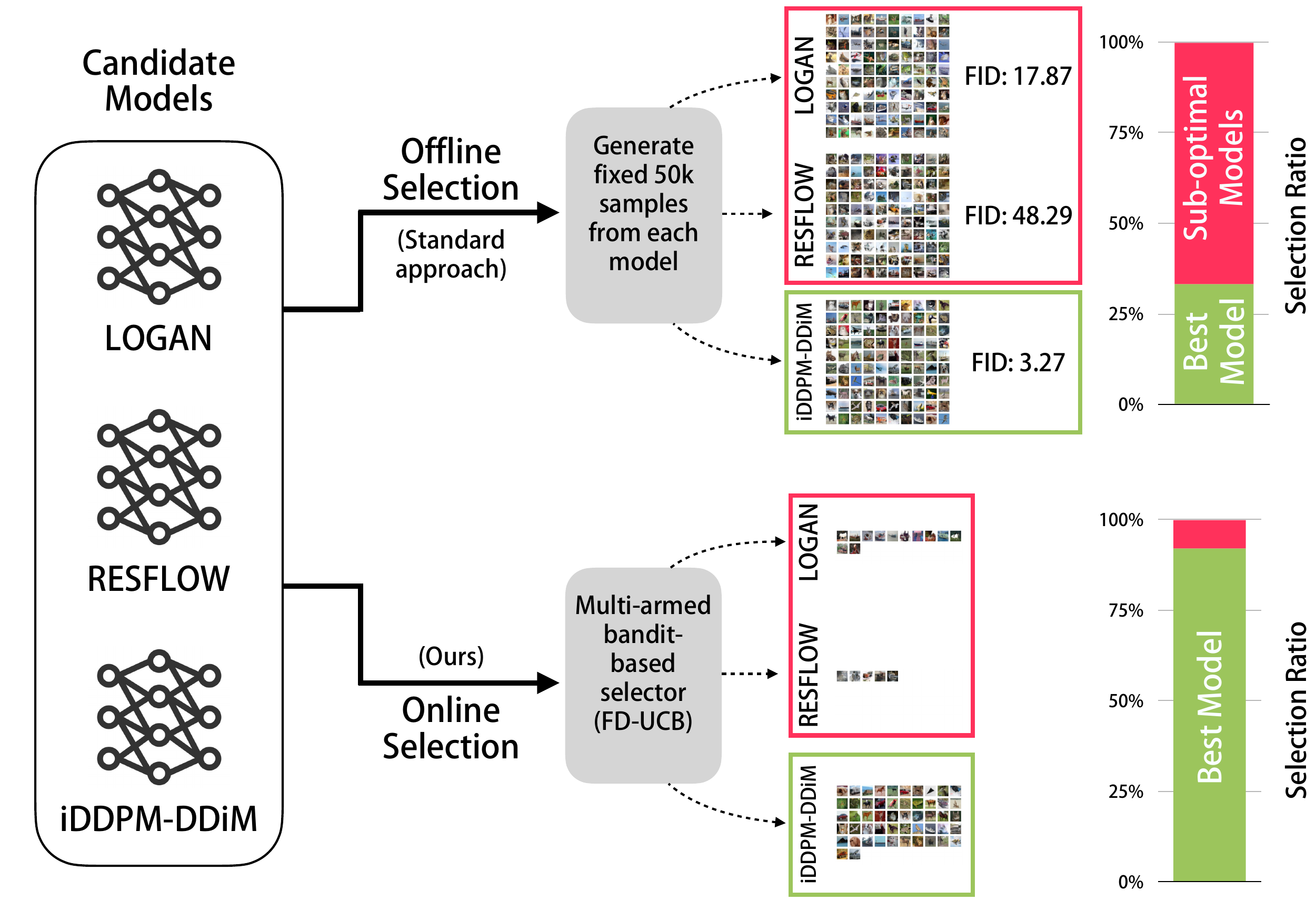}
\caption{FID-based evaluation and selection among CIFAR10 generative models: The standard offline evaluation requires a large batch of data from every model. In contrast, our proposed online approach leverages the UCB multi-armed bandit strategy to identify the best model with fewer generations from the suboptimal models.}
\label{fig1}
\end{figure*}

To measure the performance of an online evaluation algorithm, we use the \textit{regret} notion widely used in the online learning literature~\citep{bubeck2012regret}. In the online learning task, the evaluation algorithm selects a generative model and observes a mini-batch of samples generated by that model in each round. Our goal is to minimize the algorithm’s regret, defined as the cumulative difference between the scores of the selected models and the best possible score from the model set. Therefore, the learner's regret quantifies the cost of generating samples from sub-optimal models. We aim to develop online learning algorithms that result in lower regret values in the assessment of generated data. 

To address the described online evaluation of generative models, we utilize the multi-armed bandit (MAB) framework. In a standard MAB setup, the online learner seeks to identify the arm with the highest expected value of a random score. While this formulation has been widely considered in many learning settings, it cannot solve the online evaluation of generative models, as standard IS and FD metrics do not simplify to the expectation of a random variable and represent a non-linear function of the data distribution.

By deriving concentration bounds for the FD and IS scores, we propose the optimism-based FD-UCB and IS-UCB algorithms to address the online assessment of generative models. These algorithms apply the upper confidence bound (UCB) approach using data-dependent bounds that we establish for the FD and IS scores. Furthermore, we analytically bound the regret of the IS-UCB and FD-UCB methods, demonstrating their sub-linear regret growth assuming a full-rank covariance matrix of the embedded real data.

We discuss the results of several numerical applications of FD-UCB and IS-UCB to standard image datasets and generative modeling frameworks. We compare the performance of these algorithms with the Greedy algorithm, which selects the model with the highest estimated score, and a Naive-UCB baseline that applies the UCB algorithm with a data-independent upper confidence bound. Our numerical results show a significant improvement in our proposed algorithms compared to the Greedy and Naive-UCB baselines. Additionally, FD-UCB can attain satisfactory performance under the standard image data embeddings, including InceptionNet.V3~\citep{7780677}, DINOv2~\citep{oquab2024dinov2}, and CLIP~\citep{pmlr-v139-radford21a}. Our work demonstrates the effectiveness of MAB-based approaches to the evaluation and selection of generative models. The following summarizes the main contributions of this work:
\begin{itemize}[leftmargin=*]
\item Proposing a MAB-based evaluation framework for generative models that aims to minimize the regret of misidentifying the score-maximizing model from online generated data.
\item Developing the FD-UCB and IS-UCB algorithms by applying the upper-confidence-bound framework to our data-dependent estimation of the scores.
\item Proving sub-linear regret bounds for the proposed FD-UCB and IS-UCB algorithms.
\item Demonstrating satisfactory empirical performance of FD-UCB and IS-UCB in comparison to the Greedy and Naive-UCB baseline algorithms.
\end{itemize}

\section{\MakeUppercase{RELATED WORK}}
\label{sec:related_work}
\noindent\textbf{Assessment of deep generative models.}\quad The evaluation of generative models has been extensively studied in the literature. Several evaluation metrics are proposed, including distance-based metrics such as Wasserstein critic~\citep{pmlr-v70-arjovsky17a}, Fr\'echet Inception Distance (FID)~\citep{NIPS2017_8a1d6947} and Kernel Inception Distance (KID)~\citep{2018demystifying}, and diversity/quality-based metrics such as Precision/Recall~\citep{NEURIPS2018_f7696a9b, NEURIPS2019_0234c510}, density and coverage~\citep{pmlr-v119-naeem20a}, Vendi~\citep{friedman2023the}, RKE~\citep{NEURIPS2023_1f5c5cd0}, and scalable FKEA-Vendi \cite{ospanov2024towards}. In addition, the related works develop metrics quantifying the generalizability of the generative models, including the authenticity score~\citep{pmlr-v162-alaa22a}, the FLD score~\citep{jiralerspong2023feature}, the Rarity score~\citep{han2023rarity}, and the KEN score~\citep{zhang_interpretable_2024,zhang2024identification} measuring the novelty of the generated samples. In this paper, we primarily focus on FID and Inception Score, which have been frequently used for evaluating generative models.

\noindent\textbf{Role of embeddings in the quantitative evaluation results.}\quad Due to the high-dimensionality of images, evaluation of the generated images mostly relies on the embeddings extracted by pretrained networks on the ImageNet dataset, e.g., InceptionNet.V3~\citep{7780677}. However, \cite{pmlr-v119-naeem20a} shows that such pretrained embeddings could exhibit unexpected behaviors. Recently, several large pretrained models have been proposed, including DINOv2~\citep{oquab2024dinov2} and CLIP~\citep{cherti2023reproducible}. \cite{NEURIPS2023_0bc795af} shows that DINOv2-ViT-L/14 enables more interpretable evaluation of generative models. In addition, \cite{kynknniemi2023the} demonstrates that FID scores computed based on the embedding extracted by CLIP agree more with human-based assessments. In this paper, we provide the numerical results for the mentioned embeddings extracted by different pretrained models.

\noindent\textbf{Online learning using diversity-related evaluation metrics.}\quad Online learning is a sequential decision-making framework where an agent aims to minimize a cumulative loss function revealed to her sequentially. One popular setting is the multi-armed bandit~(MAB), whose study dates back to the work of~\cite{LAI19854, dc35850b-2ca1-314f-9e0d-470713436b17}. At each step, the agent chooses among several arms, each associated with a reward distribution, and aims to maximize a pre-specified performance metric. The primary concern of this body of literature considers maximizing the expected return~\citep{0b15adb3-762e-3845-a962-f0948b51f864,10.5555/944919.944941, bubeck2012regret}. Recent works study performance metrics related to the variance or entropy of the reward distribution, including the mean-variance criterion~\citep{NIPS2012_83f25503, pmlr-v119-zhu20d} in risk-sensitive MAB, and \emph{informational MAB}~(IMAB) where the agent maximizes the entropy rewards~\citep{10196497}. However, to the best of our knowledge, the evaluation of generative models has not been exclusively studied in an online learning context. In a concurrent work, \citet{rezaei2025be} aim to maximize kernel-based evaluation scores by finding a mixture of generative models. In this work, we primarily focus on identifying the best (single) model.

\noindent\textbf{Online training of generative models.}\quad Several related works focus on training generative adversarial networks (GANs)~\citep{NIPS2014_5ca3e9b1} using online learning frameworks. \cite{grnarova2018an} proposes to train \emph{semi-shallow} GANs using the \emph{Follow-the-Regularized-Leader}~(FTRL) approach. \cite{daskalakis2018training} shows that optimistic mirror decent~(OMD) can be applied to address the limit cycling problem in training Wasserstein GANs~(WGANs). Also, the recent paper \cite{park2024llm} studies the \emph{no-regret} behaviors of large language model~(LLM) agents. This reference proposes an unsupervised training loss, whose minimization could automatically result in known no-regret learning algorithms. On the other hand, our focus is on the evaluation of generative models which does not concern the models' training.

\section{\MakeUppercase{PRELIMINARIES}}
\label{sec:pre}
\subsection{Inception Score}

Inception score (IS) is a standard metric for evaluating generative models, defined as
\begin{equation}
    \textup{IS}:=\exp\left\{\E_{X_\textup{g}\sim p_\textup{g}}[\textup{KL}(p_{Y|X_\textup{g}}\| p_{Y_\textup{g}})]\right\},
\end{equation}
where $X_\textup{g}\sim p_\textup{g}$ is a generated image, $p_{Y|X_\textup{g}}$ is the conditional class distribution assigned by the InceptionNet.V3 pretrained on ImageNet~\citep{7780677}, and $p_{Y_\textup{g}}=\E_{X_\textup{g}\sim p_\textup{g}}[p_{Y|X_\textup{g}}]$ is the marginal class distribution. Further, we have that $\textup{IS}=\exp[I(X_\textup{g};Y_\textup{g})]=\exp[H(Y_\textup{g})-H(Y_\textup{g}|X_\textup{g})]$, where $I(\cdot;\cdot)$ is the mutual information, and $H(\cdot)$ is the Shannon entropy. A higher IS implies that the images generated by the model have higher diversity, since $p_{Y_\textup{g}}$ would be more uniformly distributed to increase $H(Y_\textup{g})$, and possess higher fidelity, because $p_{Y|X_\textup{g}}$ is closer to a one-hot vector to enforce a smaller $H(Y_\textup{g}|X_\textup{g})$. 

\subsection{Fréchet Distance}

Fréchet Distance (FD)~\citep{DOWSON1982450} is another standard metric for evaluating generative models. Let $f(x)\in\sR^d$ denote the $d$-dimensional feature of an image $x$. The FD between the feature distributions of the generated images $p_{f(X_\textup{g})}$ and the real images $p_{f(X_\textup{r})}$, which is computed by
\begin{equation}
    \textup{FD}=\|\mu_\textup{g}-\mu_\textup{r}\|_2^2+\textup{Tr}[\Sigma_\textup{g}+\Sigma_\textup{r}-2(\Sigma_\textup{g}\Sigma_\textup{r})^\frac{1}{2}].
\end{equation}
The well-known FID metric~\citep{NIPS2017_8a1d6947} is computed as the FD where the image data feature is extracted by InceptionNet.V3~\citep{7780677}. Recently, \cite{kynknniemi2023the,NEURIPS2023_0bc795af} propose computing the FD distance using the CLIP \citep{pmlr-v139-radford21a} and DINOv2 \citep{oquab2024dinov2} embeddings, respectively, to boost the score's ranking consistency with human evaluations.

\section{\MakeUppercase{ONLINE EVALUATION OF GENERATIVE MODELS}}
\newcounter{protocol}
\makeatletter
\newenvironment{protocol}[1][htb]{%
  \let\c@algorithm\c@protocol
  \renewcommand{\ALG@name}{Protocol}
  \begin{algorithm}[#1]%
  }{\end{algorithm}
}
\makeatother

In this section, we introduce the framework of online evaluation of generative models, which is given in Protocol~\ref{protocol}. We denote by $\gG:=[G]$ the set of generative models. For each generator $g\in[G]$, we denote by $p_g\in\Delta(\gX)$ its generative distribution over the space $\gX$, which can be texts or images. Given an evaluation metric, e.g., FD or IS, the corresponding \emph{score} of the generator $g\in[G]$ is denoted by $\nu_g$. The evaluation proceeds in $T$ steps. At each step $t\in[T]$, the evaluating algorithm $\gA$ picks a generator $g_t\in[G]$ and collects a batch of generated samples $\{x_t^j\sim p_g\}_{j=1}^b$, where $b\in\sN_+$ is the (fixed) batch size. The evaluator aims to minimize the \emph{regret}
\begin{equation}
\label{def_reg}
    \textup{Regret}(T)=\sum_{t=1}^T(\nu^*-\nu_{g_t}),
\end{equation}
where $\nu^*:=\argmax_{g\in[G]}\nu_g$~(if the higher the score the better). 

Regarding the challenges of online evaluation of generative models, note that the empirical estimation of the score could be biased and generator-dependent. In addition, the generative distribution typically lies in a high-dimensional space, which makes it difficult to estimate the score from limited data. Moreover, the evaluation metrics often incorporate higher-order information of the generative distribution~(e.g., computing FD requires the covariance matrix). Hence, analyzing the concentration properties of the score estimation would be challenging.

\begin{protocol}[t]
\caption{Online Evaluation and Selection of Generative Models}
\begin{algorithmic}[1]
\REQUIRE step $T\in\sN_+$, evaluation metric, a set $\gG\leftarrow[G]$ of generators for evaluation, evaluator $\gA$, batch size $b\in\sN_+$\vspace{1mm}
\STATE \textbf{Initialize:} estimated score $\widehat{\nu}_g$ and generated samples $\gH_g\leftarrow\varnothing$ for each $g\in[G]$. \vspace{1mm}
\FOR{step $t=1,2,\cdots,T$ \vspace{1mm}}
    \STATE Pick generator $g_t\sim\gA$.\vspace{1mm}
    \STATE Generate a batch $\{x_t^j\sim p_{g_t}\}_{j=1}^b$ of samples from $g_t$.\vspace{1mm}
    \STATE Update $\gH_{g_t}\leftarrow\gH_{g_t}\cup\{x_t^j\}_{j=1}^b$ and estimated score $\widehat{\nu}_{g_t}$.\vspace{1mm}
\ENDFOR
\end{algorithmic}
\label{protocol}
\end{protocol}

\noindent\textbf{Use of regret metric.}\quad The choice of regret metric~\ref{def_reg} follows the standard online learning literature for bandit problems~\citep{bubeck2012regret}. Particularly, if the evaluator can attain a \textit{sub-linear} regret, then the overall selection converges to the best model with high probability. Additionally, in the online evaluation task, it is possible that the best model has a very similar performance to that of the second-best model. In such a case, the regret would take the difference between the models into account, and in a case where the top-$k$ models have similar scores, generating samples from them would weighted more equally. On the other hand, if the best two models have significantly different scores, the regret would sharply grow by picking the suboptimal models.

\section{\MakeUppercase{FRÉCHET DISTANCE-BASED ONLINE EVALUATION AND SELECTION}}
\label{sec:5}
\begin{figure*}[!ht]
\centering
    \makebox[25pt][r]{\makebox[20pt]{\raisebox{50pt}{\rotatebox[origin=c]{90}{\textbf{Avg. Regret}}}}}
    \subfigure{\includegraphics[width=0.3\textwidth]{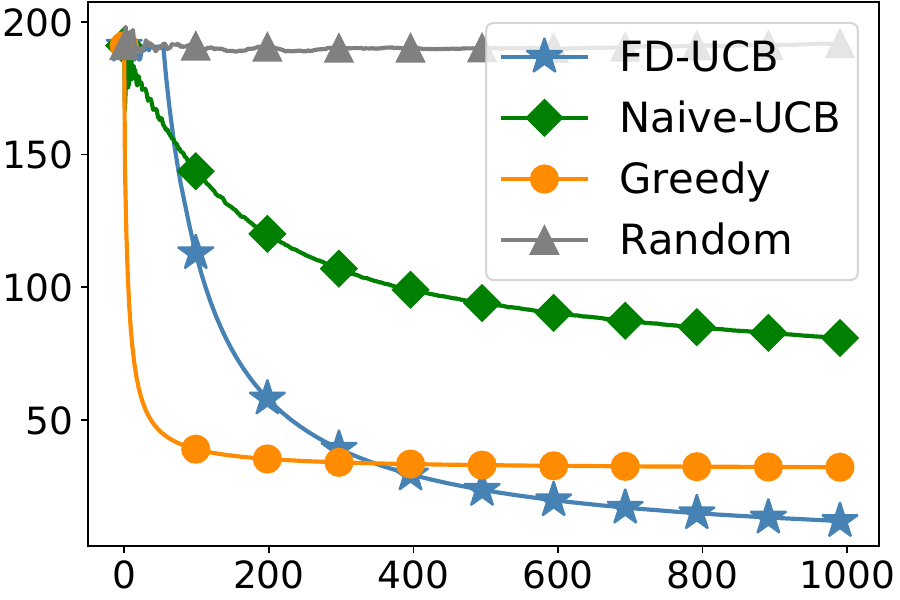}}
    \hfill
    \subfigure{\includegraphics[width=0.3\textwidth]{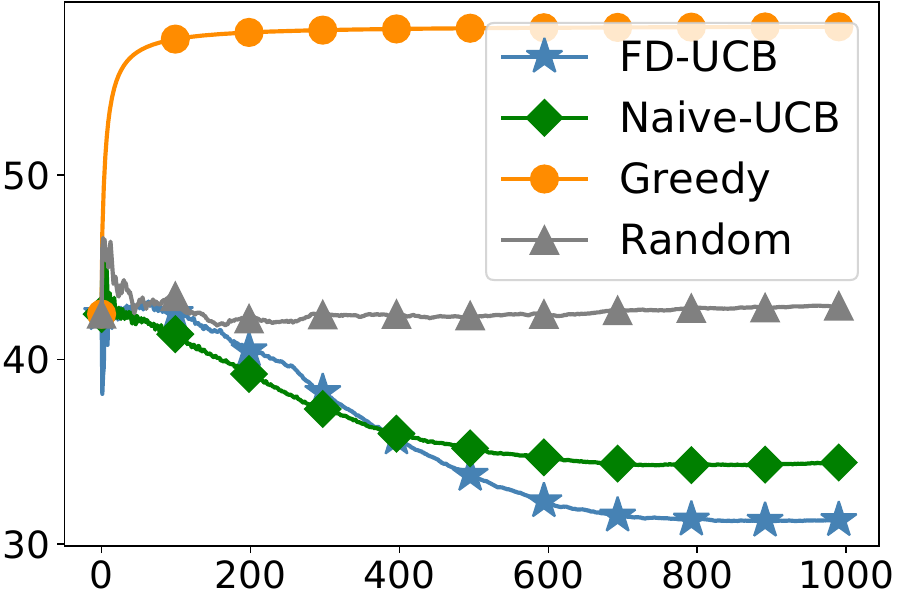}}
    \hfill
    \subfigure{\includegraphics[width=0.3\textwidth]{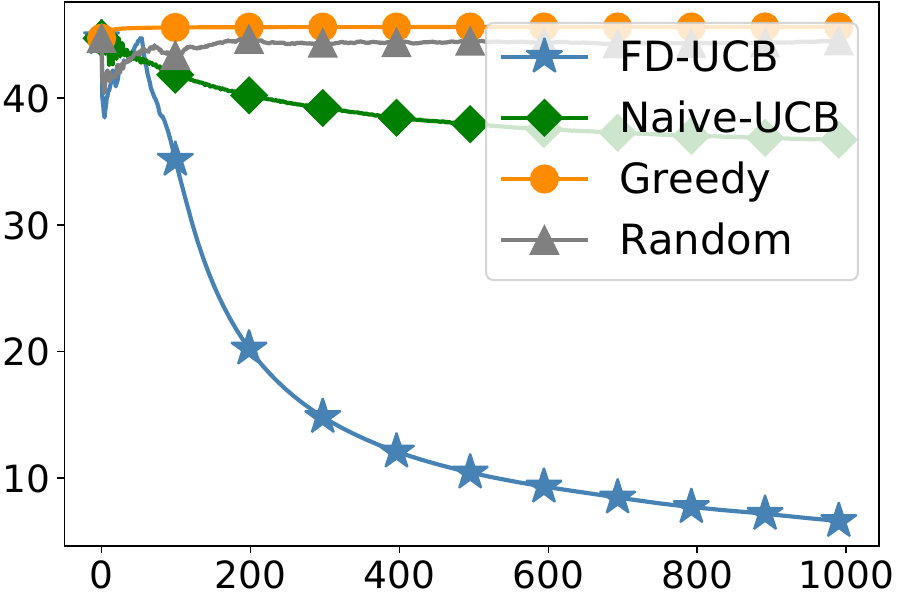}} \\
    
    \makebox[25pt][r]{\makebox[20pt]{\raisebox{50pt}{\rotatebox[origin=c]{90}{\textbf{OPR}}}}}
    \stackunder{\subfigure{\includegraphics[width=0.3\textwidth]{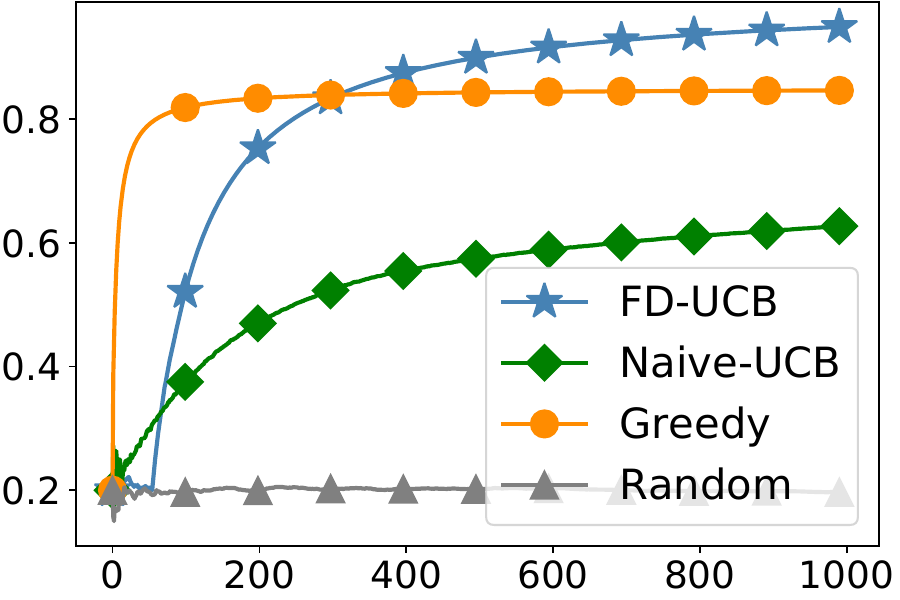}}}{\textbf{CIFAR10}}
    \hfill
    \stackunder{\subfigure{\includegraphics[width=0.3\textwidth]{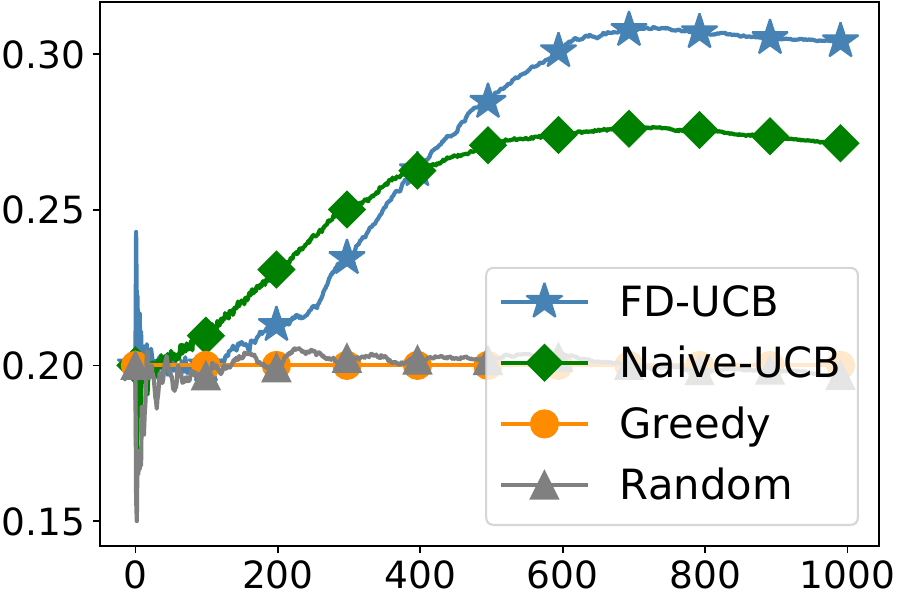}}}{\textbf{ImageNet}}
    \hfill
    \stackunder{\subfigure{\includegraphics[width=0.3\textwidth]{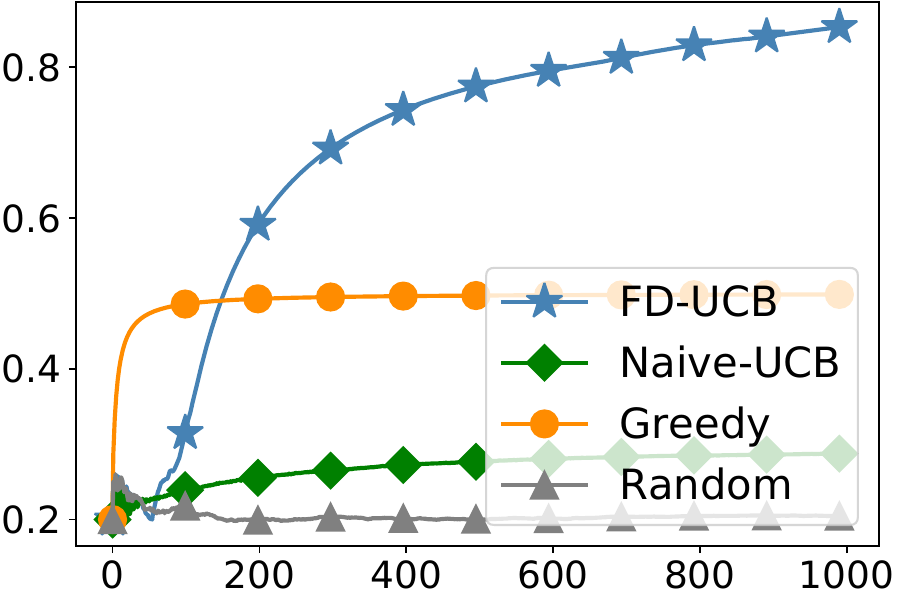}}}{\textbf{FFHQ}}
\caption{Online FD-based evaluation and selection among standard generative models: The $x$-axis is the number of online steps. At each step, the algorithm samples a batch of five generated images from the chosen model. The image data embeddings are extracted by CLIP~\citep{cherti2023reproducible}. Results are averaged over 20 trials.}
\label{fd-results}
\end{figure*}

In this section, we consider the online evaluation of generative models by Fréchet Distance (FD). Given $n\in\sN_+$ generated images $x^1,\cdots,x^n\sim p_g$ queried from model $g$, the empirical FD is computed by
\begin{equation}
\label{emp_fid}
    \widetilde{\textup{FD}}_g^n=\|\widehat{\mu}_g^n-\mu_\textup{r}\|_2^2+\textup{Tr}[\widehat{\Sigma}_g^n+\Sigma_\textup{r}-2(\widehat{\Sigma}_g^n\Sigma_\textup{r})^\frac{1}{2}],
\end{equation}
where
\begin{equation*}
    \widehat{\mu}_g^n=\frac{1}{n}\sum_{i=1}^n f(x^i),\ \widehat{\Sigma}_g^{n}=\frac{1}{n}\sum_{i=1}^n(f(x^i)-\widehat{\mu}_g^n)(f(x^i)-\widehat{\mu}_g^n)^\top
\end{equation*}
are the mean vector and the empirical covariance matrix, respectively, and $f(x^i)\in\sR^d$ is the feature of the $i$-th generated image which is extracted by, e.g., the InceptionNet.V3. 
The FD-based evaluation is typically performed on a large batch of generated samples~(10-50 thousands) to reduce the estimation variance, which can be sample-inefficient and costly.

To enable sample-efficient online evaluation of generative models, we adapt the optimism-based online learning framework to the FD score. To this end, we first derive an optimistic FD score in the following theorem. We defer the theorem's proof to Appendix~\ref{p_thm_ofid}.

\begin{theorem}[Optimistic FD score]
\label{thm_ofid}
Assume for any generator $g$, the (random) embedding $f(X_g) \sim \gN(\mu_g,\Sigma_g)$ follows a multivariate Gaussian, and the covariance matrix $\Sigma_\textup{r}$ of the real data is positive definite. Then, with probability at least $1-\delta$, we have
\begin{equation}
\label{ofid}
    \widehat{\textup{FD}}_g^n=\widetilde{\textup{FD}}_g^n-\gB^n_{g}\le\textup{FD}_g
\end{equation}
for any $n\ge 4\bm{r}(\Sigma_g)+\log(3/\delta)$, where $\bm{r}(\Sigma_g):=\frac{\tr[\Sigma_g]}{\|\Sigma_g\|_2}$ is the effective rank of the covariance matrix $\Sigma_g$. In addition, the bonus is given by
\begin{equation}
\label{fid-b}
\begin{aligned}
    \gB_g^n:=&2\Delta_{\mu_g}^n\cdot\left(\Delta_{\mu_g}^n+\|\widehat{\mu}_g^n-\mur\|_2\right)+\tr[\sigr^\frac{1}{2}]\sqrt{8\Delta_{\Sigma_g}^n}\\
    &+\tr[\Sigma_g]\sqrt{\frac{8}{n}\log\left(\frac{6}{\delta}\right)}+\frac{8\|\Sigma_g\|_2}{n}\log\left(\frac{6}{\delta}\right),
\end{aligned}
\end{equation}
where $\Delta_{\mu_g}^n$ and $\Delta_{\Sigma_g}^n$ are defined in (\ref{eq52}) and (\ref{eq57}), respectively.
\end{theorem}

\begin{figure*}[!ht]
\centering
    \makebox[25pt][r]{\makebox[20pt]{\raisebox{50pt}{\rotatebox[origin=c]{90}{\textbf{Avg. Regret}}}}}
    \subfigure{\includegraphics[width=0.3\textwidth]{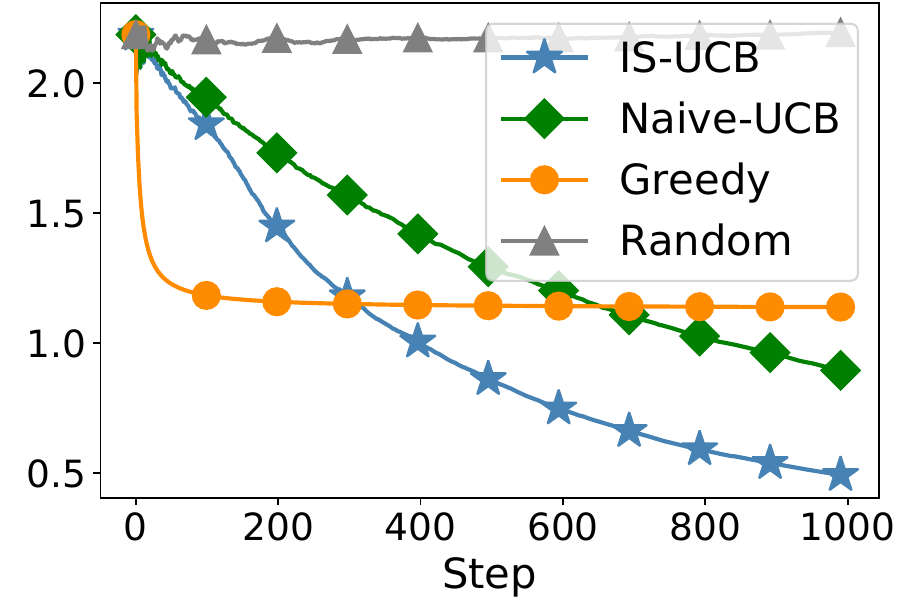}}
    \hfill
    \subfigure{\includegraphics[width=0.3\textwidth]{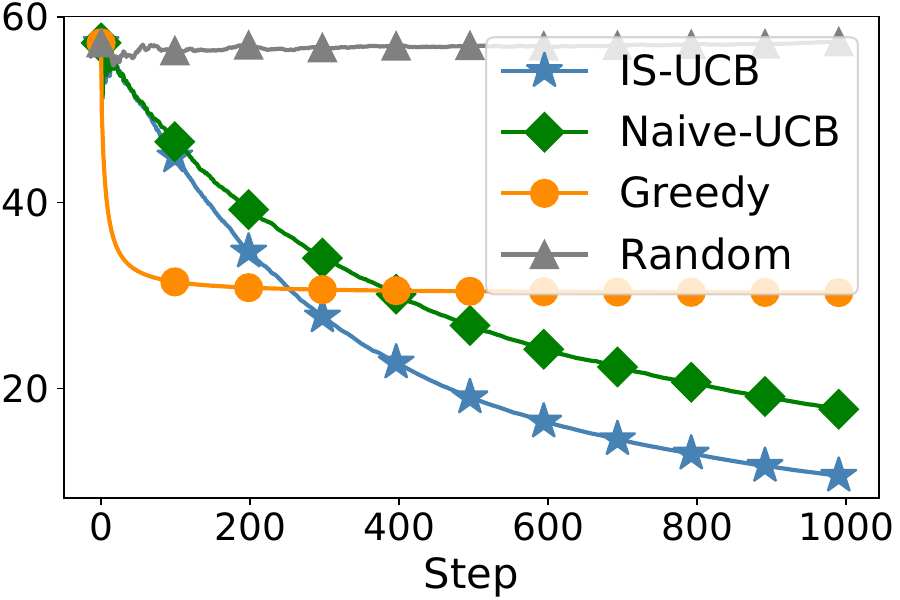}}
    \hfill
    \subfigure{\includegraphics[width=0.3\textwidth]{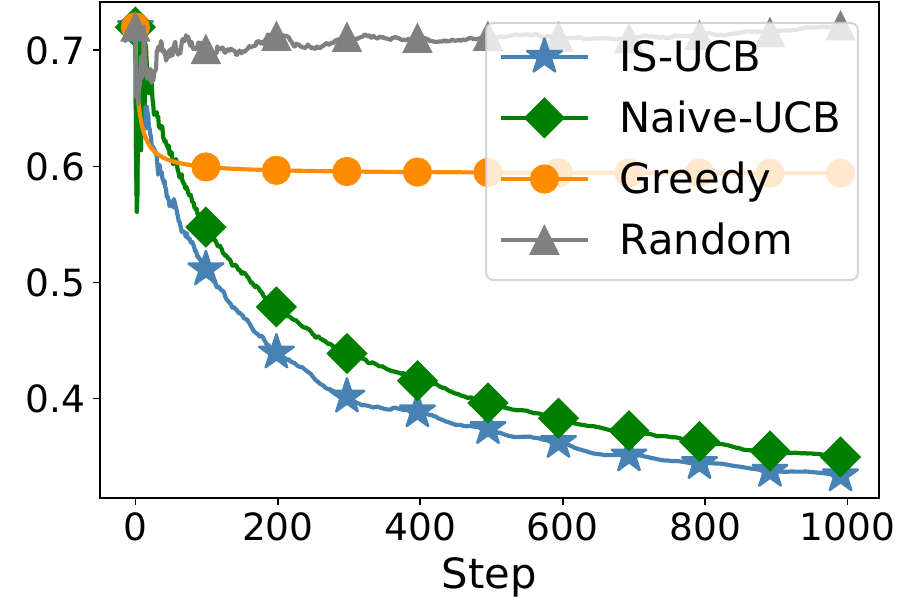}} \\
    
    \makebox[25pt][r]{\makebox[20pt]{\raisebox{50pt}{\rotatebox[origin=c]{90}{\textbf{OPR}}}}}
    \stackunder{\subfigure{\includegraphics[width=0.3\textwidth]{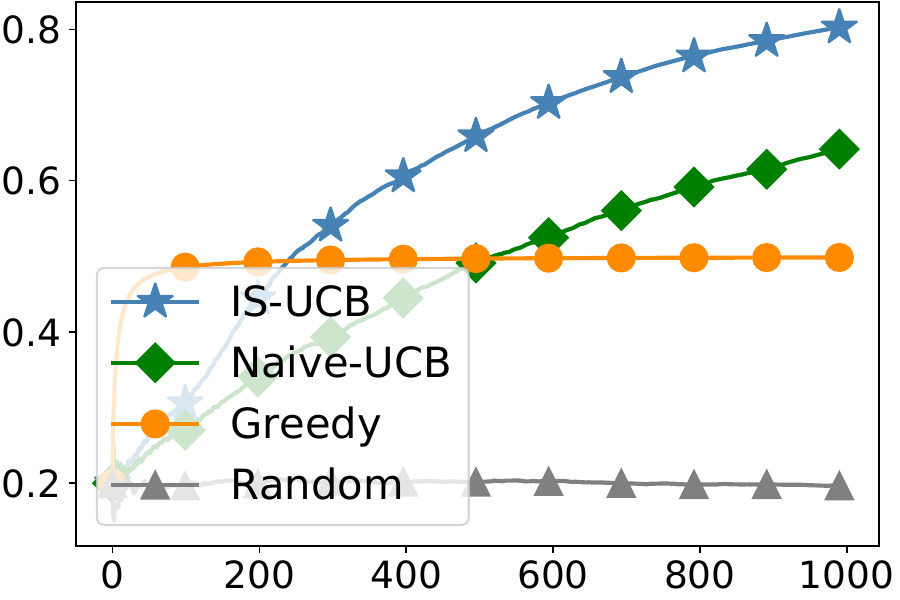}}}{\textbf{CIFAR10}}
    \hfill
    \stackunder{\subfigure{\includegraphics[width=0.3\textwidth]{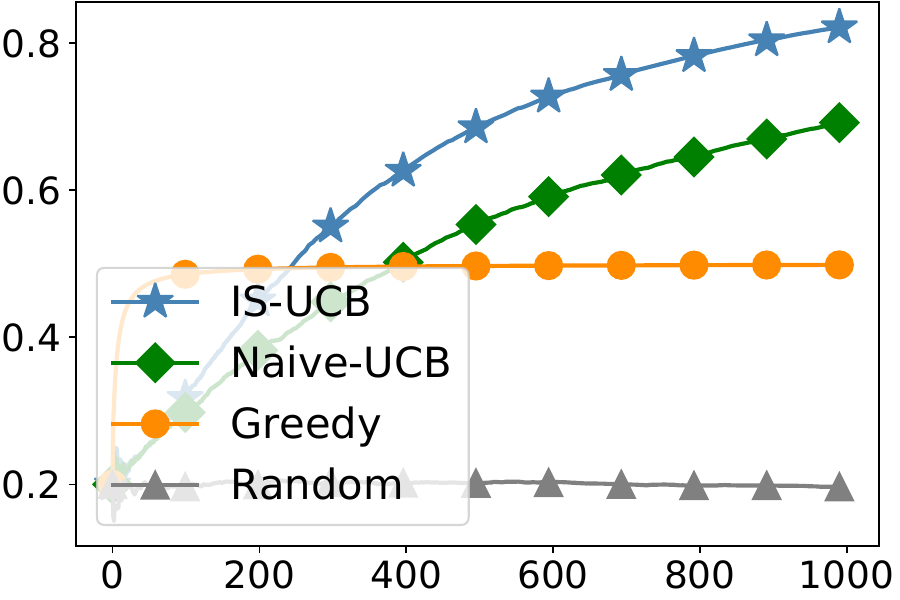}}}{\textbf{ImageNet}}
    \hfill
    \stackunder{\subfigure{\includegraphics[width=0.3\textwidth]{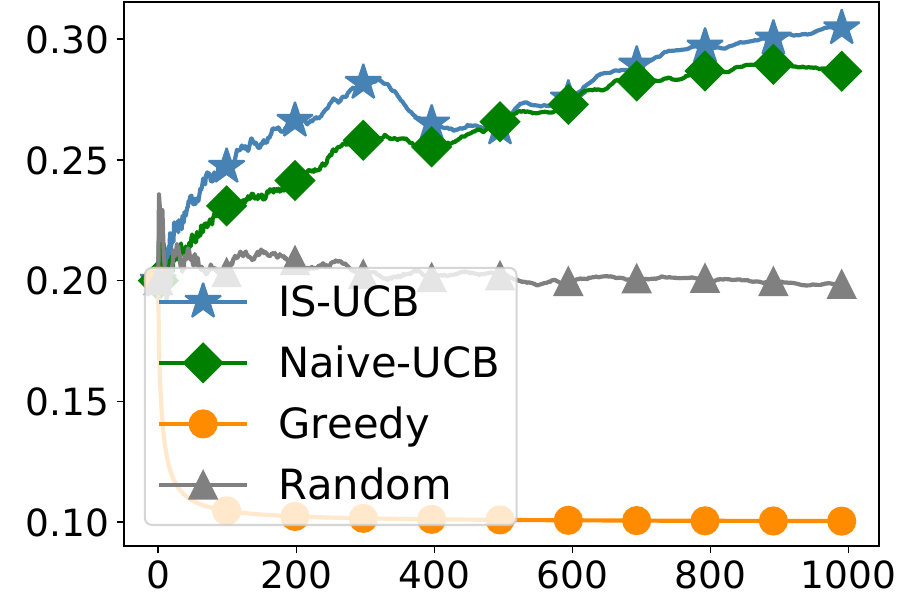}}}{\textbf{FFHQ}}
\caption{Online IS-based evaluation and selection among standard generative models: The $x$-axis is the number of steps. At each step, the algorithm samples a batch of five generated images from the chosen model. Results are averaged over 20 trials.}
\label{is-results}
\end{figure*}

\begin{remark}[Model-dependent parameters]
The bonus term~(\ref{fid-b}) involves model-dependent parameters $\tr[\Sigma_g]$ and $\|\Sigma_g\|_2$ of the covariance matrix. Note that the presence of the norms of $\Sigma_g$ in the concentration bound for FD is inevitable, because, without any assumptions on the norm of $\Sigma_g$, the concentration bound cannot have a finite value. To address this, two approaches can be considered: 1) when the norm of the data embeddings is bounded, e.g., the normalized CLIP embeddings where $\|X_g\|_2\le 1$, we can substitute $\tr[\Sigma_g]$ and $\|\Sigma_g\|_2$ in the above bound with the embedding upper-bound norm. In this case, we can extend our analysis beyond multivariate Gaussians~(Theorem~\ref{thm-bounded-norm} in the Appendix). 2) On the other hand, when using the InceptionV3 and DINOv2 embeddings that have generally unbounded output norm, we can use the already generated data to estimate the parameters in the upper-bound. This approach avoids any offline estimation of the parameters required to have an upper-bound and preserves the online format of the FD-UCB algorithm, and conditioned that the estimated norms converge fast to the underlying value, will provide a correct solution. Our numerical results indicate that with only 50 samples, the norm terms in the bound will be relatively close to their actual values. See Part~3 in Appendix~\ref{sec:aed} for our experimental validation.
\end{remark}

\begin{algorithm}[!ht]
\caption{FD-UCB}
\label{alg-fid-ucb}
\begin{algorithmic}[1]
\REQUIRE set of models $\gG$, image embedding $f$, step $T \in \sN_+$, $\mu_\textup{r} \in \sR^d$ and $\Sigma_\textup{r} \in \sS^d$ of real data, failure probability $\delta\leftarrow\delta/T$, batch size $b \in \sN_+$, number of burn-in samples $N\ge\max_{g\in\gG} \{4\vr(\Sigma_g) + \log(3/\delta)\}$.
\STATE \textbf{Initialize:} $\widehat{\textup{FD}}_g\leftarrow-\infty,\gH_g\leftarrow\varnothing,n_g\leftarrow0$ for each $g\in\gG$.
\FOR{model $g \in \gG$}
    \STATE Generate $M$ samples $\{x_0^j \sim p_g \}_{j=1}^N$. \\
    \STATE Update $\gH_{g}\leftarrow\gH_{g}\cup\{f(x_0^j)\}_{j=1}^N$ and visitation $n_{g}\leftarrow N$.
\ENDFOR
\FOR{step $t=1,2,\cdots,T$}
    \STATE Pick model $g_t\leftarrow\argmin_{g\in\gG}\widehat{\textup{FD}}_g$.
    \STATE Model $g_t$ generates batch samples $\{x_t^j\sim g_t\}_{j=1}^b$.
    \STATE Update $\gH_{g_t}\leftarrow\gH_{g_t}\cup\{f(x_t^j)\}_{j=1}^b$ and visitation $n_{g_t}\leftarrow n_{g_t}+b$.
    \STATE Update $\widehat{\mu}_{g_t}$ and $\widehat{\Sigma}_{g_t}$.
    \STATE Compute the optimistic FD score
    \begin{equation*}
    \begin{aligned}
        \widehat{\textup{FD}}_{g_t}\leftarrow&-\gB_{g_t}^{n_{g_t}}+\|\widehat{\mu}_{g_t}-\mu_\textup{r}\|_2^2\\
        &+\textup{Tr}\left[\widehat{\Sigma}_{g_t}+\Sigma_\textup{r}-2(\widehat{\Sigma}_{g_t}\Sigma_\textup{r})^\frac{1}{2}\right],
    \end{aligned}
    \end{equation*}
    where $\gB^n_{gt}$ is given by Equation~(\ref{fid-b}).
\ENDFOR
\end{algorithmic}
\end{algorithm}
Based on Theorem~\ref{thm_ofid}, we propose FD-UCB in Algorithm~\ref{alg-fid-ucb} as an FD-based online evaluation algorithm. At the beginning, FD-UCB samples $N$ generated data from each model, after which the algorithm can compute an optimist FD score~(\ref{ofid}) for each model~(lines \num{2}-\num{5}). The evaluation proceeds iteratively, where at each iteration $t\in[T]$, the evaluator picks model $g_t$ with the lowest estimated FD and queries a batch of images from the model~(lines \num{7}-\num{8}). Then, the estimated FD of generator $g_t$ is updated~(line \num{11}). It can be shown that FD-UCB attains sub-linear regret bound, which is formalized in Theorem~\ref{thm_fid_reg} in Appendix~\ref{p_thm_fid_reg}.

\section{\MakeUppercase{INCEPTION SCORE-BASED ONLINE EVALUATION AND SELECTION}}
\label{sec:6}
In this section, we focus on evaluating generative models by Inception score (IS), which is given by $\textup{IS}=\exp[I(X_g;Y_g)]$, where $X_g\sim p_g$ is the generated image, and $Y_g$ is the class assigned by the InceptionNet.V3. Given $n\in\sN_+$ generated images $x^1,\cdots,x^n\sim p_g$ queried from a generator $g$, we denote by 
\begin{gather}
    \widehat{H}^n(Y_g)=H\left(\frac{1}{n}\sum_{i=1}^n p_{Y|x^i}\right),\\
    \widehat{H}^n(Y_g|X_g)=\frac{1}{n}\sum_{i=1}^n H(Y|x^i)\label{eq_11},
\end{gather}
the empirical entropy of the marginal $d$-class distribution $p_{Y_g}$ and the conditional $d$-class distribution $p_{Y|X_g}$, respectively. Then, the empirical IS is computed by
\begin{equation}
\label{emp_is}
    \widetilde{\textup{IS}}_g^n=\exp\left\{\widehat{H}^n(Y_g)-\widehat{H}^n(Y_g|X_g)\right\}.
\end{equation}
For any $j\in[d]$, let $\widehat{V}^n(p_{Y|X_g}[j]):=\sV(p_{Y|x^1}[j],\cdots,p_{Y|x^n}[j])$ denote the sample variance for the probability that the generated sample is assigned to the $j$-th class. To derive an optimistic IS, we first define the \emph{optimistic marginal class distribution} denoted by
\begin{equation}
\label{eq_15}
    \widehat{p}_{Y_g}^n:=\textup{Clip}_{e^{-1}}(\widetilde{p}^n_{Y_g},\bm{\epsilon}^n_g)
\end{equation}
where $\bm{\epsilon}^n_g\in\sR_+^d$ is a $d$-dimensional vector whose $j$-th element is given by
\begin{equation}
\begin{aligned}
\label{eq_22}
    \bm{\epsilon}^n_g[j]:=&\sqrt{\frac{2\widehat{V}^n(p_{Y|X_g}[j])}{n}\log\left(\frac{4d}{\delta}\right)}\\
    &+\frac{7}{3(n-1)}\log\left(\frac{4d}{\delta}\right),
\end{aligned}
\end{equation}
and $\textup{Clip}_{e^{-1}}(p,\bm{\epsilon})$ is the following element-wise operator
\begin{equation}
\begin{cases}
    p[j]+\frac{e^{-1}-p[j]}{|e^{-1}-p[j]|}\bm{\epsilon}[j] & \textup{, if }|e^{-1}-p[j]|\ge \bm{\epsilon}[j]\\
    e^{-1} & \textup{, otherwise.}
\end{cases}
\end{equation}

Particularly, the optimistic marginal class distribution ensures that $\gE(\widehat{p}_{Y_g}^n)\ge H(Y_g)$ with high probability~(see Lemma~\ref{thm_omcd} in the Appendix), where we define
$$
\gE(z):=-\sum_{j}z[j]\cdot\log(z[j])
$$
for any $d$-dimensional vector $z\succ\bm{0}$. Next, we derive a generator-dependent optimistic IS in the following theorem. We defer the theorem's proof to Appendix~\ref{p_thm_ois}.

\begin{theorem}[Optimistic IS]
\label{thm_ois}
Let
\begin{equation}
\label{eq_29}
\begin{aligned}
    \widehat{\textup{IS}}^n_g:=\exp\Bigg\{&\gE\left(\widehat{p}_{Y_g}^n\right)-\widehat{H}^n(Y_g|X_g)\\
    &+\sqrt{\frac{2\widehat{V}^n(H(Y_g|X_g))}{n}\log\left(\frac{4d}{\delta}\right)}\\
    &+\frac{7\log d}{3(n-1)}\log\left(\frac{4d}{\delta}\right)\Bigg\},
\end{aligned}
\end{equation}
where $\widehat{H}^n(Y_g|X_g)$ is defined in Equation~(\ref{eq_11}), $\widehat{p}_{Y_g}^n$ the optimistic marginal class distribution~(\ref{eq_15}), and $\widehat{V}^n(H(Y_g|X_g))=\sV(H(Y|x^1),\cdots,H(Y|x^n))$ is the empirical variance for the entropy of the conditional class distribution. Then, with probability at least $1-\delta$, we have that $\widehat{\textup{IS}}^n_g\ge\textup{IS}_g$.
\end{theorem} 

\begin{figure*}[!ht]
\centering
    \subfigure{\includegraphics[width=0.38\textwidth]{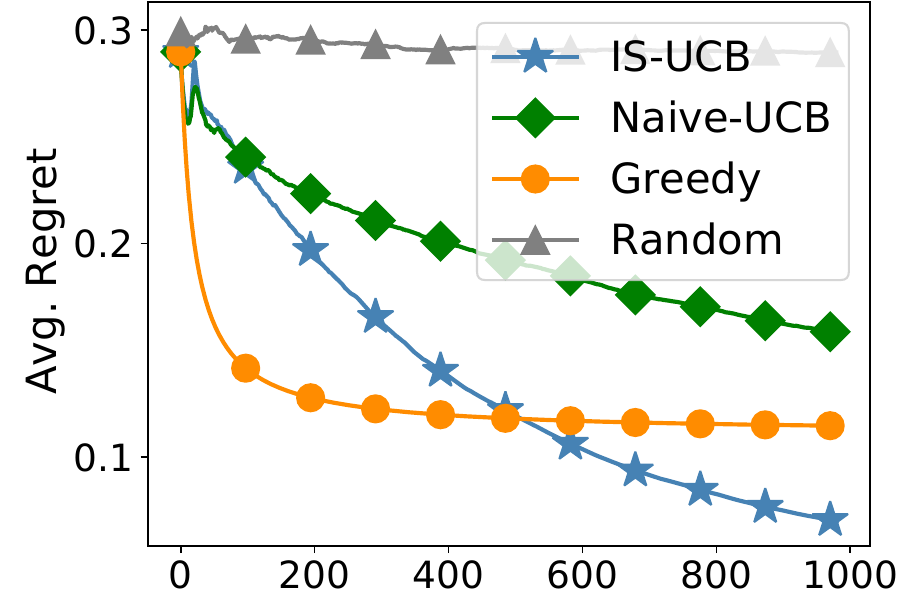}}
    \qquad
    \subfigure{\includegraphics[width=0.38\textwidth]{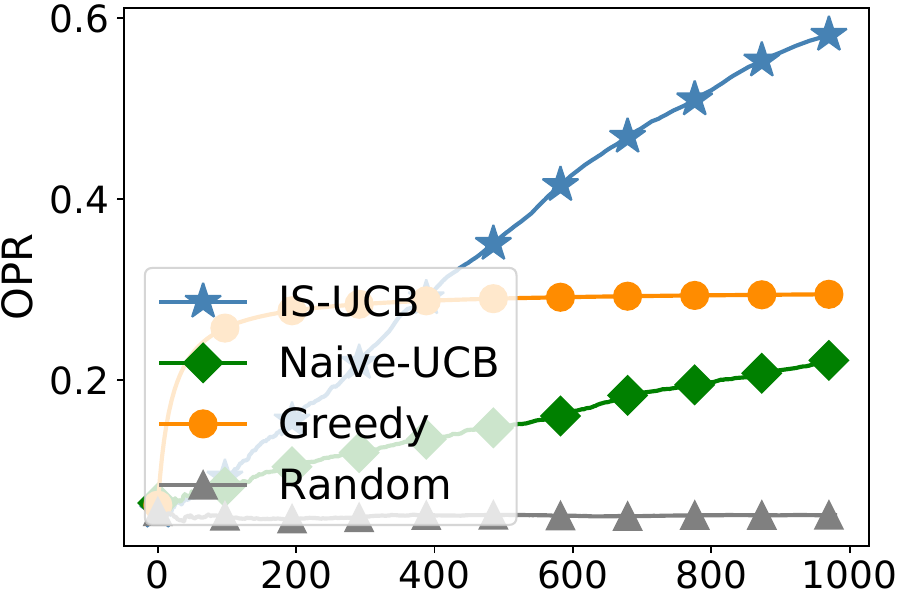}} \\
    \makebox[0pt][r]{\makebox[30pt]{\raisebox{50pt}{\rotatebox[origin=c]{90}{\textbf{Sampled Data}}}}}
    \stackunder{\subfigure{\includegraphics[width=0.17\textwidth]{new_results/summary_1000_UCB_b_IS.pdf}}}{\textbf{IS-UCB}}
    \qquad
    \stackunder{\subfigure{\includegraphics[width=0.17\textwidth]{new_results/summary_1000_UCB_h_IS.pdf}}}{\textbf{Naive-UCB}}
    \qquad
    \stackunder{\subfigure{\includegraphics[width=0.17\textwidth]{new_results/summary_1000_naive_none_IS.pdf}}}{\textbf{Greedy}}
\caption{Online IS-based evaluation and selection among variance-controlled FFHQ models: IS-UCB can identify models that generate images with more diversity. Results are averaged over 20 trials.}
\label{is-results-vc}
\end{figure*}

\begin{algorithm}[!ht]
\begin{algorithmic}[1]
\caption{IS-UCB}
\label{alg-is-ucb}
\REQUIRE steps $T$, set of generative models $\gG\leftarrow[G]$, batch size $b\in\sN_+$, failure probability $\delta\leftarrow\delta/T$.
\STATE \textbf{Initialize:} $\widehat{\textup{IS}}_g\leftarrow\infty,\widehat{H}(Y_g|X_g)\leftarrow 0,\widetilde{p}_{Y_g}\leftarrow\bm{0}^d,n_g\leftarrow0$ for each $g\in[G]$.
\FOR{step $t=1,2,\cdots,T$}
    \STATE Pick model $g_t\leftarrow\argmax_{g\in\gG}\widehat{\textup{IS}}_g$.
    \STATE Model $g_t$ generates batch samples $\{x_t^j\sim p_{g_t}\}_{j=1}^b$.
    \STATE Update
    \begin{gather*}
        \widetilde{p}_{Y_{g_t}}\leftarrow\frac{N_g\cdot\widetilde{p}_{Y_g}+\sum_{j=1}^b p_{Y|x_t^i}}{n_{g_t}+b},\\
        \widehat{H}(Y_{g_t}|X_{g_t})\leftarrow \frac{N_g\cdot\widehat{H}(Y_{g_t}|X_{g_t})+\sum_{j=1}^bH(Y|x_t^i)}{n_{g_t}+b}.
    \end{gather*}
    \STATE Update visitation $N_{g_t}\leftarrow N_{g_t}+b$.
    \STATE Update the empirical variances $\widehat{V}(H(Y_{g_t}|X_{g_t}))$ and $\widehat{V}(p_{Y|X_{g_t}}[j])$ for any $j\in[d]$.
    \STATE Compute the estimated IS according to Equation~(\ref{eq_29}).
\ENDFOR
\end{algorithmic}
\end{algorithm}

Based on Theorem~\ref{thm_ois}, we propose IS-UCB in Algorithm~\ref{alg-is-ucb}, an optimism-based online IS evaluation algorithm. Due to limited space in the main text, we derive the regret bound of the IS-UCB algorithm in Appendix~\ref{p_thm_is_reg}, which shows that IS-UCB attains $\widetilde{O}(\sqrt{T})$ regret.

\section{\MakeUppercase{EXPERIMENTAL RESULTS}}
\label{sec:7}
In this section, we present numerical results for the FD-UCB and IS-UCB algorithms. Our extensive experiments across standard image datasets, generative modeling frameworks, and image data embeddings demonstrate the effectiveness of the proposed online learning-based approaches to model evaluation and selection. Experimental details and additional results can be found in Appendix~\ref{sec:aed}.

\textbf{Baselines.}\quad For both FD-based and IS-based evaluation, we compare the performance of our proposed \textcolor{mplblue}{FD-UCB} and \textcolor{mplblue}{IS-UCB} with three baselines: \textcolor{darkorange}{Naive-UCB}, \textcolor{ForestGreen}{Greedy algorithm}, and \textcolor{darkgray}{Random algorithm}. Naive-UCB is a simplification of FD-UCB and IS-UCB which replaces the generator-dependent variables in the bonus function with data-independent and dimension-based terms, which does not exploit the generated data in evaluating the confidence bound. On the other hand, the Greedy algorithm always picks the generator with the lowest empirical FD~(\ref{emp_fid}) or the highest empirical IS~(\ref{emp_is}), and the Random algorithm selects the model randomly. For a fair comparison, we set the burn-in samples of FD-UCB to be zero. Additionally, for all these algorithms, each generator will be explored once at the beginning. 

\noindent\textbf{Experimental setup and performance metrics.}\quad We report two performance metrics at each step $t\in[T]$: 1) \emph{average regret}~(Avg. Regret), i.e., $(1/t)\cdot\textup{Regret}(t)$, and 2) \emph{optimal pick ratio}~(OPR), i.e., the ratio $(1/t)\cdot\sum_{i=1}^t\bm{1}(g_t=g^*)$ of picking the optimal model, which has the lowest empirical FD score or the highest empirical IS score for 50k generated images. For all the experiments, we use a batch size of 5, and the total evaluation step is $T=1,000$. Hence the total generated samples for each trial is $5$k. Results are averaged over 20 trials.

\noindent\textbf{Datasets and generators.}\quad We evaluate the above algorithms on standard image datasets, including CIFAR10~\citep{krizhevsky2009learning}, ImageNet~\citep{imagenet_cvpr09}, FFHQ~\citep{6909637}, and AFHQ~\citep{Choi_2020_CVPR}. For the first three datasets, we consider evaluation and selection among standard generative models, such as BigGAN-Deep~\citep{brock2018large}, iDDPM-DDIM~\citep{pmlr-v139-nichol21a}, and Efficient-vdVAE~\citep{hazami2022efficientvdvae}. Details of the chosen models can be found in Appendix~\ref{sec:aed}. Additionally, we consider variance-controlled generators for the FFHQ and AFHQ datasets, where we apply the standard truncation technique~\citep{NEURIPS2019_0234c510} to the pretrained StyleGAN2-ADA model~\citep{NEURIPS2020_8d30aa96}\footnote{The repository can be found at \url{github.com/NVlabs/stylegan2-ada-pytorch}} and synthesize $G=20$ models. Each model generates images from truncated random noises with parameters vary from \num{0.01} to \num{0.1}. A small~(large) truncation parameter can lead to generated samples with low~(high) diversity but high~(low) quality.

\subsection{Results of Online FD-based Evaluation and Selection}

\textbf{FID-based evaluation and selection.}\quad We first report results for the setup in Figure~\ref{fig1}, where we select among three standard generative models on the CIFAR10 dataset, including LOGAN~\citep{wu2020loganlatentoptimisationgenerative}, RESFLOW~\citep{NEURIPS2019_5d0d5594}, and iDDPM-DDiM~\citep{pmlr-v139-nichol21a}. Our proposed FD-UCB can quickly identify the best model with fewer queries to the suboptimal models~(Figure~\ref{fd-results-fig1}).

\begin{figure}[!ht]
\centering
    \subfigure{\includegraphics[width=0.35\textwidth]{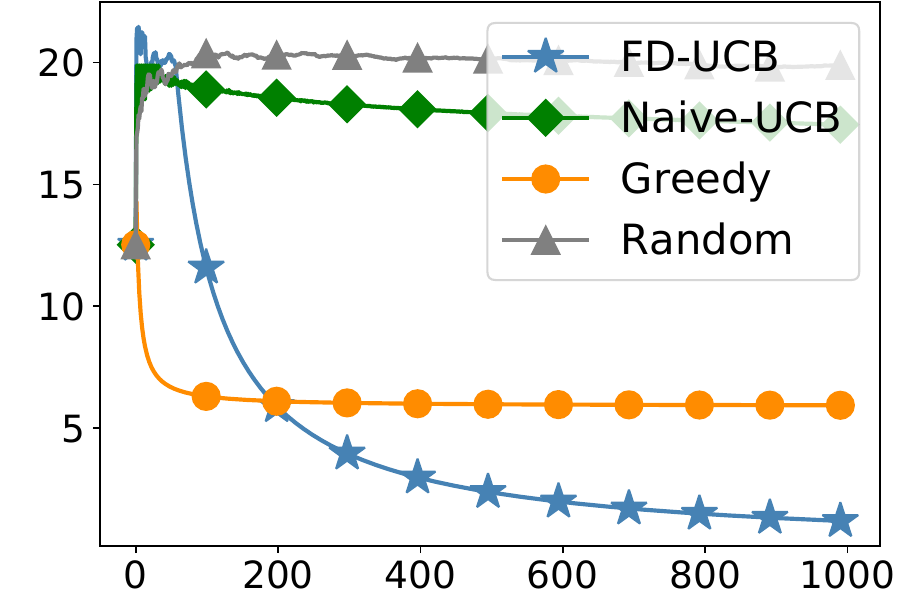}}
    \subfigure{\includegraphics[width=0.35\textwidth]{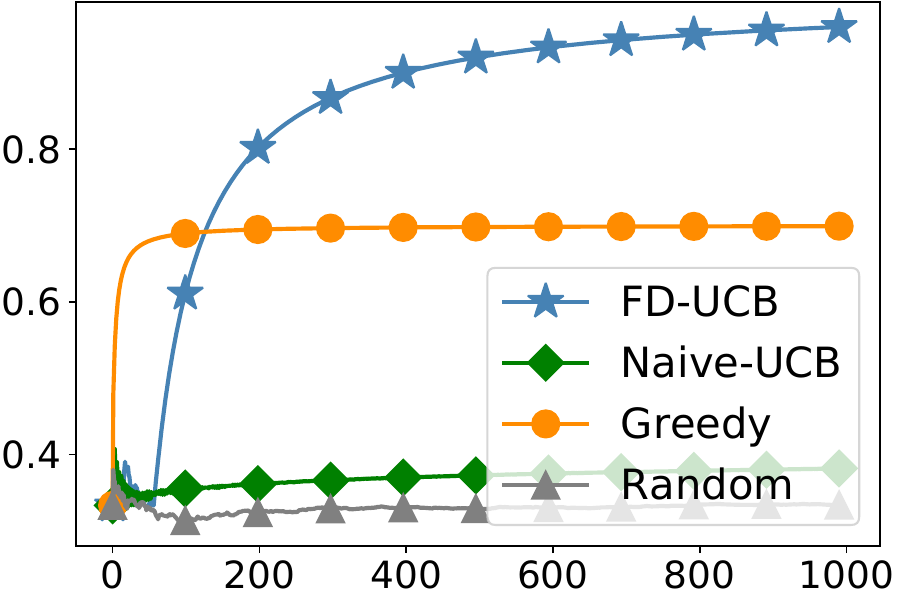}}
\caption{Online FD-based evaluation and selection among three CIFAR10 models, including LOGAN, RESFLOW, and iDDPM-DDiM~(Figure~\ref{fig1}). The image data embeddings are extracted by InceptionV3.Net. Results are averaged over 20 trials.}
\label{fd-results-fig1}
\end{figure}

\textbf{Performance on different embeddings.}\quad The results for CLIP embeddings are summarized in Figure~\ref{fd-results}, where each column and each row correspond to one dataset and one performance metric, respectively. The results show that FD-UCB outperforms Naive-UCB and the Greedy algorithm. We observe that naive-UCB converges to the best model much slower than FD-UCB, which suggests that the generator-dependent and data-driven bonus function can better exploit the properties of the generator, which is key to the sample-efficient online evaluation. We also test the InceptionNet.V3 and DINOv2 embeddings, and the results are summarized in Figures~\ref{fd-inception-results} and~\ref{fd-dinov2-results} in the Appendix. FD-UCB can maintain satisfactory performance over different embeddings.

\noindent\textbf{Performance on variance-controlled generators.}\quad We summarize the results on the FFHQ and AFHQ-Dog datasets in Figures~\ref{fd-results-vc-inception},~\ref{fd-results-vc-dinov2}, and~\ref{fd-results-vc-clip} in the Appendix. FD-UCB consistently outperformed the baselines.

\noindent\textbf{Performance on video and audio data.}\quad We consider the \textit{Fréchet Video Distance}~\cite[FVD]{unterthiner2019fvd} and \textit{Fréchet Audio Distance}~\cite[FAD]{kilgour2019frechetaudiodistancemetric} metrics for video and audio generation, respectively. We synthesize four arms~(generators) utilizing the MSR-VTT~\citep{7780940} and Magnatagatune~\citep{Law09} datasets. When an arm is selected, a video/audio clip is sampled from the dataset and perturbed by Gaussian noises with an arm-specific probability. Results on the audio data are summarized in Figure~\ref{fd-results-audio}. Additional results can be found in Figure~\ref{fd-results-sync} in the Appendix.

\begin{figure}[!t]
\centering
    \subfigure{\includegraphics[width=0.35\textwidth]{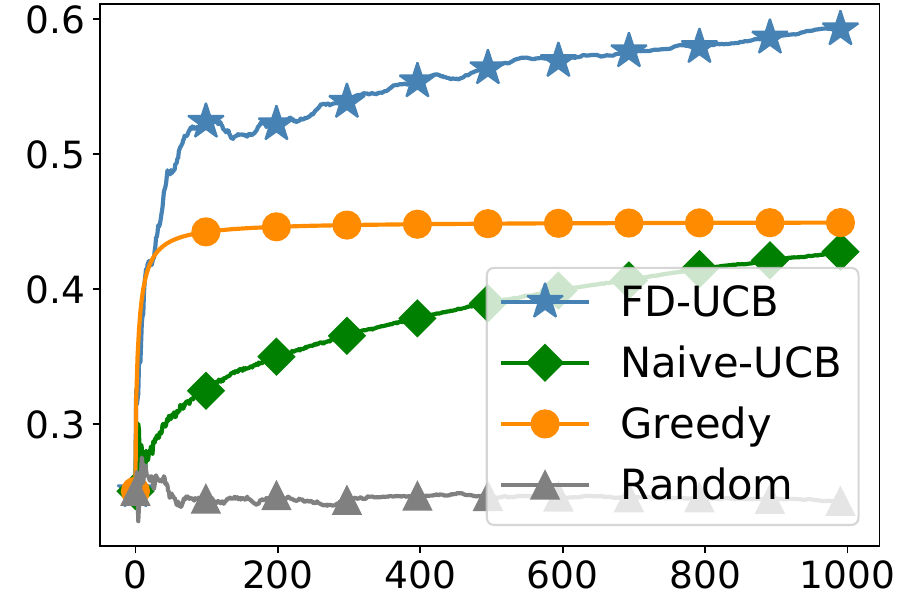}}
\caption{Online FAD-based evaluation and selection among four synthetic audio data. The embeddings are extracted by VGGish~\citep{hershey2017cnnarchitectureslargescaleaudio}. Results for the OPR metric are reported and averaged over 20 trials.}
\label{fd-results-audio}
\end{figure}

\subsection{Results of Online IS-Based Evaluation and Selection}

\textbf{Performance on the pretrained generators.}\quad The results are summarized in Figure~\ref{is-results}. IS-UCB~(blue) attains significantly better performance than naive-UCB~(orange) and the greedy algorithm~(green) on CIFAR10 and FFHQ datasets.

\noindent\textbf{Performance on variance-controlled generators.}\quad The results are summarized in Figure~\ref{is-results-vc}. The results suggest that FD-UCB consistently queries images that are diverse and of high quality. In contrast, Naive-UCB and the Greedy algorithm kept querying a proportion of images from the collapsed generators with smaller truncation parameters. The AFHQ-Dog dataset results are in the Appendix, Figure~\ref{is-results-afhqdog-vc}.

\section{\MakeUppercase{CONCLUSION}}
\label{sec:8}
In this work, we studied an online learning problem aiming to identify the generative model with the best evaluation score among a set of models. We proposed the multi-armed bandit-based FD-UCB and IS-UCB algorithms to address the online learning task and showed satisfactory numerical results for their application to image-based generative models. An interesting future direction to our work is to explore the application of other online learning frameworks, such as Thompson sampling, to address the online evaluation problem. Also, while we mostly focused on standard image datasets for testing the proposed algorithms, the application of the proposed FD-UCB and IS-UCB to real text-based and video-based generative models will be interesting for future studies. Such an application only requires applying text and video-based embeddings, as in Video IS~\citep{Saito_2020}, FVD~\citep{unterthiner2019fvd}, and FBD~\citep{xiang2021assessing}. Also, the extension of our MAB framework to a contextual bandit algorithm for prompt-guided generative models, as studied by \cite{hu2024online}, will remain for future studies.

\subsection*{Acknowledgments}
The work of Farzan Farnia is partially supported by a grant from the Research Grants Council of the Hong Kong Special Administrative Region, China, Project 14209920, and is partially supported by CUHK Direct Research Grants with CUHK Project No. 4055164 and 4937054. 
The authors would also like to thank the anonymous reviewers for their constructive feedback and suggestions.

\bibliographystyle{apalike}
\bibliography{ref}

\section*{Checklist}

 \begin{enumerate}

 \item For all models and algorithms presented, check if you include:
 \begin{enumerate}
   \item A clear description of the mathematical setting, assumptions, algorithm, and/or model. [Yes]
   \item An analysis of the properties and complexity (time, space, sample size) of any algorithm. [Yes]
   \item (Optional) Anonymized source code, with specification of all dependencies, including external libraries. [Yes]
 \end{enumerate}

 \item For any theoretical claim, check if you include:
 \begin{enumerate}
   \item Statements of the full set of assumptions of all theoretical results. [Yes]
   \item Complete proofs of all theoretical results. [Yes]
   \item Clear explanations of any assumptions. [Yes]     
 \end{enumerate}

 \item For all figures and tables that present empirical results, check if you include:
 \begin{enumerate}
   \item The code, data, and instructions needed to reproduce the main experimental results (either in the supplemental material or as a URL). [Yes]
   \item All the training details (e.g., data splits, hyperparameters, how they were chosen). [Yes]
         \item A clear definition of the specific measure or statistics and error bars (e.g., with respect to the random seed after running experiments multiple times). [Yes]
         \item A description of the computing infrastructure used. (e.g., type of GPUs, internal cluster, or cloud provider). [Yes]
 \end{enumerate}

 \item If you are using existing assets (e.g., code, data, models) or curating/releasing new assets, check if you include:
 \begin{enumerate}
   \item Citations of the creator If your work uses existing assets. [Yes]
   \item The license information of the assets, if applicable. [Yes]
   \item New assets either in the supplemental material or as a URL, if applicable. [Yes]
   \item Information about consent from data providers/curators. [Yes]
   \item Discussion of sensible content if applicable, e.g., personally identifiable information or offensive content. [Not Applicable]
 \end{enumerate}

 \item If you used crowdsourcing or conducted research with human subjects, check if you include:
 \begin{enumerate}
   \item The full text of instructions given to participants and screenshots. [Not Applicable]
   \item Descriptions of potential participant risks, with links to Institutional Review Board (IRB) approvals if applicable. [Not Applicable]
   \item The estimated hourly wage paid to participants and the total amount spent on participant compensation. [Not Applicable]
 \end{enumerate}

 \end{enumerate}

\newpage
\appendix
\onecolumn
\allowdisplaybreaks

\section{\MakeUppercase{Proofs in Section~\ref{sec:5}: FD-Based Evaluation}}

\subsection{Proof of Theorem \ref{thm_ofid}: Optimistic FD Score}
\label{p_thm_ofid}

\begin{proof}
The proof of Theorem~\ref{thm_ofid} is based on Theorem~\ref{thm_cef} and Lemma~\ref{thm_cemc} in the Appendix. Specifically, Inequality~(\ref{eq_39}) in Theorem~\ref{thm_cef} in Appendix~\ref{p_thm_cef} decomposes the estimation error of the empirical FD score~(\ref{emp_fid}) into the errors in estimating the mean~(i.e., $\|\widehat{\mu}_g^n-\mu_g\|_2$) and the covariance matrix~(i.e., $\|\widehat{\Sigma}_g^n-\Sigma_g\|_2$): With probability at least $1-\frac{\delta}{3}$, it holds that
\begin{equation*}
\begin{aligned}
    \left|\widetilde{\textup{FD}}_g^n-\textup{FD}_g\right|\le&2\|\mu_g-\hatmu_g^n\|_2\cdot\left(\|\mu_g-\hatmu_g^n\|_2+\|\widehat{\mu}_g^n-\mur\|_2\right)+\tr[\sigr^\frac{1}{2}]\sqrt{8\|\Sigma_g-\hatsig_g^n\|_2}\\
    &+\tr[\Sigma_g]\sqrt{\frac{8}{n}\log\left(\frac{6}{\delta}\right)}+\frac{8\|\Sigma_g\|_2}{n}\log\left(\frac{6}{\delta}\right).
\end{aligned}
\end{equation*}
Next, Lemma~\ref{thm_cemc} in Appendix~\ref{p_thm_cemc} derives generator-dependent concentration errors for the mean and the covariance matrix: With probability at least $1-\frac{2\delta}{3}$, it holds that
\begin{gather*}
    \|\hatmu_g^n-\mu_g\|_2\le \sqrt{\frac{1}{n}\left(\sqrt{8\cdot\tr[\Sigma_g^2]\log\left(\frac{6}{\delta}\right)}+8\|\Sigma_g\|_2\log\left(\frac{6}{\delta}\right)\right)}=:\Delta_{\mu_g}^n \\
    \left\|\Sigma_g-\widehat{\Sigma}_g^n\right\|_2\le20\kappa^2\|\Sigma_g\|_2\sqrt{\frac{4\bm{r}(\Sigma_g)+\log(3/\delta)}{n}}+(\Delta_{\mu_g}^n)^2=:\Delta_{\Sigma_g}^n
\end{gather*}
Therefore, let
\begin{equation*}
    \gB_g^n = 2 \Delta_{\mu_g}^n \cdot\left(\Delta_{\mu_g}^n+\|\widehat{\mu}_g^n-\mur\|_2\right)+\tr[\sigr^\frac{1}{2}]\sqrt{8 \Delta_{\Sigma_g}^n}+\tr[\Sigma_g]\sqrt{\frac{8}{n}\log\left(\frac{6}{\delta}\right)}+\frac{8\|\Sigma_g\|_2}{n}\log\left(\frac{6}{\delta}\right),
\end{equation*}
and we have that $\widetilde{\textup{FD}}_g^n-\gB_g^n \le \textup{FD}_g$ holds with probability at least $1-\delta$, which concludes the proof.
\end{proof}

\subsection{Regret of FD-UCB}
\label{p_thm_fid_reg}

\begin{theorem}[Regret of FD-UCB]
\label{thm_fid_reg}
Under the same conditions in Theorem~\ref{thm_ofid}, with probability at least $1-\delta$, the regret of the FD-UCB algorithm with a batch size $b\in\sN_+$ is bounded by
\begin{equation}
\label{eq_fid_reg}
    \widetilde{O}\left( \frac{D_1^\gG\cdot G^{1/4}}{b^{1/4}} T^{3/4} + \frac{D_2^\gG\cdot G^{1/2}}{b^{1/2}} T^{1/2} + \frac{D_3^\gG\cdot G}{b} \log T + N\cdot G\right),
\end{equation}
after running on $G:=|\gG|$ models for $T\in\sN_+$ steps, where $D_1^\gG$ and $D_2^\gG$ are model-dependent parameters defined in Equations~(\ref{eq16})-(\ref{eq18}), and the logarithmic factors are hidden in the notation $\widetilde{O}(\cdot)$.
\end{theorem}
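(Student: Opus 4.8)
The plan is to adapt the standard optimism-in-the-face-of-uncertainty (UCB) regret analysis to the minimization setting, where the \emph{optimistic} estimate of a model is its lower confidence bound $\widetilde{\textup{FD}}_g^{n}-\gB_g^{n}$ supplied by Theorem~\ref{thm_ofid}. First I would fix the high-probability clean event $\mathcal E$ on which the two-sided concentration $|\widetilde{\textup{FD}}_g^{n}-\textup{FD}_g|\le\gB_g^{n}$ holds simultaneously for every model $g\in\gG$ and every attainable sample count $n$; since the per-instance guarantee of Theorem~\ref{thm_ofid} holds with probability $1-\delta$, a union bound over the $G$ models and the $T$ rounds (rescaling $\delta\mapsto\delta/(GT)$) keeps $\Pr[\mathcal E]\ge 1-\delta$ at the cost of a $\log(GT/\delta)$ factor absorbed into $\widetilde O(\cdot)$. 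Writing $g^\star=\arg\min_{g}\textup{FD}_g$ for the best model and $n_g(t)$ for its sample count entering round $t$, the selection rule $g_t\in\arg\min_g(\widetilde{\textup{FD}}_g^{n_g(t)}-\gB_g^{n_g(t)})$ gives, on $\mathcal E$, the chain $\textup{FD}_{g_t}-2\gB_{g_t}^{n_{g_t}(t)}\le\widetilde{\textup{FD}}_{g_t}^{n_{g_t}(t)}-\gB_{g_t}^{n_{g_t}(t)}\le\widetilde{\textup{FD}}_{g^\star}^{n_{g^\star}(t)}-\gB_{g^\star}^{n_{g^\star}(t)}\le\textup{FD}_{g^\star}$, so the per-round gap is controlled by twice the current confidence radius, $\textup{FD}_{g_t}-\textup{FD}_{g^\star}\le 2\gB_{g_t}^{n_{g_t}(t)}$.

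Next I would split the regret $R_T=\sum_{t=1}^T(\textup{FD}_{g_t}-\textup{FD}_{g^\star})$ into a warm-up phase and a UCB phase. The concentration of Lemma~\ref{thm_cemc} is only informative once each model has accumulated enough samples (roughly $N$ per model) for the covariance bound $\Delta_{\Sigma_g}^n$ to be meaningful; during this initialization the gap is crudely bounded by the constant maximal FD gap, contributing the additive $\widetilde O(N\cdot G)$ term. For the remaining rounds I would read off from the definition of $\gB_g^n$ that, up to the hidden logarithmic factors, it decomposes into three decay rates
\begin{equation*}
\gB_g^{n}\;\lesssim\;C_1^{g}\,n^{-1/4}+C_2^{g}\,n^{-1/2}+C_3^{g}\,n^{-1},
\end{equation*}
where the $n^{-1/4}$ term comes from $\tr[\sigr^{1/2}]\sqrt{8\Delta_{\Sigma_g}^n}$ (the covariance error entering FD through a square root), the $n^{-1/2}$ term collects $\tr[\Sigma_g]\sqrt{8n^{-1}\log(6/\delta)}$ together with $\Delta_{\mu_g}^n\|\widehat\mu_g^n-\mur\|_2$, and the $n^{-1}$ term collects $(\Delta_{\mu_g}^n)^2$ and $8\|\Sigma_g\|_2 n^{-1}\log(6/\delta)$.

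I would then sum the per-round gaps arm by arm. Because samples arrive in batches of size $b$, the $j$-th pull of model $g$ uses radius $\gB_g^{jb}$, so with $m_g$ the number of times $g$ is pulled,
\begin{equation*}
\sum_{j=1}^{m_g}2\gB_g^{jb}\;\lesssim\;C_1^{g}\,b^{-1/4}m_g^{3/4}+C_2^{g}\,b^{-1/2}m_g^{1/2}+C_3^{g}\,b^{-1}\log m_g,
\end{equation*}
using $\sum_{j\le m}(jb)^{-\alpha}\lesssim b^{-\alpha}m^{1-\alpha}$ for $\alpha\in\{1/4,1/2\}$ and $\sum_{j\le m}(jb)^{-1}\lesssim b^{-1}\log m$. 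Summing over the $G$ models and applying Hölder (equivalently Jensen for the concave maps $x\mapsto x^{3/4}$ and $x\mapsto x^{1/2}$) under the budget constraint $\sum_g m_g\le T$ yields $\sum_g m_g^{3/4}\le G^{1/4}T^{3/4}$, $\sum_g m_g^{1/2}\le G^{1/2}T^{1/2}$, and $\sum_g\log m_g\le G\log T$. Collecting the per-model constants into the aggregate model-dependent parameters $D_1^\gG,D_2^\gG,D_3^\gG$ of Equations~(\ref{eq16})--(\ref{eq18}) and adding the warm-up term reproduces~(\ref{eq_fid_reg}). The main obstacle I anticipate is the mixed-rate confidence radius: the dominant $n^{-1/4}$ scaling (rather than the usual $n^{-1/2}$) is what forces the leading $T^{3/4}$ regret, and the bookkeeping must keep the three rates --- and their interaction with both the batch factor $b$ and the cross-arm Hölder aggregation --- cleanly separated, all while verifying that the warm-up indeed guarantees the validity of Lemma~\ref{thm_cemc} throughout the UCB phase.
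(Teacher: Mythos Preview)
Your proposal is correct and follows essentially the same approach as the paper's proof: both establish the clean event via a union bound over rounds (and models), reduce the instantaneous regret to $2\gB_{g_t}^{n_{g_t}(t)}$ through the standard optimism chain, decompose the bonus into $n^{-1/4}$, $n^{-1/2}$, and $n^{-1}$ rates, and then aggregate across arms using $\sum_g m_g^{1-\alpha}\le G^{\alpha}T^{1-\alpha}$ under the budget $\sum_g m_g\le T$. The only cosmetic difference is that the paper writes the last step directly as $\sum_{t}n_t^{-\alpha}\le \tfrac{2^\alpha}{b^\alpha(1-\alpha)}G^\alpha T^{1-\alpha}$ rather than separating the per-arm sum and the H\"older step, and it folds the $\tr[\sigr^{1/2}]\cdot\Delta_{\mu_g}^n$ contribution from $\sqrt{\Delta_{\Sigma_g}^n}$ into the $n^{-1/2}$ coefficient $D_2^\gG$, a bookkeeping point you already anticipated.
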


\begin{proof}
Recall that $g_t$ denotes the generator picked at the $t$-th step. For convenience, we denote by $\widehat{\textup{FD}}_{g_t}$ the optimistic FD of generator $g_t$ computed at step $t$. We denote by $n_t$ the \emph{number of images} generated by model $g_t$ at the beginning of the $t$-th step. First, by Theorem~\ref{thm_cemc} and a union bound over $T$ steps, with probability at least $1-\delta$, we have that $\textup{FD}_{g_t}\ge \widehat{\textup{FD}}_{g_t}$ for any step $t\in[T]$ and $g\in[G]$. Hence, we have that
\begin{equation*}
    \textup{Regret}(T)=O(N\cdot G) + \sum_{t=1}^T\left(\textup{FD}_{g_t}-\textup{FD}^*\right)\le O(N\cdot G) + \sum_{t=1}^T\left(\textup{FD}_{g_t}-\widehat{\textup{FD}}_{g_t}\right),
\end{equation*}
where $\textup{FD}^*=\min_{g} \textup{FD}_g$, and the term $O(N\cdot G)$ corresponds to the regret incurred in the burn-in sampling phase. Further, by the definition of $\widehat{\textup{FD}}_{g_t}$, we further derive that
\begin{align*}
    \textup{Regret}(T) \le & O(N\cdot G) + \sum_{t=1}^T\left(\textup{FD}_{g_t}-\widetilde{\textup{FD}}_{g_t}^{n_t}+\gB^{n_t}_{g_t}\right)\\
    \le& O(N\cdot G) + 2\sum_{t=1}^T\gB^{n_t}_{g_t}\\
    \le& \widetilde{O}\left(N\cdot G+\sum_{t=1}^T\left(\Delta_{\mu_{g_t}}^{n_t}\cdot\left(\Delta_{\mu_{g_t}}^{n_t}+\|\mu_{g_t}-\mur\|_2\right)+\tr[\sigr^\frac{1}{2}]\sqrt{\Delta_{\Sigma_{g_t}}^{n_t}}+\tr[\Sigma_{g_t}]\sqrt{\frac{1}{n_t}}+\frac{\|\Sigma_{g_t}\|_2}{n_t}\right)\right),
\end{align*}
where we utilize $\|\widehat{\mu}_{g_t}^{n_t}-\mur\|_2 \le \Delta_{\mu_{g_t}}^{n_t} + \|\mu_{g_t}-\mur\|_2$ in the last inequality. Note that
\begin{gather*}
    \Delta_{\mu_{g_t}}^{n_t} = \sqrt{\frac{1}{n_t}\left(\sqrt{8\cdot\tr[\Sigma_{g_t}^2]\log\left(\frac{6T}{\delta}\right)}+8\|\Sigma_{g_t}\|_2\log\left(\frac{6T}{\delta}\right)\right)} = \widetilde{O}\left( n_t^{-1/2}\sqrt{\sqrt{\tr[\Sigma_{g_t}^2]}+\|\Sigma_{g_t}\|_2}\right)
\end{gather*}
and
\begin{align*}
    \Delta_{\Sigma_{g_t}}^{n_t} = & 20\kappa^2\|\Sigma_{g_t}\|_2\sqrt{\frac{4\bm{r}(\Sigma_{g_t})+\log(3T/\delta)}{n_t}}+(\Delta_{\mu_{g_t}}^{n_t})^2 \\
    = & \widetilde{O}\left( n_t^{-1/2}\sqrt{\|\Sigma_{g_t}\|_2^2 (\bm{r}(\Sigma_{g_t})+1)}+n_t^{-1} \left(\sqrt{\tr[\Sigma_{g_t}^2]}+\|\Sigma_{g_t}\|_2\right) \right). 
\end{align*}
Hence, the regret is further bounded by
\begin{equation*}
\begin{aligned}
    \widetilde{O}\Bigg(N\cdot G+&\sum_{t=1}^T n_t^{-1/4}\cdot\tr[\sigr^\frac{1}{2}]\left(\|\Sigma_{g_t}\|_2^2 (\bm{r}(\Sigma_{g_t})+1)\right)^{1/4} \\
    & + n_t^{-1/2}\cdot\left((\|\mu_{g_t}-\mur\|_2 + \tr[\sigr^\frac{1}{2}])\left(\sqrt{\tr[\Sigma_{g_t}^2]}+\|\Sigma_{g_t}\|_2\right)^{1/2} + \tr[\Sigma_{g_t}]\right) \\
    & + n_t^{-1} \cdot \left( \sqrt{\tr[\Sigma_{g_t}^2]}+\|\Sigma_{g_t}\|_2 \right) \Bigg).
\end{aligned}
\end{equation*}
Let
\begin{gather}
    D_1^\gG := \max_g \left\{\tr[\sigr^\frac{1}{2}]\left(\|\Sigma_{g}\|_2^2 (\bm{r}(\Sigma_{g})+1)\right)^{1/4}\right\}, \label{eq16}\\
    D_2^\gG := \max_g\left\{ (\|\mu_{g}-\mur\|_2 + \tr[\sigr^\frac{1}{2}])\left(\sqrt{\tr[\Sigma_{g}^2]}+\|\Sigma_{g}\|_2\right)^{1/2} + \tr[\Sigma_{g}] \right\}, \\
    D_3^\gG := \max_g \left\{ \sqrt{\tr[\Sigma_{g}^2]}+\|\Sigma_{g}\|_2 \right\}. \label{eq18}
\end{gather}
Then, we have that
\begin{equation*}
    \textup{Regret}(T) \le \widetilde{O}\left( \frac{D_1^\gG\cdot G^{1/4}}{b^{1/4}} T^{3/4} + \frac{D_2^\gG\cdot G^{1/2}}{b^{1/2}} T^{1/2} + \frac{D_3^\gG\cdot G}{b} \log T + N\cdot G \right)
\end{equation*}
where we use the fact that
\begin{gather*}
    \sum_{t=1}^T\frac{1}{n_t^\alpha}\le\frac{2^\alpha}{b^{\alpha}(1-\alpha)}\sum_{g=1}^G (N_g(T))^{1-\alpha}\le\frac{2^\alpha}{b^\alpha(1-\alpha)}G^\alpha T^{1-\alpha} \text{, where }0\le\alpha<1,\\
    \sum_{t=1}^T\frac{1}{n_t}\le\frac{2}{b}\sum_{g=1}^G\log(N_g(T))\le \frac{2G}{b}\log T,
\end{gather*}
where $b$ is the batch size, and $N_g(T)$ is the number of picks of model $g$ at the last step $T$. Therefore, we conclude the proof.
\end{proof}

\subsection{Concentration of Empirical FD~(\ref{emp_fid})}
\label{p_thm_cef}

The following theorem expresses the estimation error of the empirical FD by the concentration of the sample mean and covariance matrix, which facilitates the derivation of the bonus function.

\begin{theorem}[Concentration of empirical FD (\ref{emp_fid})]
\label{thm_cef}
Assume the covariance matrix $\Sigma_\textup{r}\succ\bm{0}$ is positive strictly definite. Then, with probability at least $1-\frac{\delta}{3}$, we have that
\begin{equation}
\label{eq_39}
\begin{aligned}
    \left|\widetilde{\textup{FD}}_g^n-\textup{FD}_g\right|\le&2\|\mu_g-\hatmu_g^n\|_2\cdot\left(\|\mu_g-\hatmu_g^n\|_2+\|\widehat{\mu}_g^n-\mur\|_2\right)+\tr[\sigr^\frac{1}{2}]\sqrt{8\|\Sigma_g-\hatsig_g^n\|_2}\\
    &+\tr[\Sigma_g]\sqrt{\frac{8}{n}\log\left(\frac{6}{\delta}\right)}+\frac{8\|\Sigma_g\|_2}{n}\log\left(\frac{6}{\delta}\right).
\end{aligned}
\end{equation}

\begin{proof}
For any generator $g$ and $n\in\sN_+$, we have that
\begin{gather*}
    \textup{FD}_g=\|\mu_g-\mu_\textup{r}\|_2^2+\textup{Tr}[\Sigma_g+\Sigma_\textup{r}-2(\Sigma_g\Sigma_\textup{r})^\frac{1}{2}],\\
    \widetilde{\textup{FD}}_g^n=\|\widehat{\mu}_g^n-\mu_\textup{r}\|_2^2+\textup{Tr}[\widehat{\Sigma}_g^n+\Sigma_\textup{r}-2(\widehat{\Sigma}_g^n\Sigma_\textup{r})^\frac{1}{2}].
\end{gather*}
\textbf{(1) Bound $\|\mu_g-\mur\|_2^2-\|\widehat{\mu}_g^n-\mur\|_2^2$. }\ We derive
\begin{equation}
\label{eq15}
\begin{aligned}
    & \left|\|\mu_g-\mu_\textup{r}\|_2^2-\|\widehat{\mu}_g^n-\mu_\textup{r}\|_2^2\right|\\
    = & (\|\mu_g-\mu_\textup{r}\|_2+\|\widehat{\mu}_g^n-\mu_\textup{r}\|_2)\cdot\left|\|\mu_g-\mu_\textup{r}\|_2-\|\widehat{\mu}_g^n-\mu_\textup{r}\|_2\right|\\
    \le & (\|\mu_g-\hatmu_g^n\|_2+2\|\widehat{\mu}_g^n-\mu_\text{r}\|_2)\cdot\|\widehat{\mu}_g^n-\mu_g\|_2.
\end{aligned}
\end{equation}
\noindent\textbf{(2) Bound $\textup{Tr}[(\Sigma_g\Sigma_\textup{r})^\frac{1}{2}-(\widehat{\Sigma}_g^n\Sigma_\textup{r})^\frac{1}{2}]$. }\ By Lemma~\ref{thm_oett}, if the covariance matrix $\Sigma_\textup{r}$ is positive strictly definite, then it holds that
\begin{equation}
\label{eq33}
    \left|\textup{Tr}[(\Sigma_g\Sigma_\textup{r})^\frac{1}{2}]-\textup{Tr}\left[(\widehat{\Sigma}_g^n\Sigma_\textup{r})^\frac{1}{2}\right]\right|\le\tr[\sigr^\frac{1}{2}]\sqrt{2\|\Sigma_g-\hatsig_g^n\|_2}.   
\end{equation}

\noindent\textbf{(3) Bound $\textup{Tr}[\Sigma_g-\widehat{\Sigma}_g^n]$. }\ Note that
\begin{equation}
\begin{aligned}
    \tr[\hatsig_g^n]=& \tr\left[\frac{1}{n}\sum_{i=1}^n(f(x^i)-\hatmu_g^n)(f(x^i)-\hatmu_g^n)^\top\right] \\
    = & \tr\left[\frac{1}{n}\sum_{i=1}^n(f(x^i)-\mu_g+\mu_g-\hatmu_g^n)(f(x^i)-\mu_g+\mu_g-\hatmu_g^n)^\top\right] \\
    = & \tr\left[\frac{1}{n}\sum_{i=1}^n (f(x^i)-\mu_g)(f(x^i)-\mu_g)^\top - (\mu_g-\hatmu_g^n)(\mu_g-\hatmu_g^n)^\top \right] \\
    = & \frac{1}{n}\sum_{i=1}^n \|f(x^i)-\mu_g\|_2^2 - \|\mu_g-\hatmu_g^n\|_2^2
\end{aligned}
\end{equation}
Hence, we obtain
\begin{equation}
\label{eq_34}
    \left|\tr[\Sigma_g]-\tr[\hatsig_g^n]\right|\le \left|\E[\|f(X_g)-\mu_g\|_2^2]-\frac{1}{n}\sum_{i=1}^n \|f(x^i)-\mu_g\|_2^2\right| + \|\mu_g-\hatmu_g^n\|_2^2
\end{equation}
Note that $f(X_g)-\mu_g\sim\gN(0,\Sigma_g)$. By Lemma~\ref{lem_slmg}, with probability at least $1-\frac{\delta}{3}$, it holds that
\begin{align*}
    \left|\frac{1}{n}\sum_{i=1}^n\|f(X^i)-\mu_g\|_2^2-\E[\|f(X_g)-\mu_g\|_2^2]\right|\le&\tr[\Sigma_g]\sqrt{\frac{8}{n}\log\left(\frac{6}{\delta}\right)}+\frac{8\|\Sigma_g\|_2}{n}\log\left(\frac{6}{\delta}\right).
\end{align*}

\noindent\textbf{Putting all together. }\ Therefore, combining Inequalities~(\ref{eq15}), (\ref{eq33}), and (\ref{eq_34}), with probability at least $1-\frac{\delta}{3}$, it holds that
\begin{equation*}
\begin{aligned}
    &\left|\widetilde{\textup{FD}}_g^n-\textup{FD}_g\right|\\
    \le&\underbrace{(\|\mu_g-\hatmu_g^n\|_2+2\|\widehat{\mu}_g^n-\mur\|_2)\cdot\|\widehat{\mu}_g^n-\mu_g\|_2}_\text{(1) error of $\|\hatmu_g^n-\mur\|_2^2$}+\underbrace{\tr[\sigr^\frac{1}{2}]\sqrt{8\|\Sigma_g-\hatsig_g^n\|_2}}_\text{(2) error of $2\tr[(\hatsig_g^n\sigr)^{\frac{1}{2}}]$}\\
    &+\underbrace{\tr[\Sigma_g]\sqrt{\frac{8}{n}\log\left(\frac{6}{\delta}\right)}+\frac{8\|\Sigma_g\|_2}{n}\log\left(\frac{6}{\delta}\right)+\|\mu_g-\hatmu_g^n\|_2^2}_\text{(3) error of $\tr[\hatsig_g^n]$}\\
    = & 2\|\mu_g-\hatmu_g^n\|_2\cdot\left(\|\mu_g-\hatmu_g^n\|_2+\|\widehat{\mu}_g^n-\mur\|_2\right)+\tr[\sigr^\frac{1}{2}]\sqrt{8\|\Sigma_g-\hatsig_g^n\|_2}\\
    &+\tr[\Sigma_g]\sqrt{\frac{8}{n}\log\left(\frac{6}{\delta}\right)}+\frac{8\|\Sigma_g\|_2}{n}\log\left(\frac{6}{\delta}\right),
\end{aligned}
\end{equation*}
which concludes the proof.
\end{proof}
\end{theorem}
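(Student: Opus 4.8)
The plan is to use the closed form of the Fr\'echet distance between the two Gaussians and to decompose the estimation error into three additive pieces. Writing
\[
    \textup{FD}_g=\|\mu_g-\mur\|_2^2+\tr[\Sigma_g+\sigr-2(\Sigma_g\sigr)^{\frac12}],\qquad
    \widetilde{\textup{FD}}_g^n=\|\hatmu_g^n-\mur\|_2^2+\tr[\hatsig_g^n+\sigr-2(\hatsig_g^n\sigr)^{\frac12}],
\]
the common term $\tr[\sigr]$ cancels in the difference, so that $\widetilde{\textup{FD}}_g^n-\textup{FD}_g$ splits into (i) a squared-mean term $\|\hatmu_g^n-\mur\|_2^2-\|\mu_g-\mur\|_2^2$, (ii) a trace term $\tr[\hatsig_g^n]-\tr[\Sigma_g]$, and (iii) a cross term $-2\bigl(\tr[(\hatsig_g^n\sigr)^{\frac12}]-\tr[(\Sigma_g\sigr)^{\frac12}]\bigr)$. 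I would bound each piece by a triangle inequality and recombine; since only piece (ii) carries any randomness, a single $\delta/3$ failure budget suffices.

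For the mean term (i), I would factor the difference of squares as $(\|\hatmu_g^n-\mur\|_2+\|\mu_g-\mur\|_2)\cdot\bigl|\,\|\hatmu_g^n-\mur\|_2-\|\mu_g-\mur\|_2\,\bigr|$ and apply the reverse triangle inequality to the second factor, which contributes $\|\hatmu_g^n-\mu_g\|_2$; bounding $\|\mu_g-\mur\|_2\le\|\mu_g-\hatmu_g^n\|_2+\|\hatmu_g^n-\mur\|_2$ in the first factor then yields the intermediate form $\bigl(\|\mu_g-\hatmu_g^n\|_2+2\|\hatmu_g^n-\mur\|_2\bigr)\|\hatmu_g^n-\mu_g\|_2$. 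This step is deterministic and elementary. For the cross term (iii), the crucial fact is that $\Sigma\mapsto\tr[(\Sigma\sigr)^{\frac12}]$ is H\"older-$\tfrac12$ continuous in the operator norm once $\sigr\succ\bm{0}$; I would invoke a matrix-perturbation lemma of the form $\bigl|\tr[(\Sigma_g\sigr)^{\frac12}]-\tr[(\hatsig_g^n\sigr)^{\frac12}]\bigr|\le\tr[\sigr^{\frac12}]\sqrt{2\|\Sigma_g-\hatsig_g^n\|_2}$, whose internal $\sqrt2$ together with the outer coefficient $2$ gives $2\sqrt2=\sqrt8$, i.e.\ the $\tr[\sigr^{\frac12}]\sqrt{8\|\Sigma_g-\hatsig_g^n\|_2}$ contribution.

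For the trace term (ii), I would first eliminate the sample-mean bias via the re-centering identity $\tr[\hatsig_g^n]=\tfrac1n\sum_{i=1}^n\|f(x^i)-\mu_g\|_2^2-\|\mu_g-\hatmu_g^n\|_2^2$, which follows by writing $f(x^i)-\hatmu_g^n=(f(x^i)-\mu_g)-(\hatmu_g^n-\mu_g)$ and using $\hatmu_g^n=\tfrac1n\sum_i f(x^i)$. Since $\E\|f(X_g)-\mu_g\|_2^2=\tr[\Sigma_g]$ and $f(X_g)-\mu_g\sim\gN(\bm{0},\Sigma_g)$, the remaining task is to concentrate the empirical average of squared norms, a sum of i.i.d.\ sub-exponential (Gaussian quadratic-form) terms; a Bernstein / Laurent--Massart-type bound gives, with probability $1-\delta/3$, a deviation of order $\tr[\Sigma_g]\sqrt{\tfrac8n\log\tfrac6\delta}+\tfrac{8\|\Sigma_g\|_2}{n}\log\tfrac6\delta$, while the residual $\|\mu_g-\hatmu_g^n\|_2^2$ combines with the mean-term bound above to produce the stated $2\|\mu_g-\hatmu_g^n\|_2\bigl(\|\mu_g-\hatmu_g^n\|_2+\|\hatmu_g^n-\mur\|_2\bigr)$. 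Summing the three contributions over the single failure event then yields the claimed inequality. The main obstacle is piece (iii): controlling the perturbation of $\tr[(\Sigma\sigr)^{\frac12}]$ by $\|\Sigma_g-\hatsig_g^n\|_2$ is genuinely matrix-analytic, because $\Sigma\mapsto(\Sigma\sigr)^{\frac12}$ is nonlinear and only H\"older-$\tfrac12$ (not Lipschitz) continuous, and the strict positive-definiteness of $\sigr$ is precisely what keeps the resulting constant finite; the difference-of-squares manipulation and the Gaussian quadratic-form concentration are otherwise routine.
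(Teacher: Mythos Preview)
Your proposal is correct and follows essentially the same route as the paper: the same three-way decomposition, the same difference-of-squares bound for the mean term, the same matrix-perturbation lemma $\bigl|\tr[(\Sigma_g\sigr)^{1/2}]-\tr[(\hatsig_g^n\sigr)^{1/2}]\bigr|\le\tr[\sigr^{1/2}]\sqrt{2\|\Sigma_g-\hatsig_g^n\|_2}$ for the cross term, and the same re-centering identity followed by a Gaussian quadratic-form (sub-exponential) concentration for $\tr[\hatsig_g^n]$. Your bookkeeping of how the leftover $\|\mu_g-\hatmu_g^n\|_2^2$ merges with the mean-term bound to produce the factor $2$ is also exactly what the paper does.
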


\subsection{Concentration of Mean Vector and Covariance Matrix}
\label{p_thm_cemc}

\begin{lemma}[$L$2-norm error for mean and covariance matrix]
\label{thm_cemc}
Under the same conditions in Theorem~\ref{thm_cef}, with probability at least $1-\frac{2\delta}{3}$, we have that
\begin{equation}
\label{eq52}
     \|\hatmu_g^n-\mu_g\|_2\le \sqrt{\frac{1}{n}\left(\sqrt{8\cdot\tr[\Sigma_g^2]\log\left(\frac{6}{\delta}\right)}+8\|\Sigma_g\|_2\log\left(\frac{6}{\delta}\right)\right)}=:\Delta_{\mu_g}^n
\end{equation}
and
\begin{equation}
\label{eq57}
    \left\|\Sigma_g-\widehat{\Sigma}_g^n\right\|_2\le20\kappa^2\|\Sigma_g\|_2\sqrt{\frac{4\bm{r}(\Sigma_g)+\log(3/\delta)}{n}}+(\Delta_{\mu_g}^n)^2=:\Delta_{\Sigma_g}^n
\end{equation}
for $n\ge 4\bm{r}(\Sigma_g)+\log(3/\delta)$, where $\bm{r}(\Sigma_g)=\frac{\tr[\Sigma_g]}{\|\Sigma_g\|_2}$ is the effective rank of $\Sigma_g$.

\begin{proof}
\textbf{1. Concentration of squared $L$2-norm error. }\quad Note that
\begin{align*}
    \|\hatmu_g^n-\mu_g\|_2^2=\underbrace{\|\hatmu_g^n-\mu_g\|_2^2-\E[\|\hatmu_g^n-\mu_g\|_2^2]}_\text{(*): concentration}+\underbrace{\E[\|\hatmu_g^n-\mu_g\|_2^2]}_\text{expected L2-norm error},
\end{align*}
where $\hatmu_g^n-\mu_g \sim \gN(0,\frac{\Sigma_g}{n})$. By Lemma~\ref{lem_slmg}, we have $\E[\|\hatmu_g^n-\mu_g\|_2^2]=\frac{\tr[\Sigma_g]}{n}$, and with probability at least $1-\frac{\delta}{3}$, it holds that
\begin{equation*}
    \left|\|\hatmu_g^n-\mu_g\|_2^2-\E[\|\hatmu_g^n-\mu_g\|_2^2]\right|\le \frac{1}{n}\left(\sqrt{8\cdot\tr[\Sigma_g^2]\log\left(\frac{6}{\delta}\right)}+8\|\Sigma_g\|_2\log\left(\frac{6}{\delta}\right)\right).
\end{equation*}

\noindent\textbf{2. Concentration of sample covariance matrix. } 
We have that
\begin{align*}
    & \left\|\Sigma_g-\widehat{\Sigma}_g^n\right\|_2\\
    =& \left\|\Sigma_g-\frac{1}{n}\sum_{i=1}^n(f(x^i)-\widehat{\mu}_g^n)(f(x^i)-\widehat{\mu}_g^n)^\top\right\|_2\\
    = & \left\|\Sigma_g-\frac{1}{n}\sum_{i=1}^n(f(x^i)-\mu_g+\mu_g-\widehat{\mu}_g^n)(f(x^i)-\mu_g+\mu_g-\widehat{\mu}_g^n)^\top\right\|_2\\
    = & \left\|\Sigma_g-\frac{1}{n}\sum_{i=1}^n\left((f(x^i)-\mu_g) (f(x^i)-\mu_g)^\top -2(f(x^i)-\mu_g)(\mu_g-\widehat{\mu}_g^n)^\top+(\mu_g-\widehat{\mu}_g^n) (\mu_g-\widehat{\mu}_g^n)^\top\right)\right\|_2\\
    \le & \left\|\Sigma_g-\frac{1}{n}\sum_{i=1}^n(f(x^i)-\mu_g)(f(x^i)-\mu_g)^\top\right\|_2+\left\|(\mu_g-\widehat{\mu}_g^n)(\mu_g-\widehat{\mu}_g^n)^\top\right\|_2.\numberthis\label{eq_296}
\end{align*}
For the first term, note that $f(X_g)-\mu_g \sim \gN(0,\Sigma_g)$. Then, by Lemma~\ref{lem_dfcscm}, with probability at least $1-\frac{\delta}{3}$, it holds that
\begin{equation}
\label{eq_301}
    \left\|\Sigma_g-\frac{1}{n}\sum_{i=1}^n(f(x^i)-\mu_g)(f(x^i)-\mu_g)^\top\right\|_2\le20\kappa^2\|\Sigma_g\|_2\sqrt{\frac{4\bm{r}(\Sigma_g)+\log(3/\delta)}{n}}
\end{equation}
for $n\ge4\bm{r}(\Sigma_g)+\log(3/\delta)$, where $\bm{r}(\Sigma_g)=\frac{\tr[\Sigma_g]}{\|\Sigma_g\|_2}$ and $\kappa$ is a constant defined therein. For the second term, we have that
\begin{equation*}
\begin{aligned}
    \left\|(\mu_g-\widehat{\mu}_g^n)(\mu_g-\widehat{\mu}_g^n)^\top\right\|_2=\|\mu_g-\widehat{\mu}_g^n\|_2^2\le(\Delta_{\mu_g}^n)^2,
\end{aligned}
\end{equation*}
which concludes the proof.
\end{proof}
\end{lemma}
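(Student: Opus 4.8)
The plan is to treat the two bounds separately, since both reduce to standard concentration inequalities for functions of the Gaussian feature vectors. Under the FD model $f(X_g)\sim\gN(\mu_g,\Sigma_g)$, so the centered samples $f(x^i)-\mu_g$ are i.i.d.\ $\gN(0,\Sigma_g)$ and $\hatmu_g^n-\mu_g=\frac1n\sum_{i=1}^n(f(x^i)-\mu_g)\sim\gN(0,\Sigma_g/n)$. I would allocate the confidence budget $\delta$ into pieces of size $\delta/3$, one per high-probability event, and close with a union bound to match the claimed $1-\frac{2\delta}{3}$.

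For the mean bound~(\ref{eq52}), I would work with the squared error $\|\hatmu_g^n-\mu_g\|_2^2$, a quadratic form in the Gaussian vector $\hatmu_g^n-\mu_g$. Splitting it into expectation plus fluctuation, the expectation is $\E[\|\hatmu_g^n-\mu_g\|_2^2]=\tr[\Sigma_g]/n$, and the fluctuation is governed by Lemma~\ref{lem_slmg} (squared-$L$2-norm concentration for Gaussians), whose variance proxy $\tr[(\Sigma_g/n)^2]$ and operator-norm scale $\|\Sigma_g/n\|_2$ produce exactly the $\sqrt{8\tr[\Sigma_g^2]\log(6/\delta)}$ and $8\|\Sigma_g\|_2\log(6/\delta)$ contributions inside the root. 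Collecting these on an event of probability $1-\delta/3$ and taking the square root yields $\Delta_{\mu_g}^n$.

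For the covariance bound~(\ref{eq57}), the single algebraic maneuver is to re-center at the \emph{true} mean $\mu_g$ rather than the data-dependent $\hatmu_g^n$. Writing $f(x^i)-\hatmu_g^n=(f(x^i)-\mu_g)-(\hatmu_g^n-\mu_g)$ and using $\frac1n\sum_i(f(x^i)-\mu_g)=\hatmu_g^n-\mu_g$, the cross terms telescope to give $\hatsig_g^n=\frac1n\sum_i(f(x^i)-\mu_g)(f(x^i)-\mu_g)^\top-(\hatmu_g^n-\mu_g)(\hatmu_g^n-\mu_g)^\top$. A triangle inequality in operator norm then splits $\|\Sigma_g-\hatsig_g^n\|_2$ into (a) the deviation of the empirical second moment of the i.i.d.\ centered Gaussian samples from $\Sigma_g$, and (b) a rank-one correction. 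Term (a) is handled by the dimension-free Gaussian covariance estimate of Lemma~\ref{lem_dfcscm}, giving $20\kappa^2\|\Sigma_g\|_2\sqrt{(4\bm{r}(\Sigma_g)+\log(3/\delta))/n}$ with probability $1-\delta/3$ under the sample-size condition $n\ge4\bm{r}(\Sigma_g)+\log(3/\delta)$; term (b) equals $\|\hatmu_g^n-\mu_g\|_2^2\le(\Delta_{\mu_g}^n)^2$ directly by the mean bound just established.

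The main obstacle is term (a): it requires a \emph{dimension-free} operator-norm concentration governed by the effective rank $\bm{r}(\Sigma_g)=\tr[\Sigma_g]/\|\Sigma_g\|_2$ rather than the ambient feature dimension, which is why I would invoke Lemma~\ref{lem_dfcscm} instead of a naive coordinatewise or union-over-eigenvectors bound; the attendant restriction is precisely the hypothesis $n\ge4\bm{r}(\Sigma_g)+\log(3/\delta)$. Everything else—the mean quadratic-form estimate and the rank-one correction—is routine once the re-centering at $\mu_g$ decouples the data-dependence of $\hatmu_g^n$ from the i.i.d.\ structure needed for the two concentration lemmas, and the two $\delta/3$ events combine by a union bound into the stated $1-\frac{2\delta}{3}$ guarantee.
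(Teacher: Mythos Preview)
Your proposal is correct and follows essentially the same route as the paper: the same decomposition of $\|\hatmu_g^n-\mu_g\|_2^2$ into expectation plus fluctuation handled by Lemma~\ref{lem_slmg}, the same re-centering identity $\hatsig_g^n=\frac1n\sum_i(f(x^i)-\mu_g)(f(x^i)-\mu_g)^\top-(\hatmu_g^n-\mu_g)(\hatmu_g^n-\mu_g)^\top$ followed by a triangle inequality, and the same appeal to Lemma~\ref{lem_dfcscm} for the dimension-free covariance term, with the two $\delta/3$ events combined by a union bound. Your telescoping of the cross terms is in fact slightly cleaner than the paper's expansion, but the argument is the same.
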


\subsection{Concentration Bound for Norm-Bounded Embeddings}

We extend the concentration bound for norm-bounded embeddings.

\begin{theorem}
\label{thm-bounded-norm}

Assume for any generator $g$, the (random) embedding $f(X_g)$ is $L2$-norm bounded, i.e., $\|f(X_g)\|_2 \le C$ for some positive number $C > 0$, and the covariance matrix $\sigr$ of the real data is positive definite. Then, with probability at least $1-\delta$, we have
\begin{equation*}
    \widetilde{\textup{FD}}_g^n-\gC^n_{g}\le\textup{FD}_g,
\end{equation*}
where
\begin{equation*}
    \gC^n_{g} := 2(\Delta_1^n+\|\widehat{\mu}_g^n-\mur\|_2)\cdot\Delta_1^n + \tr[\sigr^\frac{1}{2}]\sqrt{8\Delta_2^n} + 4C^2\sqrt{\frac{1}{2n}\log\left(\frac{3}{\delta}\right)},
\end{equation*}
and $\Delta_1^n$ and $\Delta_2^n$ are defined in~(\ref{eq-249}) and~(\ref{eq-254}), respectively.

\begin{proof}
The proof is similar to Theorem~\ref{thm_ofid}. Recall that for any generator $g$, we have that
\begin{gather*}
    \textup{FD}_g=\|\mu_g-\mu_\textup{r}\|_2^2+\textup{Tr}[\Sigma_g+\Sigma_\textup{r}-2(\Sigma_g\Sigma_\textup{r})^\frac{1}{2}],\\
    \widetilde{\textup{FD}}_g^n=\|\widehat{\mu}_g^n-\mu_\textup{r}\|_2^2+\textup{Tr}[\widehat{\Sigma}_g^n+\Sigma_\textup{r}-2(\widehat{\Sigma}_g^n\Sigma_\textup{r})^\frac{1}{2}].
\end{gather*}
\textbf{(1) Bound $\|\mu_g-\mur\|_2^2-\|\widehat{\mu}_g^n-\mur\|_2^2$. }\ We derive
\begin{equation*}
    \left|\|\mu_g-\mu_\textup{r}\|_2^2-\|\widehat{\mu}_g^n-\mu_\textup{r}\|_2^2\right| \le (\|\mu_g-\hatmu_g^n\|_2+2\|\widehat{\mu}_g^n-\mur\|_2)\cdot\|\widehat{\mu}_g^n-\mu_g\|_2.
\end{equation*}

\noindent\textbf{(2) Bound $\textup{Tr}[(\Sigma_g\Sigma_\textup{r})^\frac{1}{2}-(\widehat{\Sigma}_g^n\Sigma_\textup{r})^\frac{1}{2}]$. }\ By Lemma~\ref{thm_oett}, if the covariance matrix $\Sigma_\textup{r}$ is positive strictly definite, then it holds that
\begin{equation*}
    \left|\textup{Tr}[(\Sigma_g\Sigma_\textup{r})^\frac{1}{2}]-\textup{Tr}\left[(\widehat{\Sigma}_g^n\Sigma_\textup{r})^\frac{1}{2}\right]\right|\le\tr[\sigr^\frac{1}{2}]\sqrt{2\|\Sigma_g-\hatsig_g^n\|_2}.   
\end{equation*}

\noindent\textbf{(3) Bound $\textup{Tr}[\Sigma_g-\widehat{\Sigma}_g^n]$. }\ Note that
\begin{align*}
    \left|\tr[\Sigma_g]-\tr[\hatsig_g^n]\right|\le \left|\E[\|f(X_g)-\mu_g\|_2^2]-\frac{1}{n}\sum_{i=1}^n \|f(x^i)-\mu_g\|_2^2\right| + \|\mu_g-\hatmu_g^n\|_2^2.
\end{align*}
As $\|f(X_g)\|_2 \le C$, we have $\|f(X_g) - \mu_g\|_2^2 \le (\|f(X_g)\|_2 + \|\mu_g\|_2)^2 \le (2C)^2$. By Hoeffding's inequality, we have that with probability at least $1-\frac{\delta}{3}$, it holds that
\begin{align*}
    \left|\frac{1}{n}\sum_{i=1}^n\|f(X^i)-\mu_g\|_2^2-\E[\|f(X_g)-\mu_g\|_2^2]\right|\le 4C^2\sqrt{\frac{1}{2n}\log\left(\frac{6}{\delta}\right)}
\end{align*}
Therefore, we have
\begin{equation}
    \left| \textup{FD}_g - \widetilde{\textup{FD}}_g^n \right| \le  2(\|\mu_g-\hatmu_g^n\|_2+\|\widehat{\mu}_g^n-\mur\|_2)\cdot\|\widehat{\mu}_g^n-\mu_g\|_2 + \tr[\sigr^\frac{1}{2}]\sqrt{8\|\Sigma_g-\hatsig_g^n\|_2} + 4C^2\sqrt{\frac{1}{2n}\log\left(\frac{3}{\delta}\right)}.
\end{equation}
Next, we analyze the concentration error of the $L2$-norm error. Since
\begin{equation*}
    \|\hatmu_g^n-\mu_g\|_2^2=\underbrace{\|\hatmu_g^n-\mu_g\|_2^2-\E[\|\hatmu_g^n-\mu_g\|_2^2]}_\text{ concentration}+\underbrace{\E[\|\hatmu_g^n-\mu_g\|_2^2]}_\text{expected $L$2-norm error},
\end{equation*}
the expected $L$2-norm error is given by
\begin{equation*}
    \E[\|\hatmu_g^n-\mu_g\|_2^2] = \frac{\tr[\Sigma_g]}{n}.
\end{equation*}
Additionally, we have that $|\hatmu_g^n[i]-\mu_g[i]| \le 2C$ for the $i$-th entry where $i \in [d]$. Hence, by Hoeffding's inequality and a union bound over all dimensions, with probability at least $1-\frac{\delta}{3}$, it holds that
\begin{equation*}
    \|\hatmu_g^n-\mu_g\|_2^2-\E[\|\hatmu_g^n-\mu_g\|_2^2] \le \frac{2dC^2}{n}\log\left(\frac{6d}{\delta}\right).
\end{equation*}
Hence, we have
\begin{equation}
\label{eq-249}
    \|\hatmu_g^n-\mu_g\|_2 \le \sqrt{\frac{2dC^2}{n}\log\left(\frac{6d}{\delta}\right) + \frac{\tr[\Sigma_g]}{n}} := \Delta_1^n
\end{equation}
Further, we have that with probability at least $1-\frac{\delta}{3}$, it holds that
\begin{equation}
\label{eq-254}
    \left\|\Sigma_g-\widehat{\Sigma}_g^n\right\|_2\le20\kappa^2\|\Sigma_g\|_2\sqrt{\frac{4\bm{r}(\Sigma_g)+\log(3/\delta)}{n}}+(\Delta_1^n)^2 := \Delta_2^n
\end{equation}
for $n\ge4\bm{r}(\Sigma_g)+\log(3/\delta)$, which concludes the proof.

\end{proof}
\end{theorem}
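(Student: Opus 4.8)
The plan is to follow the three-part decomposition of the FD estimation error established in the proof of Theorem~\ref{thm_cef}, replacing every step that invokes Gaussian concentration (Lemma~\ref{lem_slmg}) with a bound that uses only the boundedness $\|f(X_g)\|_2 \le C$. First I would reproduce the splitting of $\textup{FD}_g - \widetilde{\textup{FD}}_g^n$ into (1) the mean term $\|\mu_g-\mur\|_2^2 - \|\hatmu_g^n-\mur\|_2^2$, (2) the cross term $\tr[(\Sigma_g\sigr)^{1/2}-(\hatsig_g^n\sigr)^{1/2}]$, and (3) the trace term $\tr[\Sigma_g-\hatsig_g^n]$. Part (1) is purely algebraic, namely the triangle-inequality factorization of a difference of squares, and carries over verbatim; part (2) is handled by Lemma~\ref{thm_oett}, which requires only $\sigr\succ\bm{0}$ and is therefore unaffected by the distributional assumption on $f(X_g)$.

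The substantive changes occur in part (3) and in the subsequent concentration of the mean and covariance estimates. For part (3), I would use the identity $\tr[\hatsig_g^n] = \frac{1}{n}\sum_i \|f(x^i)-\mu_g\|_2^2 - \|\mu_g-\hatmu_g^n\|_2^2$ as in Theorem~\ref{thm_cef}, but then observe that boundedness gives $\|f(X_g)-\mu_g\|_2^2 \le (2C)^2$, so the i.i.d.\ scalars $\|f(x^i)-\mu_g\|_2^2$ lie in $[0,4C^2]$ and Hoeffding's inequality controls their average by $4C^2\sqrt{\tfrac{1}{2n}\log(6/\delta)}$, in place of the Gaussian quadratic-form bound of Lemma~\ref{lem_slmg}. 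Collecting parts (1)--(3) yields $|\textup{FD}_g-\widetilde{\textup{FD}}_g^n| \le 2(\|\mu_g-\hatmu_g^n\|_2+\|\hatmu_g^n-\mur\|_2)\|\hatmu_g^n-\mu_g\|_2 + \tr[\sigr^{1/2}]\sqrt{8\|\Sigma_g-\hatsig_g^n\|_2} + 4C^2\sqrt{\tfrac{1}{2n}\log(6/\delta)}$.

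It then remains to convert $\|\hatmu_g^n-\mu_g\|_2$ and $\|\Sigma_g-\hatsig_g^n\|_2$ into the explicit quantities $\Delta_1^n$ and $\Delta_2^n$. For the mean, I would write $\|\hatmu_g^n-\mu_g\|_2^2$ as the sum of its centered fluctuation and its expectation $\E[\|\hatmu_g^n-\mu_g\|_2^2]=\tr[\Sigma_g]/n$, and control the fluctuation coordinatewise: since each coordinate of $\hatmu_g^n-\mu_g$ is an average of $n$ terms bounded by $2C$, Hoeffding plus a union bound over the $d$ coordinates gives deviation $\le \tfrac{2dC^2}{n}\log(6d/\delta)$, producing $\Delta_1^n$ in~(\ref{eq-249}). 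For the covariance I would reuse the centering decomposition~(\ref{eq_296}): the rank-one correction contributes $\|\mu_g-\hatmu_g^n\|_2^2 \le (\Delta_1^n)^2$, and the centered term $\|\Sigma_g - \frac{1}{n}\sum_i (f(x^i)-\mu_g)(f(x^i)-\mu_g)^\top\|_2$ is bounded by Lemma~\ref{lem_dfcscm}, giving $\Delta_2^n$ in~(\ref{eq-254}). Substituting $\Delta_1^n$ and $\Delta_2^n$ into the FD bound and taking a union bound over the three failure events (each of probability $\delta/3$) yields $\widetilde{\textup{FD}}_g^n - \gC^n_g \le \textup{FD}_g$ with probability $1-\delta$.

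The main obstacle I anticipate is the mean concentration: losing exact Gaussianity removes access to the dimension-free Hanson--Wright--type bound of Lemma~\ref{lem_slmg}, so the coordinatewise Hoeffding argument is essentially forced and introduces the explicit dimension dependence $dC^2\log(6d/\delta)$ into $\Delta_1^n$. A secondary point requiring care is justifying that Lemma~\ref{lem_dfcscm}---applied to Gaussian samples in~(\ref{eq_301})---remains valid here; this is legitimate because a norm-bounded vector is sub-Gaussian, so the lemma applies with $\kappa$ reinterpreted as the sub-Gaussian parameter of $f(X_g)$ rather than an exact Gaussian constant.
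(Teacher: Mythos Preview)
Your proposal is correct and follows essentially the same approach as the paper: the same three-part decomposition from Theorem~\ref{thm_cef}, Hoeffding's inequality (with the $4C^2$ bound) replacing Lemma~\ref{lem_slmg} in part~(3), coordinatewise Hoeffding with a union bound for the mean concentration giving $\Delta_1^n$, and the centering decomposition~(\ref{eq_296}) plus Lemma~\ref{lem_dfcscm} for the covariance giving $\Delta_2^n$. Your remark that Lemma~\ref{lem_dfcscm} still applies because bounded vectors are sub-Gaussian is a point the paper glosses over but that you correctly flag.
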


\subsection{Auxiliary Definitions and Lemmas}

\begin{lemma}[Error decomposition for {$\textup{Tr}[(\widehat{\Sigma}_g^n\Sigma_\textup{r})^\frac{1}{2}]$}]
\label{thm_oett}

Under the same conditions in Theorem~\ref{thm_cef}, we have that
\begin{equation}
    \left|\textup{Tr}[(\Sigma_g\Sigma_\textup{r})^\frac{1}{2}]-\textup{Tr}[(\widehat{\Sigma}_g^n\Sigma_\textup{r})^\frac{1}{2}]\right|\le \tr[\sigr^\frac{1}{2}]\sqrt{2\|\Sigma_g-\hatsig_g^n\|_2}.
\end{equation}
\begin{proof}
Let $\eta:=\|\Sigma_g-\hatsig_g^n\|_2$ and $\sigr:=B B^\top$ denote the Cholesky decomposition of $\sigr$, where $B$ is invertible by the assumption $\sigr\succ\bm{0}$. Note that matrix $A\sigr=A B B^\top$ has the same set of eigenvalues with $B^\top A B$. We have that
\begin{align*}
    & \textup{Tr}[(\Sigma_g\Sigma_\textup{r})^\frac{1}{2}]-\textup{Tr}[(\widehat{\Sigma}_g^n\Sigma_\textup{r})^\frac{1}{2}]\\
    \le & \tr\left[\sqrt{B^\top(\Sigma_g-\hatsig_g^n+\hatsig_g^n+\eta I)B}\right]-\tr\left[\sqrt{B^\top\hatsig_g^n B}\right] \tag{Lemma~\ref{lem_additivity}}\\
    \le & \tr\left[\sqrt{B^\top(\Sigma_g-\hatsig_g^n+\eta I)B}+\sqrt{B^\top\hatsig_g^n B}\right]-\tr\left[\sqrt{B^\top\hatsig_g^n B}\right]\\
    \le & \tr\left[\sqrt{2\eta B^\top B}\right]=\tr[\sigr^\frac{1}{2}]\sqrt{2\eta},
\end{align*}
where the second inequality holds by Lemma~\ref{lem_square_add} and the fact that $B^\top(\Sigma_g-\hatsig_g^n+\eta I)B,B^\top\hatsig_g^n B\succeq \bm{0}$ are PSD, and the last inequality holds by $B^\top(2\eta I)B\succeq B^\top(\Sigma_g-\hatsig_g^n+\eta I)B$. By the same analysis, we can also derive that
\begin{equation*}
    \textup{Tr}[(\widehat{\Sigma}_g^n\Sigma_\textup{r})^\frac{1}{2}]-\textup{Tr}[(\Sigma_g\Sigma_\textup{r})^\frac{1}{2}]\le\tr\left[\sqrt{2\eta B^\top B}\right]=\tr[\sigr^\frac{1}{2}]\sqrt{2\eta},
\end{equation*}
which concludes the proof.
\end{proof}
\end{lemma}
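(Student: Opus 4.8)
The obstacle that shapes the whole argument is that $(\Sigma_g\sigr)^{\frac12}$ is the square root of a \emph{product} of two PSD matrices, which need not be symmetric, so the Loewner-order tools that make matrix square roots tractable do not apply to it directly. The plan is therefore to first symmetrize. Writing the Cholesky factorization $\sigr=BB^\top$ with $B$ invertible (available since $\sigr\succ\bm{0}$), the matrix $\Sigma BB^\top$ is similar to the symmetric PSD matrix $B^\top\Sigma B$ via conjugation by $B^\top$, hence the two share the same real nonnegative spectrum. Since $\tr[(\Sigma\sigr)^{\frac12}]$ depends only on the eigenvalues of $\Sigma\sigr$, this yields the identity $\tr[(\Sigma\sigr)^{\frac12}]=\tr[\sqrt{B^\top\Sigma B}]$, and the problem reduces to comparing $\tr[\sqrt{B^\top\Sigma_g B}]$ and $\tr[\sqrt{B^\top\hatsig_g^n B}]$, which are now genuine symmetric-PSD objects.

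Second, I would exploit the spectral gap. Setting $\eta:=\|\Sigma_g-\hatsig_g^n\|_2$, the symmetric matrix $\Sigma_g-\hatsig_g^n$ has all eigenvalues in $[-\eta,\eta]$, so both $\Sigma_g\preceq\hatsig_g^n+\eta I$ and $\Sigma_g-\hatsig_g^n+\eta I\succeq\bm{0}$ hold. Conjugation by $B$ preserves the Loewner order and $\tr[\sqrt{\cdot}]$ is monotone under it, so $\tr[\sqrt{B^\top\Sigma_g B}]\le\tr[\sqrt{B^\top(\Sigma_g+\eta I)B}]$. Decomposing $\Sigma_g+\eta I=(\Sigma_g-\hatsig_g^n+\eta I)+\hatsig_g^n$ into two PSD pieces and invoking subadditivity of the matrix square root (Lemma~\ref{lem_square_add}), $\sqrt{X+Y}\preceq\sqrt{X}+\sqrt{Y}$, followed by the trace, gives $\tr[\sqrt{B^\top(\Sigma_g+\eta I)B}]\le\tr[\sqrt{B^\top(\Sigma_g-\hatsig_g^n+\eta I)B}]+\tr[\sqrt{B^\top\hatsig_g^n B}]$, where the last term cancels against the quantity being subtracted.

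Third, I would collapse the residual term with the crude but sufficient bound $\Sigma_g-\hatsig_g^n+\eta I\preceq 2\eta I$, so that $B^\top(\Sigma_g-\hatsig_g^n+\eta I)B\preceq 2\eta B^\top B$ and, again by monotonicity of $\tr[\sqrt{\cdot}]$, $\tr[\sqrt{B^\top(\Sigma_g-\hatsig_g^n+\eta I)B}]\le\tr[\sqrt{2\eta B^\top B}]=\sqrt{2\eta}\,\tr[\sqrt{B^\top B}]=\sqrt{2\eta}\,\tr[\sigr^{\frac12}]$, the final identity using that $B^\top B$ and $BB^\top=\sigr$ are isospectral. This establishes $\tr[(\Sigma_g\sigr)^{\frac12}]-\tr[(\hatsig_g^n\sigr)^{\frac12}]\le\tr[\sigr^{\frac12}]\sqrt{2\eta}$; the reverse inequality follows verbatim after swapping the roles of $\Sigma_g$ and $\hatsig_g^n$ (noting $\eta$ is symmetric in the two matrices), which supplies the absolute value.

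The main obstacle I expect is the combination of the symmetrization identity and the subadditivity of the matrix square root: the former needs care because $\Sigma\sigr$ is non-symmetric and its square root must be read spectrally, while the latter, together with monotonicity of $\tr[\sqrt{\cdot}]$ under $\preceq$, is the substantive matrix-analysis input that does the real work. The factor $\sqrt{2}$ rather than $\sqrt{1}$ is simply the price of routing through subadditivity instead of applying operator monotonicity of $X\mapsto\sqrt{X}$ directly to $\Sigma_g\preceq\hatsig_g^n+\eta I$; I would keep the subadditivity route since it matches the stated constant and avoids the heavier operator-monotone machinery.
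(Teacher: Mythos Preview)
Your proof is correct and follows essentially the same approach as the paper's: symmetrize via the Cholesky factorization $\sigr=BB^\top$, add $\eta I$ and use monotonicity (the paper's Lemma~\ref{lem_additivity}), split via trace subadditivity (Lemma~\ref{lem_square_add}), bound the residual by $2\eta I$, and finish by symmetry. One minor remark: Lemma~\ref{lem_square_add} as stated gives only the trace inequality $\tr[\sqrt{X+Y}]\le\tr[\sqrt{X}]+\tr[\sqrt{Y}]$, not the operator inequality $\sqrt{X+Y}\preceq\sqrt{X}+\sqrt{Y}$ that you quote, but since you immediately take traces this does not affect the argument.
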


\begin{definition}[Sub-Gaussian random variable]
A random variable $X$ with mean $\mu=\E[X]$ is sub-Gaussian with (positive) parameter $\sigma$, which is denoted by $X\in\textup{SG}(\sigma)$, if
\begin{equation*}
    \E[\exp(s(X-\mu))]\le\exp\left(\frac{s^2\sigma^2}{2}\right)
\end{equation*}
for all $s\in\sR$.
\end{definition}

\begin{definition}[Sub-Gaussian random vector]
A random vector $X\in\sR^d$ is a sub-Gaussian random vector with (positive) parameter $\sigma$ if $y^\top X\in\textup{SG}(\sigma)$. Particularly, the multivariate Gaussian $X\sim\gN(0,\Sigma)\in\textup{SG}(\sqrt{\|\Sigma\|_2})$.
\end{definition}

\begin{definition}[Sub-exponential random variable]
\label{def_se}
A random variable $X$ with mean $\mu=\E[X]$ is sub-exponential with (non-negative) parameters $(\nu,\alpha)$, which denoted by $X\in\textup{SE}(\nu,\alpha)$, if
\begin{equation*}
    \E[\exp(s(X-\mu))]\le\exp\left(\frac{s^2\nu^2}{2}\right)
\end{equation*}
for all $|s|<\frac{1}{\alpha}$. Moreover, we have
\begin{equation*}
    \sP[X-\E[X]\ge t]\le
    \begin{cases}
        \exp(-\frac{t^2}{2\nu^2}) & \textup{if } 0\le t\le \frac{\nu^2}{\alpha}, \\
        \exp(-\frac{t}{2\alpha}) & \textup{for } t> \frac{\nu^2}{\alpha}.
    \end{cases}
\end{equation*}
Equivalently, with probability at least $1-\delta$, it holds that
\begin{equation*}
    |X-\E[X]|\le\max\left\{\sqrt{2\nu^2\log\left(\frac{2}{\delta}\right)},2\alpha\log\left(\frac{2}{\delta}\right)\right\}.
\end{equation*}
\end{definition}

\begin{lemma}[Weighted sum of sub-exponential r.v.'s]
\label{lem_wsse}
Let $X_1,\cdots,X_n$ be $n$ independent random variables, where $X_i\in\textup{SE}(\nu_i,\alpha_i)$ is sub-exponential with mean $\E[X_i]=\mu_i$ and parameters $(\nu_i,\alpha_i)$. Then, for any $W=(w_1,\cdots,w_n)^\top\in\sR^n_+$, the (non-negative) weighted sum $\sum_{i=1}^n w_i X_i$ is sub-exponential with parameters $(\sqrt{\sum_{i=1}^n w_i^2 \nu_i^2},\max_i \{w_i\cdot\alpha_i\})$.

\begin{proof}
By Definition~\ref{def_se}, we have
\begin{equation*}
    \E[\exp(s\cdot w_i(X_i-\mu_i))]\le\exp\left(\frac{s^2 w_i^2\nu_i^2}{2}\right)
\end{equation*}
for any $i\in[n]$ and $|s|\cdot w_i<\frac{1}{\alpha_i}$ . Further, since $X_1,\cdots,X_n$ are independent, it holds that
\begin{equation*}
    \E[\exp(s\cdot \sum_{i=1}^n w_i(X_i-\mu_i))]\le\prod_{i=1}^n \exp\left(\frac{s^2 w_i^2\nu_i^2}{2}\right)=\exp\left(\frac{s^2\left(\sqrt{\sum_{i=1}^n w_i^2\nu_i^2}\right)^2}{2}\right)
\end{equation*}
for any $|s|<\min_i \frac{1}{w_i\cdot\alpha_i}$, which concludes the proof.
\end{proof}
\end{lemma}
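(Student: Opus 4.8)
The plan is to reduce the statement to a direct moment-generating-function (MGF) computation and then use independence to factor the joint MGF. First I would invoke the definition of $\textup{SE}(\nu_i,\alpha_i)$, which gives $\E[\exp(s(X_i-\mu_i))]\le\exp(s^2\nu_i^2/2)$ for every $|s|<1/\alpha_i$. Replacing $s$ by $s\,w_i$ (legitimate because $w_i\ge 0$ and the defining bound holds on the symmetric interval $|s|<1/\alpha_i$, with $\exp((sw_i)^2\nu_i^2/2)=\exp(s^2 w_i^2\nu_i^2/2)$) yields $\E[\exp(s\,w_i(X_i-\mu_i))]\le\exp(s^2 w_i^2\nu_i^2/2)$, now valid on the shrunken interval $|s|<1/(w_i\alpha_i)$.

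Next I would use the independence of $X_1,\dots,X_n$ to factor the MGF of the centered weighted sum, $\E[\exp(s\sum_{i=1}^n w_i(X_i-\mu_i))]=\prod_{i=1}^n\E[\exp(s\,w_i(X_i-\mu_i))]$, and bound each factor by the scaled estimate above. The product of exponentials collapses to $\exp\big(s^2(\sum_{i=1}^n w_i^2\nu_i^2)/2\big)$, which is exactly the $\textup{SE}$-type MGF bound with variance proxy $\nu=\sqrt{\sum_{i=1}^n w_i^2\nu_i^2}$; this reads off the first parameter directly.

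Finally I would track the domain of validity in $s$: since the $i$-th factor's bound holds only for $|s|<1/(w_i\alpha_i)$, the product bound is valid precisely on the intersection $|s|<\min_i 1/(w_i\alpha_i)=1/\max_i\{w_i\alpha_i\}$, which identifies the second parameter $\alpha=\max_i\{w_i\alpha_i\}$ and matches the claimed $\textup{SE}(\sqrt{\sum_i w_i^2\nu_i^2},\,\max_i\{w_i\alpha_i\})$. I do not expect any substantive obstacle here; the only point requiring care is exactly this bookkeeping of the admissible $s$-range (an intersection of the per-term intervals, not a single interval), together with the non-negativity assumption $W\in\sR^n_+$, which is what lets me write $\max_i\{w_i\alpha_i\}$ rather than $\max_i\{|w_i|\alpha_i\}$ and keeps the substitution $s\mapsto s\,w_i$ consistent across all factors.
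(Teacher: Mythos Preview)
Your proposal is correct and follows essentially the same approach as the paper's proof: apply the sub-exponential MGF bound with the substitution $s\mapsto s\,w_i$, use independence to factor the joint MGF, and take the intersection of the per-term validity intervals to read off $\alpha=\max_i\{w_i\alpha_i\}$. There is no meaningful difference between the two arguments.
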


\begin{lemma}[Squared L2-norm of multivariate Gaussian]
\label{lem_slmg}

Let $X\sim\gN(\bm{0},\Sigma)$ be any $d$-dimensional multivariate Gaussian with zero mean. Then, the r.v. $\|X\|_2^2 \sim \sum_{i=1}^d \lambda_i \chi^2_1$ is a weighted sum of i.i.d. chi-squared random variables with one degree of freedom, where $\{\lambda_i\}_{i=1}^d$ are the eigenvalues of the covariance matrix $\Sigma$. Further, let $X_1,\cdots,X_n\sim\gN(\bm{0},\Sigma)$ denote i.i.d. samples. Then, with probability at least $1-\delta$, it holds that
\begin{equation}
    \left|\frac{1}{n}\sum_{i=1}^n \|X_i\|_2^2 -\E[\|X\|_2^2]\right|\le \max\left\{\sqrt{\frac{8\cdot\tr[\Sigma^2]}{n}\log\left(\frac{2}{\delta}\right)},\frac{8\|\Sigma\|_2}{n}\log\left(\frac{2}{\delta}\right)\right\},
\end{equation}
where $\E[\|X\|_2^2]:=\sum_{i=1}^d \lambda_i=\tr[\Sigma]$.

\begin{proof}
Let $\Sigma=Q \Lambda Q^\top$ be the eigendecomposition of the covariance matrix, where we define $\Lambda := \textup{diag}(\lambda_1,\cdots,\lambda_d)$. Hence, we have $X\sim Q Z$, where $Z\sim\gN(\bm{0}, \Lambda)$ is \textit{isotropic} multivariate Gaussian. Hence, we derive
\begin{align*}
    \|X\|_2^2 = (QZ)^\top QZ = Z^\top Q^\top Q Z = Z^\top Z = \sum_{i=1}^d \lambda_i \chi_1^2.
\end{align*}
Since $\chi_1^2\in\textup{SE}(2,4)$ is sub-exponential with parameters $(2,4)$, by Lemma~\ref{lem_wsse} (with $w_i=\lambda_i$), we have $\|X\|_2^2\in\textup{SE}(2\sqrt{\tr[\Sigma^2]},4\|\Sigma\|_2)$ and has mean $\E[\|X\|_2^2]=\tr[\Sigma]$. We conclude the proof by Lemma~\ref{lem_bbse}.
\end{proof}
\end{lemma}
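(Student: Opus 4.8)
The plan is to separate the lemma into its distributional claim and its concentration claim and handle them in sequence, reducing everything to the one-dimensional chi-squared building block.

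\textbf{Distributional claim.} First I would diagonalize the covariance as $\Sigma = Q\Lambda Q^\top$ with $Q$ orthogonal and $\Lambda = \textup{diag}(\lambda_1,\dots,\lambda_d)$, and set $Z := Q^\top X$. Because an orthogonal change of basis sends a centered Gaussian to a centered Gaussian, $Z \sim \gN(\bm{0},\Lambda)$, so its coordinates are independent with $Z_i \sim \sqrt{\lambda_i}\, g_i$ for independent standard normals $g_i$. The key point is that the Euclidean norm is rotation-invariant: $\|X\|_2^2 = \|QZ\|_2^2 = Z^\top Q^\top Q Z = Z^\top Z = \sum_{i=1}^d Z_i^2$. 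Since $Z_i^2 \sim \lambda_i g_i^2 \sim \lambda_i \chi_1^2$ (valid even when some $\lambda_i=0$, where both sides vanish), this yields the representation $\|X\|_2^2 \sim \sum_{i=1}^d \lambda_i \chi_1^2$, and taking expectations gives $\E[\|X\|_2^2] = \sum_i \lambda_i = \tr[\Sigma]$.

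\textbf{Sub-exponential parameters.} Next I would record that a single $\chi_1^2$ variable lies in $\textup{SE}(2,4)$, a routine moment-generating-function computation for one degree of freedom. Viewing $\|X\|_2^2 = \sum_i \lambda_i \chi_1^2$ as a nonnegative weighted sum of independent sub-exponential variables with weights $w_i=\lambda_i$, $\nu_i=2$, $\alpha_i=4$, I would apply Lemma~\ref{lem_wsse} to obtain $\|X\|_2^2 \in \textup{SE}(\nu,\alpha)$ with $\nu = \sqrt{\sum_i 4\lambda_i^2} = 2\sqrt{\tr[\Sigma^2]}$ and $\alpha = \max_i 4\lambda_i = 4\|\Sigma\|_2$, using $\sum_i \lambda_i^2 = \tr[\Sigma^2]$ and (for the PSD matrix $\Sigma$) $\max_i \lambda_i = \|\Sigma\|_2$.

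\textbf{Passing to the empirical average.} Finally I would apply Lemma~\ref{lem_wsse} a second time, now with uniform weights $w_i=1/n$ over the $n$ i.i.d. copies, giving $\frac{1}{n}\sum_{i=1}^n \|X_i\|_2^2 \in \textup{SE}(\nu/\sqrt{n},\,\alpha/n)$. The two-sided sub-exponential tail bound recorded in Definition~\ref{def_se} then yields, with probability at least $1-\delta$, a deviation of at most $\max\{\sqrt{2(\nu/\sqrt{n})^2\log(2/\delta)},\, 2(\alpha/n)\log(2/\delta)\}$; substituting $\nu = 2\sqrt{\tr[\Sigma^2]}$ and $\alpha = 4\|\Sigma\|_2$ reproduces exactly the stated bound $\max\{\sqrt{\frac{8\tr[\Sigma^2]}{n}\log(2/\delta)},\, \frac{8\|\Sigma\|_2}{n}\log(2/\delta)\}$. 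The one delicate step is the sub-exponential bookkeeping: one must carefully compose the eigenvalue weights $\lambda_i$ with the averaging weights $1/n$ under Lemma~\ref{lem_wsse} and then feed the resulting $(\nu,\alpha)$ through the piecewise tail bound so that the constants land precisely at $8$; the distributional reduction and the $\chi_1^2\in\textup{SE}(2,4)$ fact are otherwise standard.
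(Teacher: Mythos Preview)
Your proposal is correct and follows essentially the same route as the paper: diagonalize $\Sigma$, reduce $\|X\|_2^2$ to $\sum_i \lambda_i \chi_1^2$, invoke $\chi_1^2\in\textup{SE}(2,4)$ and Lemma~\ref{lem_wsse} to get $\|X\|_2^2\in\textup{SE}(2\sqrt{\tr[\Sigma^2]},4\|\Sigma\|_2)$, then pass to the sample mean. The only cosmetic difference is that the paper packages your second application of Lemma~\ref{lem_wsse} plus the tail bound from Definition~\ref{def_se} into a single citation of Lemma~\ref{lem_bbse}, whose proof is exactly that two-line argument.
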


\begin{lemma}[Bernstein-type bound for sub-exponential r.v.]
\label{lem_bbse}
Let $X_1,\cdots,X_n$ be $n$ i.i.d. random variables, where each $X_i\in\textup{SE}(\nu,\alpha)$. Then, with probability at least $1-\delta$, it holds that
\begin{equation*}
    \left|\frac{1}{n}\sum_{i=1}^n X_i - \E[X]\right|\le \max\left\{\sqrt{\frac{2\nu^2}{n}\log\left(\frac{2}{\delta}\right)},\frac{2\alpha}{n}\log\left(\frac{2}{\delta}\right)\right\}
\end{equation*}
\begin{proof}
By Lemma~\ref{lem_wsse}, we have $\frac{1}{n}\sum_{i=1}^n X_i\in\textup{SE}(\frac{\nu}{\sqrt{n}},\frac{\alpha}{n})$, which concludes the proof.
\end{proof}
\end{lemma}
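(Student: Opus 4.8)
The plan is to show that the sample mean $\frac{1}{n}\sum_{i=1}^n X_i$ inherits the sub-exponential property from the individual summands with appropriately rescaled parameters, and then to read off the concentration inequality directly from the tail bound recorded in Definition~\ref{def_se}. The whole argument is a two-line composition of results already established above, so I expect no genuinely hard step; the work is entirely in tracking the parameter arithmetic.

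First, I would invoke Lemma~\ref{lem_wsse} on the weighted sum $\sum_{i=1}^n w_i X_i$ with the uniform choice $w_i=\frac{1}{n}$. Since each $X_i\in\textup{SE}(\nu,\alpha)$, the lemma yields that $\frac{1}{n}\sum_{i=1}^n X_i$ is sub-exponential with parameters $\left(\sqrt{\sum_{i=1}^n \frac{\nu^2}{n^2}},\ \max_i \frac{\alpha}{n}\right)=\left(\frac{\nu}{\sqrt{n}},\frac{\alpha}{n}\right)$, using $\sqrt{n\cdot\nu^2/n^2}=\nu/\sqrt{n}$, and its mean equals $\E[X]$ by linearity of expectation together with the i.i.d. assumption. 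Second, I would apply the ``equivalently'' form of the sub-exponential tail bound in Definition~\ref{def_se} to the variable $Y:=\frac{1}{n}\sum_{i=1}^n X_i\in\textup{SE}(\nu/\sqrt{n},\alpha/n)$. Substituting $\nu\mapsto\nu/\sqrt{n}$ and $\alpha\mapsto\alpha/n$ into $|Y-\E[Y]|\le\max\{\sqrt{2\nu_Y^2\log(2/\delta)},\,2\alpha_Y\log(2/\delta)\}$ produces exactly $\max\{\sqrt{\frac{2\nu^2}{n}\log(2/\delta)},\,\frac{2\alpha}{n}\log(2/\delta)\}$, which is the claimed bound.

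The only point that warrants any care—and the closest thing to an obstacle—is the admissible range of the moment generating function inequality: Lemma~\ref{lem_wsse} certifies the sub-exponential MGF bound for $Y$ only on $|s|<\min_i 1/(w_i\alpha_i)=n/\alpha$. This is precisely the regime in which the two-sided tail bound of Definition~\ref{def_se} is valid, so the composition introduces no extra restriction on $n$ or $\delta$, and the proof closes immediately.
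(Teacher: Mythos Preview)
Your proposal is correct and follows exactly the same approach as the paper: apply Lemma~\ref{lem_wsse} with weights $w_i=\tfrac{1}{n}$ to obtain $\frac{1}{n}\sum_i X_i\in\textup{SE}(\nu/\sqrt{n},\alpha/n)$, and then read off the tail bound from Definition~\ref{def_se}. The paper's proof is the one-line version of what you wrote; your added remarks on the parameter arithmetic and the MGF validity range are accurate but not needed beyond what the paper records.
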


\begin{lemma}[Dimension-free concentration of sample covariance matrix \citep{50e90dda-bcf5-3696-b295-5da8c1bf2d0c, 10.3150/15-BEJ730, zhivotovskiy2022dimensionfree}]
\label{lem_dfcscm}

Let $X_1,\cdots,X_n$ are i.i.d sub-Gaussian random vectors with zero mean and covariance matrix $\Sigma$. Let
\begin{equation}
    \bm{r}(\Sigma):=\frac{\tr(\Sigma)}{\|\Sigma\|_2}
\end{equation}
denote the \emph{effective rank}. Then, with probability at least $1-\delta$, the sample covariance matrix satisfies that
\begin{equation}
    \left\|\frac{1}{n}\sum_{i=1}^nX_iX_i^\top-\Sigma\right\|_2\le 20\kappa^2\|\Sigma\|_2\left(\sqrt{\frac{4\bm{r}(\Sigma)+\log(1/\delta)}{n}}\right)
\end{equation}
for $n\ge 4\bm{r}(\Sigma)+\log(1/\delta)$. Here, the parameter $\kappa$ is a constant such that $\|y^\top X\|_{\psi_2}\le\kappa\sqrt{y^\top\Sigma y}$, where $\|Z\|_{\psi_2}:=\inf\{t>0:\E[\exp(Z^2/t^2)]\le 2\}$ is the sub-Gaussian norm.
\end{lemma}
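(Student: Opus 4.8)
The statement is a known dimension-free covariance-estimation bound (in the style of Koltchinskii--Lounici and Zhivotovskiy, the cited references), so the plan is to reproduce the variational-principle argument that yields effective-rank rather than ambient-dimension dependence. Write $M := \frac{1}{n}\sum_{i=1}^n X_iX_i^\top-\Sigma$, a random symmetric matrix, and use the variational characterization of the operator norm,
\[
    \left\|M\right\|_2 = \sup_{\|y\|_2=1} \left|y^\top M y\right|,
\]
which reduces the problem to a two-sided uniform control of the empirical process $y\mapsto y^\top M y = \frac{1}{n}\sum_{i=1}^n (y^\top X_i)^2 - y^\top\Sigma y$ over the unit sphere.

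First I would establish single-direction concentration. For a fixed unit vector $y$, the variables $(y^\top X_i)^2$ are i.i.d. with mean $y^\top\Sigma y$; since $y^\top X$ is sub-Gaussian with $\|y^\top X\|_{\psi_2}\le\kappa\sqrt{y^\top\Sigma y}$, the squares are sub-exponential with $\psi_1$-norm at most $\kappa^2\, y^\top\Sigma y$ (using $\|Z^2\|_{\psi_1}=\|Z\|_{\psi_2}^2$). Bernstein's inequality then controls the moment generating function $\E\exp(\lambda\, y^\top M y)$ for $|\lambda|$ below a multiple of $1/(\kappa^2 y^\top\Sigma y)$, with an exponent scaling like $\kappa^4 (y^\top\Sigma y)^2/n$. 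This is the per-direction information fed into the uniformization step.

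The core step, and the main obstacle, is upgrading this pointwise bound to a uniform bound over the sphere \emph{without} paying a factor of $d$: a crude $\varepsilon$-net has log-cardinality of order $d$ and yields only the dimension-dependent rate $\sqrt{d/n}$. To obtain the effective rank $\bm r(\Sigma)=\tr[\Sigma]/\|\Sigma\|_2$ instead, I would invoke the PAC-Bayesian variational (Donsker--Varadhan) principle. For a Gaussian prior $\pi=\mathcal N(0,\gamma I)$ and a posterior $\rho=\mathcal N(y^\star,\gamma I)$ centered at a (near-)maximizing direction $y^\star$, the inequality $\int h\,d\rho\le \mathrm{KL}(\rho\,\|\,\pi)+\log\int e^{h}\,d\pi$ applied with $h(v)=\lambda\,v^\top M v$ decouples the supremum from the data randomness. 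The divergence term equals $\|y^\star\|_2^2/(2\gamma)=1/(2\gamma)$, which is \emph{dimension-free}; the Gaussian smoothing introduces a bias $\lambda\gamma\,\tr(M)$ and, after taking expectations over the data, a log-partition term governed by the single-direction sub-exponential MGF together with a $\gamma\,\tr(\Sigma)$ contribution coming from integrating $v^\top\Sigma v$ against the Gaussian. It is precisely this $\tr(\Sigma)$ --- rather than $d\|\Sigma\|_2$ --- that produces the effective rank once $\gamma$ is tuned to balance $1/(2\gamma)$ against $\gamma\,\tr(\Sigma)$. The delicate points here are keeping $2\lambda\gamma M\prec I$ so that the Gaussian log-partition $-\tfrac12\log\det(I-2\lambda\gamma M)$ remains finite, and arranging that the smoothing bias is absorbed rather than allowed to dominate.

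Finally I would assemble the pieces into a high-probability bound of the form $\|M\|_2\lesssim \kappa^2\|\Sigma\|_2\big(\sqrt{(\bm r(\Sigma)+\log(1/\delta))/n}+(\bm r(\Sigma)+\log(1/\delta))/n\big)$ via a Chernoff argument (optimizing $\lambda$) and a union bound over $\pm M$ to recover the two-sided operator-norm deviation. Under the stated sample-size condition $n\ge 4\bm r(\Sigma)+\log(1/\delta)$, the linear term is dominated by the square-root term, collapsing the bound to the single term $20\kappa^2\|\Sigma\|_2\sqrt{(4\bm r(\Sigma)+\log(1/\delta))/n}$; the only remaining work is the constant bookkeeping needed to land on the explicit factor $20$ and the exact form $4\bm r(\Sigma)+\log(1/\delta)$.
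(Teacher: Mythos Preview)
The paper does not prove this lemma at all: it is stated as a cited result from the literature (Koltchinskii--Lounici and Zhivotovskiy) and used as a black box in the proof of Lemma~\ref{thm_cemc}. So there is no ``paper's own proof'' to compare against.

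Your outline is nonetheless a faithful sketch of the PAC-Bayesian argument in the Zhivotovskiy reference, which is one legitimate way to obtain the stated bound with effective-rank dependence. The single-direction sub-exponential step and the Donsker--Varadhan smoothing with Gaussian prior/posterior are the right ingredients, and your identification of the $\gamma\,\tr(\Sigma)$ term as the source of $\bm r(\Sigma)$ is correct. The main caveat is that landing on the \emph{exact} constants $20$ and $4\bm r(\Sigma)+\log(1/\delta)$ requires care that the sketch does not yet supply; the cited papers themselves state slightly different explicit constants, and the version quoted here appears to be a clean repackaging. For the purposes of this paper, simply citing the result (as the authors do) is sufficient.
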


\begin{lemma}
\label{lem_additivity}
Let $A,B\in\sS^d$ be positive semi-definite (PSD) matrices. Then, we have that
\begin{equation}
    \tr(\sqrt{A+B})\ge \max\{\tr(\sqrt{A}),\tr(\sqrt{B})\}
\end{equation}
\begin{proof}
Since $A+B\succeq A$ and $A+B\succeq B$, by Courant-Fischer min-max theorem, it holds that $\lambda_i(A+B)\ge\lambda_i(A)$ and $\lambda_i(A+B)\ge\lambda_i(B)$ for their $i$-th largest eigenvalues, which concludes the proof.
\end{proof}
\end{lemma}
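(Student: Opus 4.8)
The plan is to reduce this trace inequality to a termwise comparison of eigenvalues via the spectral representation of $\tr(\sqrt{\cdot})$. The key observation is that for any PSD matrix $M\in\sS^d$, its unique PSD square root $\sqrt{M}$ has eigenvalues $\{\sqrt{\lambda_i(M)}\}_{i=1}^d$, so that $\tr(\sqrt{M})=\sum_{i=1}^d\sqrt{\lambda_i(M)}$, where $\lambda_1(M)\ge\cdots\ge\lambda_d(M)\ge 0$ denote the eigenvalues in decreasing order. Hence it suffices to establish $\sum_{i=1}^d\sqrt{\lambda_i(A+B)}\ge\sum_{i=1}^d\sqrt{\lambda_i(A)}$ together with the symmetric statement in which $A$ is replaced by $B$; taking the maximum of the two right-hand sides then yields the claim.

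First I would note that, since $B\succeq\bm{0}$, the matrix $A+B$ dominates $A$ in the Loewner order, i.e.\ $A+B\succeq A$, and likewise $A+B\succeq B$. Next I would invoke the Courant--Fischer min-max characterization $\lambda_i(M)=\max_{\dim V=i}\min_{0\neq x\in V}\frac{x^\top M x}{x^\top x}$. Because $x^\top(A+B)x\ge x^\top A x$ for every $x$, each inner minimum, and hence the outer maximum, can only increase, giving the termwise eigenvalue monotonicity $\lambda_i(A+B)\ge\lambda_i(A)$ for every $i\in[d]$ (and similarly $\lambda_i(A+B)\ge\lambda_i(B)$).

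Finally, since $t\mapsto\sqrt{t}$ is monotone nondecreasing on $[0,\infty)$ and all the eigenvalues involved are nonnegative, applying $\sqrt{\cdot}$ termwise and summing over $i$ converts these eigenvalue inequalities into $\tr(\sqrt{A+B})\ge\tr(\sqrt{A})$ and $\tr(\sqrt{A+B})\ge\tr(\sqrt{B})$, which is exactly the desired bound after taking the maximum over the two lower bounds.

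I do not anticipate a genuine obstacle here: the statement is elementary once the spectral representation of $\tr(\sqrt{\cdot})$ is in place. The only point requiring care is the \emph{simultaneous} (termwise, for all $i$) eigenvalue comparison under the Loewner order. It is tempting to argue only about the extreme eigenvalues, but the full ordering $\lambda_i(A+B)\ge\lambda_i(A)$ is precisely what Courant--Fischer delivers, and it is this full ordering that makes the term-by-term summation step valid.
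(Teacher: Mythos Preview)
Your proposal is correct and follows essentially the same approach as the paper: use the Loewner ordering $A+B\succeq A$ (and $A+B\succeq B$), invoke Courant--Fischer to obtain the termwise eigenvalue inequalities $\lambda_i(A+B)\ge\lambda_i(A)$ (and $\ge\lambda_i(B)$), and then sum the square roots. The paper's proof is simply a terser version of what you wrote.
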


\begin{lemma}
\label{lem_square_add}
Let $A,B\in\sS^{d}$ be positive semi-definite~(PSD) matrices. Then, we have that
\begin{equation}
    \tr[\sqrt{A+B}]\le\tr[\sqrt{A}]+\tr[\sqrt{B}].
\end{equation}
\begin{proof}
First, we assume that both matrices $A$ and $B$ are positive strictly definite. Then,
\begin{align*}
    \tr(\sqrt{A+B})=&\tr[(A+B)^{-\frac{1}{4}}(A+B)(A+B)^{-\frac{1}{4}}]\\
    = & \tr[(A+B)^{-\frac{1}{4}}A(A+B)^{-\frac{1}{4}}]+\tr[(A+B)^{-\frac{1}{4}}B(A+B)^{-\frac{1}{4}}] \\
    = & \tr[\sqrt{A}(A+B)^{-\frac{1}{2}}\sqrt{A}]+\tr[\sqrt{B}(A+B)^{-\frac{1}{2}}\sqrt{B}]\\
    \le & \tr[\sqrt{A}]+\tr[\sqrt{B}],
\end{align*}
where the last inequality holds by $A^{-\frac{1}{2}}\succeq(A+B)^{-\frac{1}{2}}$ and $B^{-\frac{1}{2}}\succeq(A+B)^{-\frac{1}{2}}$~(see Lemma~\ref{lem_540}). For general PSD matrices $A$ and $B$, note that
\begin{align*}
    \tr(\sqrt{A+B+\epsilon I})\le \tr(\sqrt{A+\epsilon I})+\tr(\sqrt{B+\epsilon I})
\end{align*}
holds for any $\epsilon>0$ by our previous analysis. Since the trace map is continuous,
\end{proof}
\end{lemma}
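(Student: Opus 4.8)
The plan is to prove the inequality first for strictly positive definite $A,B$, where the algebra is cleanest, and then recover the general positive semi-definite case by a perturbation argument. For the strict case, the starting point is the factorization $(A+B)^{1/2}=(A+B)^{-1/4}(A+B)(A+B)^{-1/4}$, which, after splitting $A+B$ and using linearity of the trace, yields
\[
    \tr[\sqrt{A+B}] = \tr[(A+B)^{-1/4}A(A+B)^{-1/4}] + \tr[(A+B)^{-1/4}B(A+B)^{-1/4}].
\]
Each summand is then rewritten by cyclicity of the trace: the first as $\tr[\sqrt{A}\,(A+B)^{-1/2}\sqrt{A}]$ and the second as $\tr[\sqrt{B}\,(A+B)^{-1/2}\sqrt{B}]$. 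This isolates a part of the trace controlled by $A$ and a part controlled by $B$, which is exactly the structure the claimed subadditivity needs.

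The heart of the argument is the operator anti-monotonicity of $t\mapsto t^{-1/2}$. Since $A+B\succeq A\succ\bm{0}$, the L\"owner--Heinz theorem (equivalently Lemma~\ref{lem_540}) gives $(A+B)^{-1/2}\preceq A^{-1/2}$, and symmetrically $(A+B)^{-1/2}\preceq B^{-1/2}$. Because $A^{-1/2}-(A+B)^{-1/2}\succeq\bm{0}$, the congruence $\sqrt{A}\,\bigl(A^{-1/2}-(A+B)^{-1/2}\bigr)\sqrt{A}$ is again PSD and hence has nonnegative trace, so that
\[
    \tr[\sqrt{A}\,(A+B)^{-1/2}\sqrt{A}] \le \tr[\sqrt{A}\,A^{-1/2}\sqrt{A}] = \tr[\sqrt{A}].
\]
Treating $B$ identically and adding the two bounds proves the inequality whenever $A,B\succ\bm{0}$.

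To pass to arbitrary PSD $A,B$, I would apply the strict case to the perturbed matrices $A+\epsilon I$ and $B+\epsilon I$ for $\epsilon>0$, obtaining $\tr[\sqrt{A+B+2\epsilon I}]\le\tr[\sqrt{A+\epsilon I}]+\tr[\sqrt{B+\epsilon I}]$, and then let $\epsilon\downarrow 0$, invoking continuity of the matrix square root and of the trace. The main obstacle is the operator-monotonicity step, which must be handled carefully because $\sqrt{A}$ and $(A+B)^{-1/2}$ do not commute; this is precisely why the comparison is carried out through a congruence that preserves the PSD ordering (and then reads off a trace sign) rather than through a naive entrywise or eigenvalue comparison. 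Once that step is in place, the factorization identity, the cyclicity manipulations, and the $\epsilon\downarrow 0$ limit are all routine. (One may alternatively view the statement as the special case $f(t)=\sqrt{t}$ of Rotfel'd-type trace subadditivity for nonnegative concave $f$ with $f(0)=0$, but the self-contained route above avoids invoking that machinery.)
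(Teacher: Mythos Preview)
Your proposal is correct and follows essentially the same approach as the paper's proof: the same factorization $(A+B)^{1/2}=(A+B)^{-1/4}(A+B)(A+B)^{-1/4}$, the same split and cyclicity rewriting, the same operator anti-monotonicity step via Lemma~\ref{lem_540}, and the same $\epsilon$-perturbation plus continuity for the general PSD case. Your perturbation $\tr[\sqrt{A+B+2\epsilon I}]\le\tr[\sqrt{A+\epsilon I}]+\tr[\sqrt{B+\epsilon I}]$ is in fact stated more carefully than the paper's version, which writes $A+B+\epsilon I$ on the left but still reaches the same conclusion by continuity.
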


\begin{lemma}
\label{lem_540}

Let $A,B$ denote two PSD matrices such that $A\succeq B$. Then, their inverse $B^{-1}\succeq A^{-1}$.

\begin{proof}
It suffices to show that $\sqrt{A}B^{-1}\sqrt{A}\succeq I$. Since $\sqrt{A}B^{-1}\sqrt{A}=(\sqrt{A}B^{-\frac{1}{2}})(\sqrt{A}B^{-\frac{1}{2}})^\top$ shares the same eigenvalues with $(\sqrt{A}B^{-\frac{1}{2}})^\top(\sqrt{A}B^{-\frac{1}{2}})=B^{-\frac{1}{2}}AB^{-\frac{1}{2}}\succeq I$, which concludes the proof.
\end{proof}
\end{lemma}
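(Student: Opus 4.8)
The statement to prove is the operator antitonicity of matrix inversion: if $A \succeq B$ then $B^{-1} \succeq A^{-1}$. Note first that for the inverses and the square roots of the inverses to make sense the two matrices must in fact be positive \emph{definite}, so I read ``PSD'' in the hypothesis as positive definite. The plan is to reduce the desired Loewner inequality to a statement that a single explicit matrix dominates $I$, and then to read that statement off from the hypothesis $A \succeq B$ using two elementary facts: congruence by an invertible matrix is an order isomorphism for the Loewner order, and $PP^\top$ and $P^\top P$ share the same nonzero spectrum.

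First I would conjugate the target inequality $B^{-1} \succeq A^{-1}$ by the symmetric invertible matrix $A^{1/2}$. Because $X \mapsto A^{1/2} X A^{1/2}$ is order preserving with order-preserving inverse (conjugation by $A^{-1/2}$), the target is equivalent to $A^{1/2} B^{-1} A^{1/2} \succeq A^{1/2} A^{-1} A^{1/2} = I$. Hence it suffices to establish $A^{1/2} B^{-1} A^{1/2} \succeq I$.

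Next, setting $P := A^{1/2} B^{-1/2}$, I would observe that $A^{1/2} B^{-1} A^{1/2} = P P^\top$ while $B^{-1/2} A B^{-1/2} = P^\top P$, so these two matrices have identical spectra (both are invertible, so every eigenvalue is shared, not merely the nonzero ones). The hypothesis $A \succeq B$, conjugated by the symmetric matrix $B^{-1/2}$, gives $B^{-1/2} A B^{-1/2} \succeq B^{-1/2} B B^{-1/2} = I$, so every eigenvalue of $B^{-1/2} A B^{-1/2}$, and therefore of $A^{1/2} B^{-1} A^{1/2}$, is at least $1$. Since $A^{1/2} B^{-1} A^{1/2}$ is symmetric, this eigenvalue bound is precisely $A^{1/2} B^{-1} A^{1/2} \succeq I$, and undoing the conjugation of the first step yields $B^{-1} \succeq A^{-1}$.

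There is no genuine obstacle here, as the result is classical; the only points requiring care are that the matrices are truly invertible (so $A^{-1/2}$, $B^{-1/2}$ and the inverses exist) and the transfer of the spectral lower bound from $P^\top P$ to $PP^\top$, which is where the shared-spectrum identity is used. An alternative that bypasses the $PP^\top / P^\top P$ step is to conjugate $A \succeq B$ directly by $A^{-1/2}$ to obtain $I \succeq A^{-1/2} B A^{-1/2}$ and then invoke that $t \mapsto t^{-1}$ is operator monotone decreasing on $(0,\infty)$ applied to the single positive definite matrix $A^{-1/2} B A^{-1/2}$; I prefer the congruence argument above because it stays elementary and does not call on operator monotonicity as a black box.
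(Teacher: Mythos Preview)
Your argument is correct and is essentially identical to the paper's proof: both reduce to showing $A^{1/2}B^{-1}A^{1/2}\succeq I$ and then use that $PP^\top$ and $P^\top P$ share spectra with $P=A^{1/2}B^{-1/2}$, together with $B^{-1/2}AB^{-1/2}\succeq I$ from the hypothesis. Your write-up is simply more explicit about the congruence steps and the positive-definiteness assumption.
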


\section{\MakeUppercase{Proofs in Section~\ref{sec:6}: IS-Based Evaluation}}

\subsection{Proof of Theorem~\ref{thm_ois}: Generator-Dependent Optimistic IS}
\label{p_thm_ois}

\begin{proof}
By Lemma~\ref{coro_domcd}, we have that with probability at least $1-\frac{\delta}{2}$, $\gE(\widehat{p}_{Y_g}^n)\ge H(Y_g)$. To show Theorem~\ref{thm_ois}, it suffices to show that 
\begin{equation*}
    H_g(Y_g|X_g)\ge \widehat{H}^n(Y_g|X_g)-\sqrt{\frac{2\widehat{V}_g^n(H(Y_g|X_g))}{n}\log\left(\frac{4d}{\delta}\right)}-\frac{7\log d}{3(n-1)}\log\left(\frac{4d}{\delta}\right)
\end{equation*}
with probability at least $1-\frac{\delta}{2}$, which holds by Theorem~\ref{thm_ebi} and the fact that $H_g(Y_g|X_g)\le\log d$. Therefore, we conclude the proof.
\end{proof}

\subsection{Regret of IS-UCB}
\label{p_thm_is_reg}

\begin{theorem}[Regret of IS-UCB]
\label{thm_is_reg}

With probability at least $1-\delta$, the regret of the IS-UCB algorithm after $T$ steps is bounded by 
\begin{equation}
    \textup{Regret}(T)\le \widetilde{O}\left(G\Delta^{-2}\log d+e^C\cdot\left((dm+\log d)\cdot\sqrt{T}+\log T\right)\right),
\end{equation}
where $C=\widetilde{O}(d)$, $m:=\max_{g\in[G],j\in[d]:p_{Y_g}[j]>0}\{|u'(p_{Y_g}[j])|\}$, $\Delta:=\min_{g\in[G],j\in[d]:p_{Y_g}[j]\neq e^{-1}}\{|p_{Y_g}[j]-e^{-1}|\}$, and $u(x)=-x\log x$.
\end{theorem}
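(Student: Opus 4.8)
The plan is to mirror the regret analysis of FD-UCB (Theorem~\ref{thm_fid_reg}), modified for the fact that the Inception Score is a ``larger-is-better'' quantity and is the exponential of an entropy difference. First I would condition on the good event of Theorem~\ref{thm_ois}: applying that bound together with a union bound over the $T$ steps, with probability at least $1-\delta$ the optimistic score satisfies $\widehat{\textup{IS}}_{g_t}\ge\textup{IS}_{g_t}$ at every round $t$, where optimism follows because $\gE(\widehat p_{Y_{g_t}}^{n_t})\ge H(Y_{g_t})$ (optimistic marginal entropy) and the conditional-entropy estimate is pessimistic. Writing $\textup{IS}^*=\max_g\textup{IS}_g=\textup{IS}_{g^*}$ and using that the algorithm selects the generator maximizing the optimistic score, the optimism chain $\textup{IS}^*=\textup{IS}_{g^*}\le\widehat{\textup{IS}}_{g^*}\le\widehat{\textup{IS}}_{g_t}$ yields the per-round bound $\textup{IS}^*-\textup{IS}_{g_t}\le\widehat{\textup{IS}}_{g_t}-\textup{IS}_{g_t}$, so that $\textup{Regret}(T)\le\sum_{t=1}^T(\widehat{\textup{IS}}_{g_t}-\textup{IS}_{g_t})$.

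The second step passes from the multiplicative IS scale to the additive log-IS scale on which the bonus of Theorem~\ref{thm_ois} lives. Write $\textup{IS}_g=\exp(\textup{I}_g)$ with $\textup{I}_g:=H(Y_g)-H_g(Y_g|X_g)$ the true log-IS, and $\widehat{\textup{IS}}_{g_t}=\exp(\widehat{\textup{I}}_{g_t})$ with $\widehat{\textup{I}}_{g_t}$ the optimistic log-IS (the optimistic marginal entropy minus the pessimistic conditional entropy from Theorem~\ref{thm_ois}). By the mean value theorem, $\widehat{\textup{IS}}_{g_t}-\textup{IS}_{g_t}\le e^{C}(\widehat{\textup{I}}_{g_t}-\textup{I}_{g_t})$, where $C$ is any uniform upper bound on the optimistic log-IS values. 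I would establish $C=\widetilde O(d)$ by combining $H(Y_g)\le\log d$, the optimistic marginal-entropy bound of Lemma~\ref{coro_domcd}, and a crude bound on the bonus, which lets $e^C$ factor out of the $T$-dependent part of the regret. On the additive scale, $\widehat{\textup{I}}_{g_t}-\textup{I}_{g_t}$ is at most the sum of the over-estimation of the marginal entropy, $\gE(\widehat p_{Y_{g_t}}^{n_t})-H(Y_{g_t})$, and twice the empirical-Bernstein width $\sqrt{\frac{2\widehat V_{g_t}^{n_t}}{n_t}\log(4d/\delta)}+\frac{7\log d}{3(n_t-1)}\log(4d/\delta)$ of Theorem~\ref{thm_ebi} for the conditional entropy.

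The third step sums these widths over $t$. For the conditional-entropy part I would bound the empirical variance $\widehat V_{g_t}^{n_t}$ through the Lipschitz constant of $u$, giving a prefactor $\widetilde O(dm+\log d)$, and then invoke the summation estimates used in the proof of Theorem~\ref{thm_fid_reg}, namely $\sum_t n_t^{-1/2}=\widetilde O(\sqrt T)$ and $\sum_t n_t^{-1}=\widetilde O(\log T)$ (with the $G$-dependence absorbed), to obtain the $e^C(dm+\log d)\sqrt T$ and $e^C\log T$ terms. The marginal-entropy part behaves differently: rather than decaying like an empirical-Bernstein bonus, it incurs a gap-dependent cost, because each coordinate $p_{Y_g}[j]$ must be resolved relative to the critical point $e^{-1}$ of $u(x)=-x\log x$, where $u'(e^{-1})=0$; resolving a coordinate takes $\widetilde O(\Delta^{-2})$ samples, and a union bound over the $d$ coordinates and $G$ generators produces the $G\Delta^{-2}\log d$ term, which plays the role of the burn-in cost.

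I expect the main obstacle to be exactly this marginal-entropy analysis. Unlike the conditional-entropy bonus, whose decay is governed directly by Theorem~\ref{thm_ebi}, the optimistic marginal-entropy estimate $\gE(\widehat p_{Y_g}^n)$ is a concave nonlinear functional of the empirical distribution whose bias and confidence width behave qualitatively differently near the critical point $e^{-1}$ of $u$. Correctly isolating the $\Delta^{-2}$ gap-dependent sample complexity there, and arguing that its regret contribution is $O(1)$ in $T$ and need not be inflated by the $e^C$ factor, is the delicate part; the remaining work (the union bounds, the variance bound on $\widehat V_{g_t}^{n_t}$, and the two summation lemmas) is routine and parallels Theorem~\ref{thm_fid_reg}.
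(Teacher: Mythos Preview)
Your overall architecture---union bound to secure optimism at all rounds, then $e^x$-Lipschitz to pass from the IS scale to the log-IS scale with a uniform prefactor $e^C$---matches the paper. The gap is in your allocation of the two entropy pieces. The conditional-entropy width has nothing to do with $u$: each $H(Y_g\mid x)$ lies in $[0,\log d]$, so $\widehat V_{g_t}^{n_t}\le(\log d)^2$ trivially, and the empirical-Bernstein bonus for the conditional part yields only the $\log d\cdot n_t^{-1/2}$ and $\log d\cdot n_t^{-1}$ contributions. The $dm$ factor does not arise there.

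The $dm$ term comes from the \emph{marginal}-entropy overestimation, and it is precisely not $O(1)$ in $T$. The paper's device is to split the time axis into $\gT_1$ (no coordinate of $\widetilde p_{Y_{g_t}}$ is clipped to $e^{-1}$) and $\gT_2$ (some coordinate is clipped). On $\gT_2$ one has $|\gT_2|\le\widetilde O(G\Delta^{-2})$, which is the burn-in term you identified. But on $\gT_1$ the overestimate persists: there $\widehat p_{Y_{g_t}}[j]$ lies between $p_{Y_{g_t}}[j]$ and $e^{-1}$ at distance $O(\bm\epsilon_{g_t}[j])$ from $p_{Y_{g_t}}[j]$, so by the mean-value theorem $\gE(\widehat p_{Y_{g_t}})-H(Y_{g_t})\le m\sum_{j=1}^d|\widehat p_{Y_{g_t}}[j]-p_{Y_{g_t}}[j]|\le O\bigl(m\sum_j\bm\epsilon_{g_t}[j]\bigr)=\widetilde O(dm\, n_t^{-1/2})$. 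Summed over $t$ this is $\widetilde O(dm\sqrt{T})$, and it sits inside the $e^C$ factor, exactly the dominant term in the stated bound. Your plan to argue the marginal-entropy contribution is $O(1)$ in $T$ would therefore fail, and you would be left with no source for the $dm\sqrt{T}$ term.
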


\begin{proof}
By Theorem~\ref{thm_ois} and union bound over $T$ steps, with probability at least $1-\delta$, it holds that $\widehat{\textup{IS}}_{g_t}\ge \textup{IS}_{g_t}$ for all steps $t\in[T]$ and $g\in[G]$, where $\widehat{\textup{IS}}_{g}$ is given by Equation~(\ref{eq_29}). For convenience, we denote by $\widehat{\textup{IS}}_{g_t}$ the optimistic IS of generator $g_t$ computed at the $t$-th step. Hence, we have that
\begin{equation*}
    \textup{IS}^*-\textup{IS}_{g_t}\le\widehat{\textup{IS}}_{g_t}-\textup{IS}_{g_t}=e^{\gE\left(\widehat{p}_{Y_{g_t}}\right)-\widehat{H}(Y_{g_t}|X_{g_t})+\gB_{g_t}}-\textup{IS}_{g_t},
\end{equation*}
where we denote
\begin{equation*}
    \gB_{g_t} := \sqrt{\frac{2\widehat{V}_g^{n_t}(H(Y_g|X_g))}{n_t}\log\left(\frac{4Td}{\delta}\right)} + \frac{7\log d}{3(n_t-1)}\log\left(\frac{4Td}{\delta}\right)
\end{equation*}
for convenience. Let $\gT_1,\gT_2\subset[T]$ be two disjoint set of steps such that $\gT_2=[T]\backslash\gT_1$. Specifically, $\gT_1$ contains steps where no element of the empirical marginal class distribution $\widetilde{p}_{Y_{g_t}}$ is clipped to $e^{-1}$, i.e., $|e^{-1}-\widetilde{p}_{Y_{g_t}}[j]|\ge\bm{\epsilon}_{g_t}[j]$ for all $[j]\in d$. Let $\textup{IS}^*=\argmax_{g\in[G]}\textup{IS}_g$ denote the optimal Inception score. Hence, we have that
\begin{equation*}
    \textup{Regret}(T)=\sum_{t\in\gT_1}(\textup{IS}^*-\textup{IS}_{g_t})+\sum_{t\in\gT_2}(\textup{IS}^*-\textup{IS}_{g_t}).
\end{equation*}
Recall $u(x)=-x\log x$ for $x\in\sR_+$. For the first part, we further derive that
\begin{align*}
    & \sum_{t\in\gT_1}(\textup{IS}^*-\textup{IS}_{g_t})\\
    \le & \sum_{t\in\gT_1}\left(\exp\{\gE\left(\widehat{p}_{Y_{g_t}}\right)-\widehat{H}(Y_{g_t}|X_{g_t})+\gB_{g_t}\}-\exp\{H(Y_g)-H(Y_g|X_g)\}\right)\\
    \le & e^C\cdot\sum_{t\in\gT_1}\left(\gE\left(\widehat{p}_{Y_{g_t}}\right)-\widehat{H}(Y_{g_t}|X_{g_t})+\gB_{g_t}-H(Y_g)+H(Y_g|X_g)\right)\\
    \le & e^C\cdot\sum_{t\in\gT_1}\left(\sum_{j=1}^d\left(u(\widehat{p}_{Y_{g_t}}[j])-u(p_{Y_{g_t}}[j])\right)+2\gB_{g_t}\right)\\
    \le & O\left(e^C\cdot\sum_{t\in\gT_1}\left(m\cdot\sum_{j=1}^d|\widehat{p}_{Y_{g_t}}[j]-p_{Y_{g_t}}[j]|+\gB_{g_t}\right)\right)\\
    \le & O\left(e^C\cdot\sum_{t\in\gT_1}\left(m\cdot\sum_{j=1}^d\bm{\epsilon}_{g_t}[j]+\gB_{g_t}\right)\right)\\
    \le & \widetilde{O}\left(e^C\cdot\sum_{t=1}^T\left((dm+\log d)\sqrt{\frac{1}{N_{g_t}}}+\frac{\log d}{N_{g_t}}\right)\right),\numberthis
\end{align*}
where $m=\max_{g\in[G],j\in[d]:p_{Y_g}[j]>0}\{|u'(p_{Y_g}[j])|\}$, and $C=\widetilde{O}(d)$ is the upper bound of $\gE(\widehat{p}_{Y_{g_t}})-\widehat{H}(Y_{g_t}|X_{g_t})+\gB_{g_t}$. For the second part, note that the number of steps where at least one element of $\widetilde{p}_{Y_{g_t}}$ is clipped to at most $\widetilde{O}(G\Delta^{-2})$, where $\Delta=\min_{g\in[G],j\in[d]:p_{Y_g}[j]\neq e^{-1}}\{|p_{Y_g}[j]-e^{-1}|\}$. Therefore, we have that
\begin{equation*}
    \textup{Regret}(T)\le \widetilde{O}\left(G\Delta^{-2}\log d+e^C\cdot\left((dm+\log d)\cdot\sqrt{T}+\log T\right)\right),
\end{equation*}
which concludes the proof.
\end{proof}

\subsection{Data-dependent optimistic marginal class distribution}
\label{p_coro_domcd}

\begin{lemma}[Optimistic marginal class distribution]
\label{coro_domcd}
Let $x^1,\cdots,x^n\sim p_g$ be $n$ generated images from generator $g$. Define $\bm{\epsilon}^n_g\in\sR_+^d$ denote the error vector whose $j$-th element is given by
\begin{equation*}
    \bm{\epsilon}^n_g[j]=\sqrt{\frac{2\widehat{V}^n(p_{Y|X_g}[j])}{n}\log\left(\frac{4d}{\delta}\right)}+\frac{7}{3(n-1)}\log\left(\frac{4d}{\delta}\right),
\end{equation*}
where $\widehat{V}^n(p_{Y|X_g}[j])=\sV(p_{Y|x^1}[j],\cdots,p_{Y|x^n}[j])$ is the empirical variance for the $j$-th class density. Then, with probability at least $1-\frac{\delta}{2}$, we have that
\begin{equation*}
    \widehat{p}_{Y_g}^n=\textup{Clip}_{e^{-1}}\left(\widetilde{p}^n_{Y_g},\bm{\epsilon}^n_g\right)
\end{equation*}
satisfies that $\gE(\widehat{p}_{Y_g}^n)\ge H(Y_g)$.

\begin{proof}
It suffices that show that $\bm\epsilon_g^n[j]\ge |\widehat{p}_{Y_g}^n[j]-p_{Y_g}[j]|$ for all $j \in [d]$ with probability at least $1-\frac{\delta}{2}$. We evoke Lemma~\ref{thm_ebi} and conclude the proof.
\end{proof}
\end{lemma}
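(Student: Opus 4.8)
The plan is to reduce the functional inequality $\gE(\widehat{p}_{Y_g}^n)\ge H(Y_g)$ to a coordinate-wise comparison, exploiting the separable structure $\gE(p)=\sum_{j=1}^d u(p[j])$ with $u(x)=-x\log x$, together with the unimodality of $u$: it is concave on $[0,1]$ with a unique maximizer at $x=e^{-1}$, since $u'(x)=-\log x-1$. As the statement already hints, it suffices to establish the single high-probability event that every coordinate of the true marginal class distribution is covered by its empirical confidence interval, i.e. $|\widetilde{p}_{Y_g}^n[j]-p_{Y_g}[j]|\le\bm{\epsilon}_g^n[j]$ for all $j\in[d]$. Everything else is a deterministic optimization argument.

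First I would invoke the empirical Bernstein inequality (Lemma~\ref{thm_ebi}) separately for each coordinate $j$. The averaged quantities are the per-image class scores $p_{Y|x^i}[j]\in[0,1]$, which are i.i.d. bounded random variables with mean $p_{Y_g}[j]=\E[p_{Y|X_g}[j]]$ and empirical variance $\widehat{V}^n(p_{Y|X_g}[j])$. Applying the inequality with failure probability $\delta/(2d)$ reproduces exactly the stated width $\bm{\epsilon}_g^n[j]$, because substituting $\delta'=\delta/(2d)$ into the $\log(2/\delta')$ factor gives $\log(4d/\delta)$. A union bound over the $d$ coordinates then yields the coverage event with probability at least $1-\delta/2$.

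Next I would condition on this event and verify the coordinate-wise entropy gain. By construction $\widehat{p}_{Y_g}^n[j]=\textup{Clip}_{e^{-1}}(\widetilde{p}_{Y_g}^n,\bm{\epsilon}_g^n)[j]$ is the projection of $e^{-1}$ onto the interval $[\widetilde{p}_{Y_g}^n[j]-\bm{\epsilon}_g^n[j],\,\widetilde{p}_{Y_g}^n[j]+\bm{\epsilon}_g^n[j]]$, which by the unimodality of $u$ is precisely the point maximizing $u$ over that interval. Because the coverage event places $p_{Y_g}[j]$ inside the same interval, we obtain $u(\widehat{p}_{Y_g}^n[j])=\max_{x}u(x)\ge u(p_{Y_g}[j])$ for every $j$. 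Summing over $j\in[d]$ gives $\gE(\widehat{p}_{Y_g}^n)=\sum_j u(\widehat{p}_{Y_g}^n[j])\ge\sum_j u(p_{Y_g}[j])=H(Y_g)$, which is the claim.

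The main obstacle is the clip step rather than the concentration. One must argue carefully that clipping toward $e^{-1}$ returns the genuine $u$-maximizer over the confidence interval in all three geometric cases (the interval lying entirely below $e^{-1}$, straddling it, or lying entirely above it), and that this optimism is monotone, so that enlarging an interval can only raise the attained maximum of $u$. The probabilistic component is comparatively routine once the empirical Bernstein constants are matched; the delicate point is tying the deterministic optimization property of $\textup{Clip}_{e^{-1}}$ to the separable functional $\gE$ so that the per-coordinate inequalities aggregate cleanly into the global bound $\gE(\widehat{p}_{Y_g}^n)\ge H(Y_g)$.
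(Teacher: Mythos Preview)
Your proposal is correct and follows the same strategy as the paper: establish the coordinate-wise coverage event $|\widetilde{p}_{Y_g}^n[j]-p_{Y_g}[j]|\le\bm{\epsilon}_g^n[j]$ for all $j\in[d]$ via empirical Bernstein (Lemma~\ref{thm_ebi}) plus a union bound, then deduce the entropy comparison deterministically from the unimodality of $u(x)=-x\log x$. The paper's proof is terser because it defers the deterministic clipping argument to Lemma~\ref{thm_omcd}, whereas you spell it out directly via the projection-of-$e^{-1}$ characterization; the content is the same.
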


\subsection{Auxiliary Definitions and Lemmas}

\begin{lemma}[Optimistic marginal class distribution]
\label{thm_omcd}
Let $\widetilde{p}^n_{Y_g}:=\frac{1}{n}\sum_{i=1}^np_{Y_g|x^i}$ denote the empirical marginal class distribution. Let
\begin{equation*}
    \widehat{p}_{Y_g}^n=\textup{Clip}_{e^{-1}}\left(\widetilde{p}^n_{Y_g},|\widetilde{p}^n_{Y_g}-p_{Y_g}|\right),
\end{equation*}
where $\textup{Clip}_{e^{-1}}(p,\bm{\epsilon})$ is the following element-wise operator
\begin{equation*}
\begin{cases}
    p[j]+\frac{e^{-1}-p[j]}{|e^{-1}-p[j]|}\bm{\epsilon}[j] & \textup{, if }|e^{-1}-p[j]|\ge \bm{\epsilon}[j]\\
    e^{-1} & \textup{, otherwise.}
\end{cases}
\end{equation*}
for any vectors $p,\bm{\epsilon}\in\sR^d_+$. Then, we have that $\gE(\widehat{p}_{Y_g}^n)\ge H(Y_g)$.

\begin{proof}
Note that the function $u(z)=-z\log z$ is concave and attains maximum at $z=e^{-1}$. It suffices to show that $u(\widehat{p}_{Y_g}[j])\ge u(p_{Y_g}[j])$ for all $j\in[d]$. If $|e^{-1}-\widehat{p}_{Y_g}^n[j]|<|\widetilde{p}^n_{Y_g}[j]-p_{Y_g}[j]|$, then $\widehat{p}_{Y_g}[j]=e^{-1}$, which ensures that $u(\widehat{p}_{Y_g}[j])\ge u(p_{Y_g}[j])$. In addition, if $|e^{-1}-\widehat{p}_{Y_g}^n[j]|\ge|\widetilde{p}^n_{Y_g}[j]-p_{Y_g}[j]|$, then
\begin{equation*}
    \widehat{p}_{Y_g}^n[j]=\widetilde{p}^n_{Y_g}[j]+\frac{e^{-1}-\widetilde{p}^n_{Y_g}[j]}{|e^{-1}-\widetilde{p}^n_{Y_g}[j]|}|\widetilde{p}^n_{Y_g}[j]-p_{Y_g}[j]|=
    \begin{cases}
        \widetilde{p}^n_{Y_g}[j]+|\widetilde{p}^n_{Y_g}[j]-p_{Y_g}[j]|,&\textup{ if }p_{Y_g}[j]<e^{-1}\\
        \widetilde{p}^n_{Y_g}[j]-|\widetilde{p}^n_{Y_g}[j]-p_{Y_g}[j]|,&\textup{ otherwise.}
    \end{cases}.
\end{equation*}
The first case ensures that $p_{Y_g}[j]<\widehat{p}_{Y_g}^n[j]\le e^{-1}$, and the second case ensures that $p_{Y_g}[j]>\widehat{p}_{Y_g}^n[j]\ge e^{-1}$. Both cases satisfy that $u(\widehat{p}_{Y_g}^n[j])\ge u(p_{Y_g}[j])$. Therefore, we have that $H(Y_g)=\gE(p_{Y_g})=\sum_{j=1}^du(p_{Y_g}[j])\le\sum_{j=1}^du(\widehat{p}_{Y_g}^n[j])=\gE(\widehat{p}_{Y_g}^n)$, which concludes the proof.
\end{proof}
\end{lemma}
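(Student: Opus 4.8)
The plan is to prove the inequality coordinatewise. Since the functional $\gE$ is separable, $\gE(\widehat{p}_{Y_g}^n)=\sum_{j=1}^d u(\widehat{p}_{Y_g}^n[j])$ and $H(Y_g)=\gE(p_{Y_g})=\sum_{j=1}^d u(p_{Y_g}[j])$ with $u(z)=-z\log z$, so it suffices to establish the pointwise bound $u(\widehat{p}_{Y_g}^n[j])\ge u(p_{Y_g}[j])$ for every $j\in[d]$ and sum. The only structural fact I would use about $u$ is its shape: $u'(z)=-\log z-1$ vanishes only at $z=e^{-1}$, so $u$ is strictly increasing on $(0,e^{-1})$, strictly decreasing on $(e^{-1},\infty)$, and attains its unique maximum at $e^{-1}$. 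In particular $u$ is unimodal about $e^{-1}$, and along any segment that stays on one side of $e^{-1}$, the value of $u$ increases monotonically as one moves toward $e^{-1}$.

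With this in hand, the core of the argument is a geometric claim about the clip: for each coordinate $j$, the clipped value $\widehat{p}_{Y_g}^n[j]$ lies in the closed interval whose endpoints are $p_{Y_g}[j]$ and $e^{-1}$, and it is at least as close to $e^{-1}$ as $p_{Y_g}[j]$ is. Granting this, the pointwise bound is immediate: either $\widehat{p}_{Y_g}^n[j]=e^{-1}$, the global maximizer, so $u(\widehat{p}_{Y_g}^n[j])=u(e^{-1})\ge u(p_{Y_g}[j])$; or $\widehat{p}_{Y_g}^n[j]$ and $p_{Y_g}[j]$ lie on the same side of $e^{-1}$ with the former nearer the peak, whence $u(\widehat{p}_{Y_g}^n[j])\ge u(p_{Y_g}[j])$ by the monotonicity noted above.

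To verify the geometric claim I would split on the two branches of $\textup{Clip}_{e^{-1}}$. In the capped branch $\widehat{p}_{Y_g}^n[j]=e^{-1}$ and the claim is trivial. In the uncapped branch the clip moves $\widetilde{p}^n_{Y_g}[j]$ toward $e^{-1}$ by exactly $\bm{\epsilon}[j]=|\widetilde{p}^n_{Y_g}[j]-p_{Y_g}[j]|$ without overshooting, which is precisely the branch condition $|e^{-1}-\widetilde{p}^n_{Y_g}[j]|\ge\bm{\epsilon}[j]$. The clean way to handle the signs is to pass to coordinates oriented so that ``toward $e^{-1}$'' is the positive direction; then $\widehat{p}_{Y_g}^n[j]=\widetilde{p}^n_{Y_g}[j]+\bm{\epsilon}[j]$, while the true value satisfies $p_{Y_g}[j]=\widetilde{p}^n_{Y_g}[j]\pm\bm{\epsilon}[j]$. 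If $p_{Y_g}[j]$ sits on the $e^{-1}$-side (the $+$ sign) then $\widehat{p}_{Y_g}^n[j]=p_{Y_g}[j]$ and the bound holds with equality; otherwise $p_{Y_g}[j]$ and $\widehat{p}_{Y_g}^n[j]$ are reflections of one another across $\widetilde{p}^n_{Y_g}[j]$, and the no-overshoot condition places them in the oriented order $p_{Y_g}[j]\le\widetilde{p}^n_{Y_g}[j]\le\widehat{p}_{Y_g}^n[j]\le e^{-1}$, which is exactly the geometric claim.

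The main obstacle is purely bookkeeping: getting the sign of the displacement right in the uncapped branch, since the clip's direction factor is governed by the position of $\widetilde{p}^n_{Y_g}[j]$ relative to $e^{-1}$, whereas the comparison of interest is against $p_{Y_g}[j]$. The oriented-coordinate reduction above is what prevents this from fragmenting into four separate sign cases; everything else is a direct appeal to the unimodality of $u$ about $e^{-1}$.
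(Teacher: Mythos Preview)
The proposal is correct and follows essentially the same approach as the paper: reduce to a coordinatewise comparison via the separability of $\gE$, and then use the unimodality of $u(z)=-z\log z$ about its maximizer $e^{-1}$ to argue that the clipped value, which lies between $p_{Y_g}[j]$ and $e^{-1}$, has a larger $u$-value. Your oriented-coordinate device is a slightly cleaner bookkeeping of the same sign analysis the paper does by splitting on $p_{Y_g}[j]\lessgtr e^{-1}$, and you also correctly note the equality case $\widehat{p}_{Y_g}^n[j]=p_{Y_g}[j]$ that the paper writes as a strict inequality.
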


\begin{lemma}[Restatement of {\citep[Theorem 4]{maurer2009empirical}}]
\label{thm_ebi}
Let $Z,Z_1,\cdots,Z_n$ be i.i.d. random variables with values in $[0,1]$ and let $\delta>0$. we have that with probability at least $1-\delta$ in the i.i.d vector $\bm{Z}=(Z_1,\cdots,Z_n)$ that
\begin{equation*}
    \E Z-\frac{1}{n}\sum_{i=1}^n Z_i\le\sqrt{\frac{2V_n(\bm{Z})}{n}\log\left(\frac{2}{\delta}\right)}+\frac{7}{3(n-1)}\log\left(\frac{2}{\delta}\right),
\end{equation*}
where $V_n(\bm{Z}):=\frac{1}{n(n-1)}\sum_{1\le i<j\le n}(Z_i-Z_j)^2$ is the sample variance.
\end{lemma}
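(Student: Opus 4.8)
The plan is to reproduce the argument of Maurer and Pontil, of which Lemma~\ref{thm_ebi} is a verbatim restatement~\citep{maurer2009empirical}: combine a variance-dependent (Bernstein) tail bound for the sample mean with a high-probability comparison between the true and empirical standard deviations, and then consolidate constants. Throughout I write $\sigma^2 := \E[(Z-\E Z)^2]$ for the true variance and $\bar{Z}_n := \frac{1}{n}\sum_{i=1}^n Z_i$ for the sample mean. A preliminary observation is that the quantity $V_n(\bm{Z}) = \frac{1}{n(n-1)}\sum_{i<j}(Z_i-Z_j)^2$ coincides with the unbiased sample variance $\frac{1}{n-1}\sum_{i=1}^n (Z_i - \bar{Z}_n)^2$, so that $\E[V_n(\bm{Z})]=\sigma^2$; this identity is what links the two ingredients.

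For the first ingredient, since $Z\in[0,1]$, Bernstein's inequality yields that with probability at least $1-\frac{\delta}{2}$,
\begin{equation*}
    \E Z-\bar{Z}_n\le\sqrt{\frac{2\sigma^2}{n}\log\left(\frac{2}{\delta}\right)}+\frac{2}{3n}\log\left(\frac{2}{\delta}\right).
\end{equation*}
This already has the correct shape but depends on the unknown $\sigma$. The second ingredient replaces $\sigma$ by its empirical counterpart through a companion concentration result stating that, with probability at least $1-\frac{\delta}{2}$,
\begin{equation*}
    \sigma\le\sqrt{V_n(\bm{Z})}+\sqrt{\frac{2}{n-1}\log\left(\frac{2}{\delta}\right)}.
\end{equation*}
Substituting this into the Bernstein bound and distributing the product $\sigma\cdot\sqrt{2\log(2/\delta)/n}$ produces the empirical leading term $\sqrt{2 V_n(\bm{Z})\log(2/\delta)/n}$ together with a cross term of order $\log(2/\delta)/\sqrt{n(n-1)}$. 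Collecting this cross term with the $\frac{2}{3n}\log(2/\delta)$ residual from Bernstein, bounding $\frac{1}{\sqrt{n(n-1)}}$ and $\frac{1}{n}$ by $\frac{1}{n-1}$, and taking a union bound over the two events gives the stated inequality, with the coefficient $\frac{7}{3}$ absorbing the two lower-order contributions.

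The crux — and the step I expect to be the main obstacle — is the empirical standard-deviation comparison. The naive route, applying McDiarmid's bounded-differences inequality to $\sqrt{V_n(\bm{Z})}$, does \emph{not} deliver the required $O(\sqrt{\log(1/\delta)/(n-1)})$ rate: moving a single coordinate can change $\sqrt{V_n(\bm{Z})}$ by as much as $1/\sqrt{n}$ (this is tight, e.g.\ for $n=2$ and $\bm{Z}=(0,0)\mapsto(1,0)$), so $\sum_i c_i^2=1$ and McDiarmid only yields an $O(1)$ radius. One therefore needs a variance-sensitive concentration inequality that exploits the fact that such large single-coordinate changes occur only when $V_n$ is itself small — a self-bounding / Efron--Stein--Bernstein argument in the spirit of Boucheron--Lugosi--Massart applied to the sample standard deviation — together with control of the Jensen gap between $\sigma$ and $\E[\sqrt{V_n(\bm{Z})}]$, which goes in the unfavourable direction (Jensen gives only $\E[\sqrt{V_n(\bm{Z})}]\le\sigma$). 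Equivalently, one establishes a sharp one-sided lower-tail bound $V_n(\bm{Z})\ge\sigma^2-O(\log(1/\delta)/(n-1))$ and passes to standard deviations via $\sqrt{a}\le\sqrt{b}+\sqrt{(a-b)_+}$. Securing both the correct $1/\sqrt{n-1}$ scaling and a clean constant here is the delicate part; once the comparison is in hand, the remaining algebra (substitution, union bound, and the consolidation to $\frac{7}{3}$) is routine. Since the statement is a direct restatement of the cited theorem, an alternative is simply to invoke \citep[Theorem~4]{maurer2009empirical} after checking the sample-variance conventions agree.
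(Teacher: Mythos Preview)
The paper does not prove this lemma at all: it is stated as a restatement of \citep[Theorem~4]{maurer2009empirical} and simply cited without argument. Your final remark --- that one may just invoke the cited theorem after checking the sample-variance conventions match --- is exactly what the paper does. Your detailed sketch of the Maurer--Pontil proof (Bernstein plus a self-bounding concentration for the empirical standard deviation, then a union bound and constant consolidation) is correct and faithful to the original source, but it goes well beyond what the paper requires or provides.
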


\section{\MakeUppercase{Experimental Details}}
\label{sec:aed}

\paragraph{1. List of pretrained generative models.}

For the CIFAR10 dataset, we compare pretrained generative models including iDDPM-DDIM~\citep{pmlr-v139-nichol21a}, LOGAN~\citep{wu2020loganlatentoptimisationgenerative}, WGAN-GP~\citep{NIPS2017_892c3b1c}, NVAE~\citep{NEURIPS2020_e3b21256}, and RESFLOW~\citep{NEURIPS2019_5d0d5594}. 
For ImageNet, we compare pretrained models including DiT-XL-2~\citep{peebles2023scalablediffusionmodelstransformers}, ADMG~\citep{NEURIPS2021_49ad23d1}, BigGAN~\citep{brock2018large}, RQ-Transformer~\citep{lee2022autoregressiveimagegenerationusing}, and ADM~\citep{NEURIPS2021_49ad23d1}. 
For the FFHQ dataset, we compare StyleNAT~\citep{walton2023stylenatgivingheadnew}, StyleGAN2-ADA~\citep{NEURIPS2020_8d30aa96}, LDM~\citep{rombach2022highresolutionimagesynthesislatent}, Unleashing-Transformers~\citep{bondtaylor2021unleashingtransformersparalleltoken}, and Efficient-vdVAE~\citep{hazami2022efficientvdvae}. We utilize the generated image datasets downloaded from the dgm-eval repository~\citep{NEURIPS2023_0bc795af}.

\paragraph{2. List of embeddings for FD-based evaluation and selection.}

We consider three standard encoders for image data: InceptionV3.Net~\citep{7780677}, DINOv2~\citep{oquab2024dinov2}, and CLIP~\citep{pmlr-v139-radford21a}. Following~\citep{NEURIPS2023_0bc795af}, we utilize the DINOv2 ViT-L/14 model and the OpenCLIP ViT-L/14 trained on DataComp-1B~\citep{gadre2023datacompsearchgenerationmultimodal}. Embeddings extracted by InceptionV3.Net, DINOv2, and CLIP have \num{2048}, \num{1024}, and \num{1024} dimensions, respectively.
For video data, we utilize the I3D model~\citep{carreira2018quovadisactionrecognition} pretrained on the Kinetics-400~\citep{carreira2018quovadisactionrecognition} dataset following~\citep{unterthiner2019fvd}, where the logits layer with \num{400} dimensions is used as the embedding. 
For audio data, we utilize VGGish~\citep{hershey2017cnnarchitectureslargescaleaudio} following~\citep{kilgour2019frechetaudiodistancemetric}, where the activations from the \num{128} dimensional layer prior to the final classification layer are
used as the embedding.

\paragraph{3. Implementation details of FD-UCB.}

As we consider InceptionV3, DINOv2, and (unnormalized) CLIP embeddings, which are generally unbounded, we utilize the collected data to estimate the model-dependent parameters in the bonus~(\ref{fid-b}). This approach preserves the online format of the FD-UCB algorithm. Our numerical results indicate that with only 50 samples, the norm terms in the bound would be close to their underlying values~(Tables~\ref{data-driven-paras},~\ref{data-driven-paras2}, and~\ref{data-driven-paras3}). To have a better estimation of the covariance matrix, we also adopt the thresholding method in~\citep{cai2011adaptivethresholdingsparsecovariance}. We treat the batch size, parameter $\kappa$ in the bonus~(\ref{fid-b}), and parameter $M$ for thresholding as hyperparameters. We conduct ablation study on these hyperparameters summarize the results in Figure~\ref{fd-ucb-ablation}.

\begin{table}[!ht]
    \centering
    \begin{tabular}{c | c c | c c | c c | c c | c}
        \toprule
        \textbf{Parameters} & $\tr(\Sigma)$ & $\tr(\widehat{\Sigma})$ & $\sqrt{\tr[\Sigma^2]}$ & $\sqrt{\tr[\widehat{\Sigma}^2]}$ & $\|\Sigma\|_2$ & $\|\widehat{\Sigma}\|_2$ & $\bm{r}(\Sigma)$ & $\bm{r}(\widehat{\Sigma})$ & Range of $\|f(X_g)\|_2$ \\
        \midrule
         DiT-XL-2 & \num{166.4} & \num{160.0} & \num{12.5} & \num{20.9} & \num{9.6} & \num{11.2} & \num{17.3} & \num{14.3} & $[9.7, 33.6]$ \\
         
         ADMG & \num{157.7} & \num{150.8} & \num{12.1} & \num{19.4} & \num{9.2} & \num{10.5} & \num{17.2} & \num{14.4} & $[10.8, 34.5]$\\
         
         BigGAN & \num{154.5} & \num{161.8} & \num{13.5} & \num{22.3} & \num{11.0} & \num{13.2} & \num{14.0} & \num{12.2} & $[9.7, 31.8]$   \\
         
         RQ-Transformer & \num{166.9} & \num{164.5} & \num{13.8} & \num{22.2} & \num{10.7} & \num{12.7} & \num{15.5} & \num{13.0} & $[11.0, 29.7]$ \\
         
         ADM & \num{176.5} & \num{182.5} & \num{15.7} & \num{25.6} & \num{12.9} & \num{15.2} & \num{13.7} & \num{12.0} & $[0.5, 34.8]$ \\
         
        \midrule
         ImageNet & \num{181.2} & \num{141.2} & \num{12.4} & \num{20.2} & \num{9.2} & \num{8.7} & \num{19.7} & \num{16.2} & $[10.9,33.0]$ \\
         
        \bottomrule
        \end{tabular}
    \caption{Data-dependent parameters on the ImageNet dataset and standard generative models: We present both the estimated and reference values, which are computed from \num{50} and $5,000$ images, respectively. The image data embeddings are extracted by InceptionV3.Net.}
    \label{data-driven-paras}
\end{table}

\begin{table}[!ht]
    \centering
    \begin{tabular}{c | c c | c c | c c | c c | c}
        \toprule
        \textbf{Parameters} & $\tr(\Sigma)$ & $\tr(\widehat{\Sigma})$ & $\sqrt{\tr[\Sigma^2]}$ & $\sqrt{\tr[\widehat{\Sigma}^2]}$ & $\|\Sigma\|_2$ & $\|\widehat{\Sigma}\|_2$ & $\bm{r}(\Sigma)$ & $\bm{r}(\widehat{\Sigma})$ & Range of $\|f(X_g)\|_2$ \\
        \midrule
         DiT-XL-2 & \num{591.2} & \num{601.9} & 
         \num{59.1} & \num{101.4} & 
         \num{38.2} & \num{48.0} & 
         \num{15.5} & \num{12.5} & 
         $[34.6, 36.9]$ \\
         
         ADMG & \num{579.9} & \num{566.1} & 
         \num{62.1} & \num{101.3} & 
         \num{41.2} & \num{50.4} & 
         \num{14.1} & \num{11.2} & 
         $[34.9, 37.0]$\\
         
         BigGAN & \num{514.9} & \num{486.8} &
         \num{62.7} & \num{90.8} &
         \num{43.8} & \num{54.0} &
         \num{11.8} & \num{9.0} &
         $[34.9, 36.8]$   \\
         
         RQ-Transformer & \num{545.5} & \num{551.7} &
         \num{58.8} & \num{99.1} &
         \num{39.3} & \num{51.0} &
         \num{13.9} & \num{10.8} &
         $[35.0, 36.8]$ \\
         
         ADM & \num{560.2} & \num{561.8} &
         \num{59.0} & \num{100.7} &
         \num{38.3} & \num{52.5} &
         \num{14.6} & \num{10.7} &
         $[34.9, 36.8]$ \\
         
        \midrule
         ImageNet & \num{636.8} & \num{573.1} &
         \num{53.5} & \num{118.5} &
         \num{32.1} & \num{74.1} &
         \num{19.9} & \num{7.7} &
         $[35.1,36.7]$ \\
         
        \bottomrule
        \end{tabular}
    \caption{Data-dependent parameters on the ImageNet dataset and standard generative models: We present both the estimated and reference values, which are computed from \num{50} and $5,000$ images, respectively. The image data embeddings are extracted by CLIP.}
    \label{data-driven-paras2}
\end{table}

\begin{table}[!ht]
    \centering
    \begin{tabular}{c | c c | c c | c c | c c | c}
        \toprule
        \textbf{Parameters} & $\tr(\Sigma)$ & $\tr(\widehat{\Sigma})$ & $\sqrt{\tr[\Sigma^2]}$ & $\sqrt{\tr[\widehat{\Sigma}^2]}$ & $\|\Sigma\|_2$ & $\|\widehat{\Sigma}\|_2$ & $\bm{r}(\Sigma)$ & $\bm{r}(\widehat{\Sigma})$ & Range of $\|f(X_g)\|_2$ \\
        \midrule
         DiT-XL-2 & \num{2079.5} & \num{2085.6} & 
         \num{99.9} & \num{309.4} & 
         \num{36.5} & \num{81.4} & 
         \num{57.0} & \num{25.6} & 
         $[38.5, 48.7]$ \\
         
         ADMG & \num{2068.2} & \num{2102.1} & 
         \num{103.0} & \num{311.4} & 
         \num{42.5} & \num{79.1} & 
         \num{48.7} & \num{25.6} & 
         $[38.6, 49.0]$\\
         
         BigGAN & \num{1885.5} & \num{1799.6} &
         \num{120.3} & \num{285.4} &
         \num{68.8} & \num{110.4} &
         \num{27.4} & \num{16.3} &
         $[36.5, 48.6]$   \\
         
         RQ-Transformer & \num{1955.3} & \num{1968.0} &
         \num{111.4} & \num{302.5} &
         \num{54.4} & \num{91.1} &
         \num{36.0} & \num{21.6} &
         $[38.4, 49.3]$ \\
         
         ADM & \num{2013.3} & \num{1988.1} &
         \num{106.7} & \num{306.0} &
         \num{50.5} & \num{101.0} &
         \num{39.9} & \num{19.7} &
         $[39.1, 49.2]$ \\
         
        \midrule
         ImageNet & \num{2112.7} & \num{2095.4} &
         \num{85.3} & \num{347.3} &
         \num{21.0} & \num{104.3} &
         \num{100.7} & \num{20.1} &
         $[42.2,48.3]$ \\
         
        \bottomrule
        \end{tabular}
    \caption{Data-dependent parameters on the ImageNet dataset and standard generative models: We present both the estimated and reference values, which are computed from \num{50} and $5,000$ images, respectively. The image data embeddings are extracted by DINOv2.}
    \label{data-driven-paras3}
\end{table}

\paragraph{4. Implementation details of Naive-UCB.} 

Naive-UCB is a simplification of FD-UCB and IS-UCB which replaces the generator-dependent variables in the bonus function with data-independent and dimension-based terms. For FD-based evaluation, Naive-UCB sets $\tr[\Sigma_g]=O(d),\tr[\Sigma_g^2]=O(d)$, and $\|\Sigma_g\|_2=O(1)$ in FD-UCB. For IS-based evaluation, the Naive-UCB method sets $\widehat{V}^n(Y_g|X_g)=(\log d)^2$ and $\widehat{V}^n(p_{Y|X_g}[j])=1$ for any $j\in[d]$.

\begin{figure*}[!ht]
\centering
    \makebox[25pt][r]{\makebox[20pt]{\raisebox{50pt}{\rotatebox[origin=c]{90}{\textbf{Avg. Regret}}}}}
    \subfigure{\includegraphics[width=0.3\textwidth]{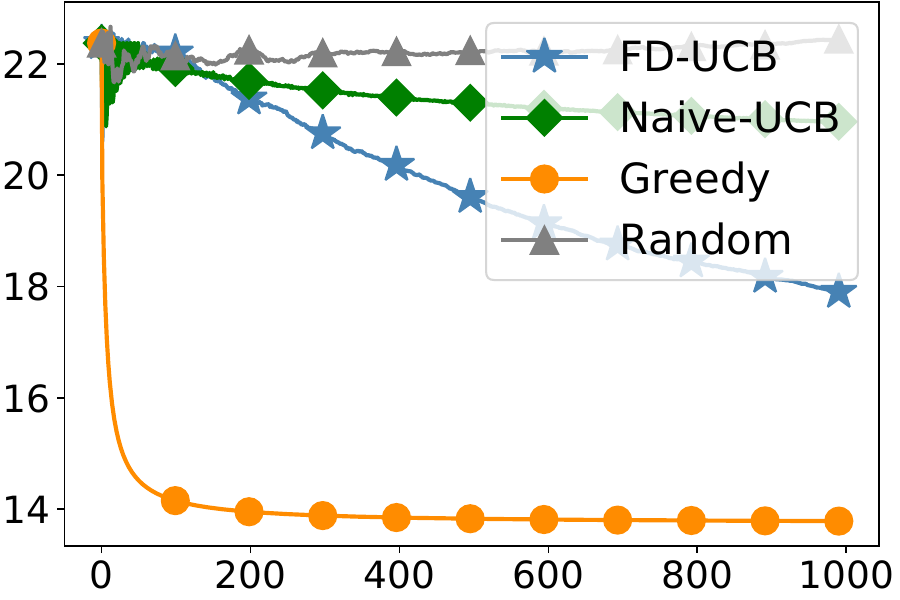}}
    \hfill
    \subfigure{\includegraphics[width=0.3\textwidth]{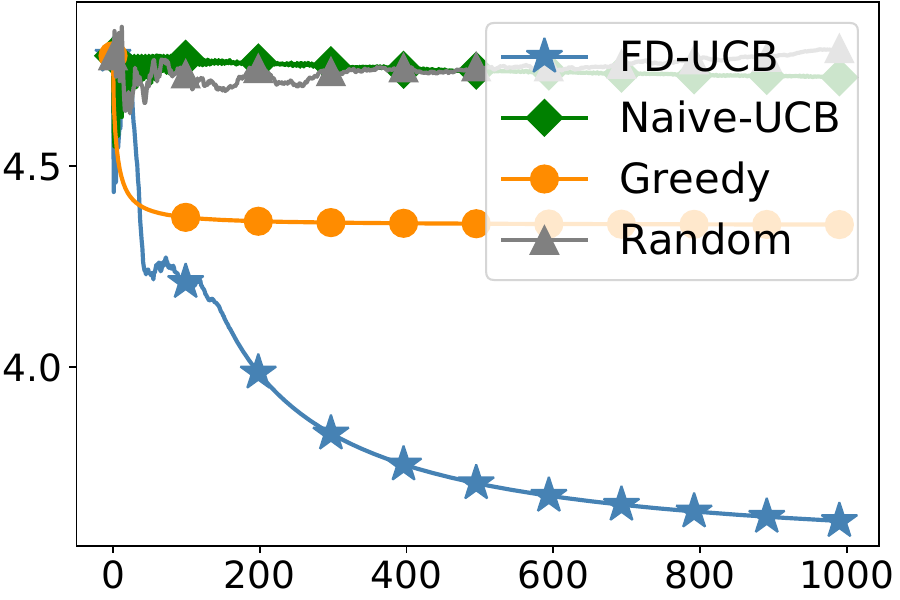}}
    \hfill
    \subfigure{\includegraphics[width=0.3\textwidth]{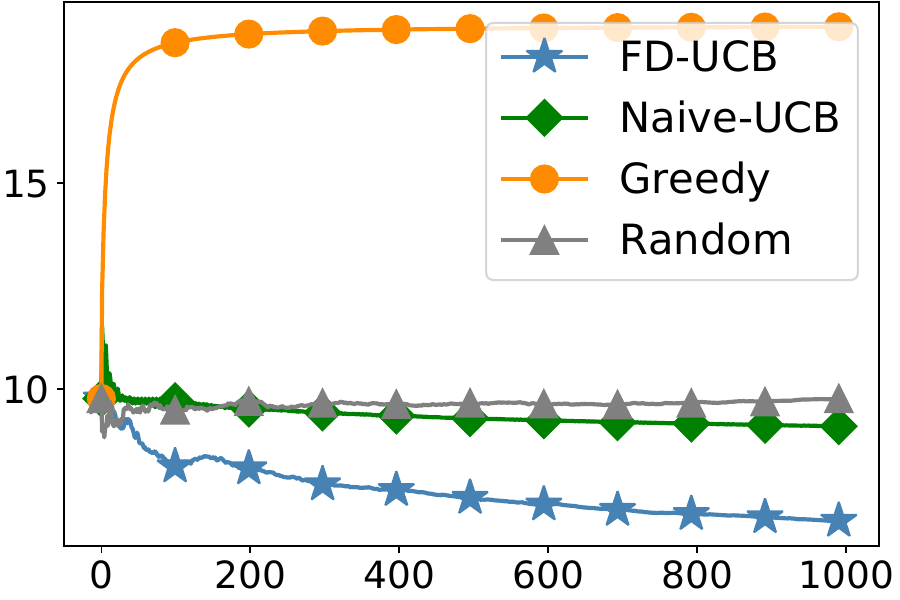}} \\
    
    \makebox[25pt][r]{\makebox[20pt]{\raisebox{50pt}{\rotatebox[origin=c]{90}{\textbf{OPR}}}}}
    \stackunder{\subfigure{\includegraphics[width=0.3\textwidth]{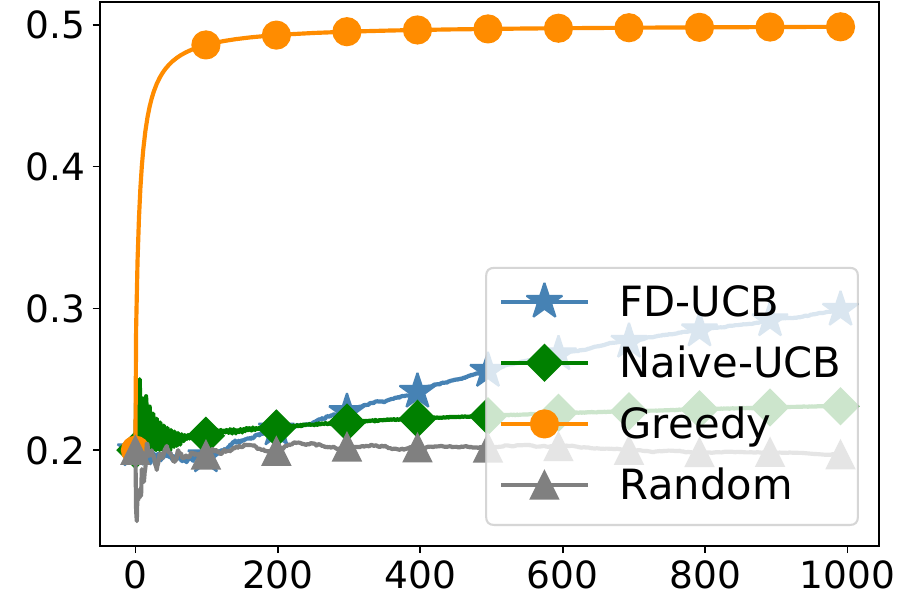}}}{\textbf{CIFAR10}}
    \hfill
    \stackunder{\subfigure{\includegraphics[width=0.3\textwidth]{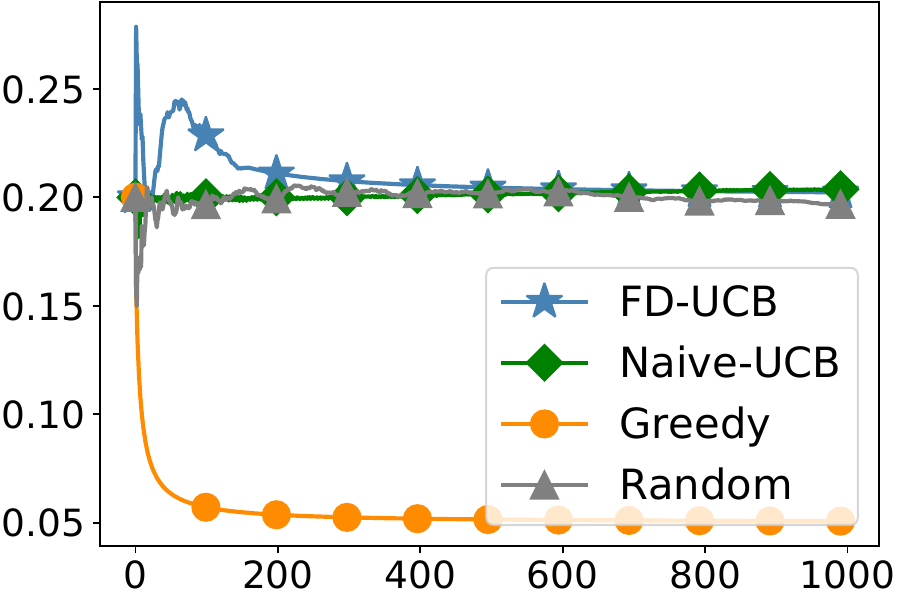}}}{\textbf{ImageNet}}
    \hfill
    \stackunder{\subfigure{\includegraphics[width=0.3\textwidth]{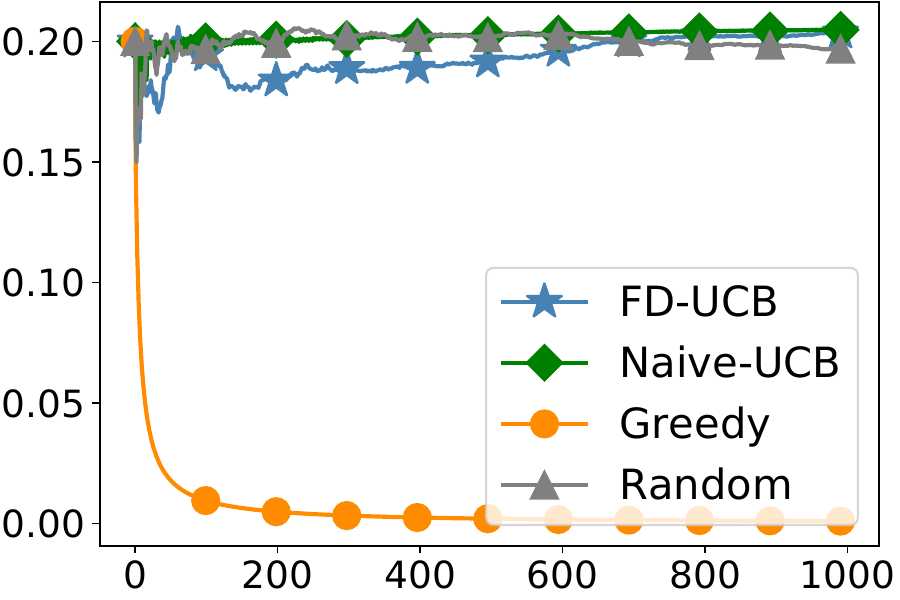}}}{\textbf{FFHQ}}
\caption{Online FD-based evaluation and selection among standard generative models: The image data embeddings are extracted by InceptionV3.Net. Results are averaged over 20 trials.}
\label{fd-inception-results}
\end{figure*}

\begin{figure*}[!ht]
\centering
    \makebox[25pt][r]{\makebox[20pt]{\raisebox{50pt}{\rotatebox[origin=c]{90}{\textbf{Avg. Regret}}}}}
    \subfigure{\includegraphics[width=0.3\textwidth]{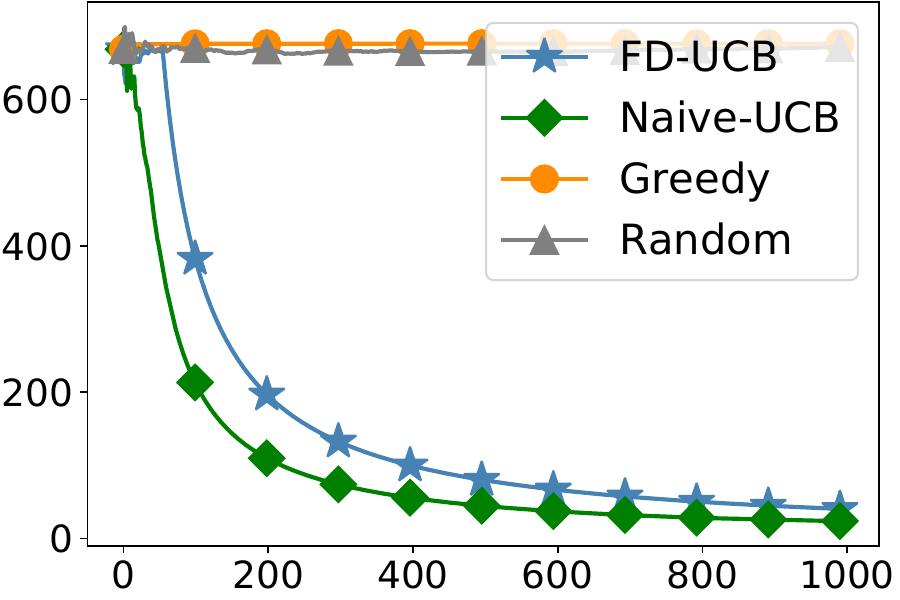}}
    \hfill
    \subfigure{\includegraphics[width=0.3\textwidth]{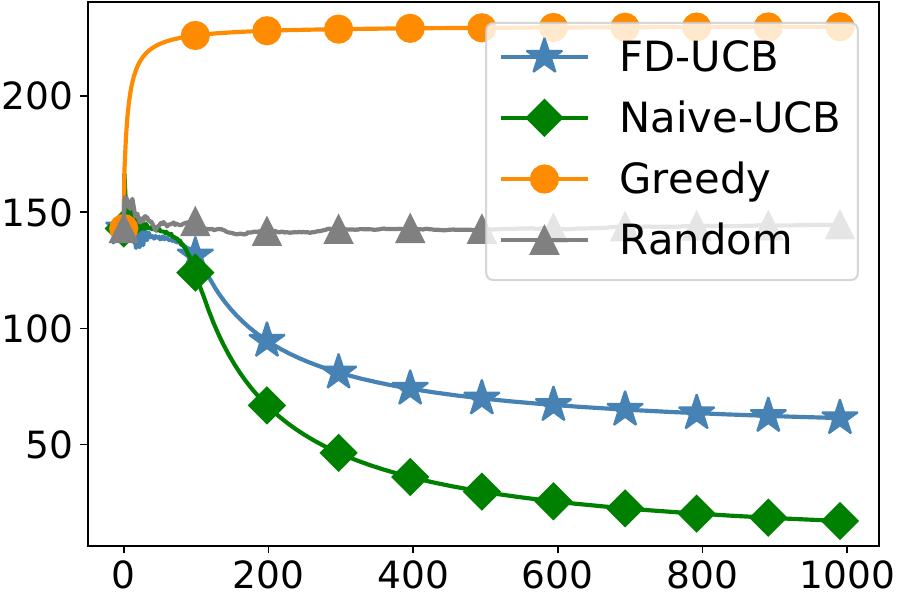}}
    \hfill
    \subfigure{\includegraphics[width=0.3\textwidth]{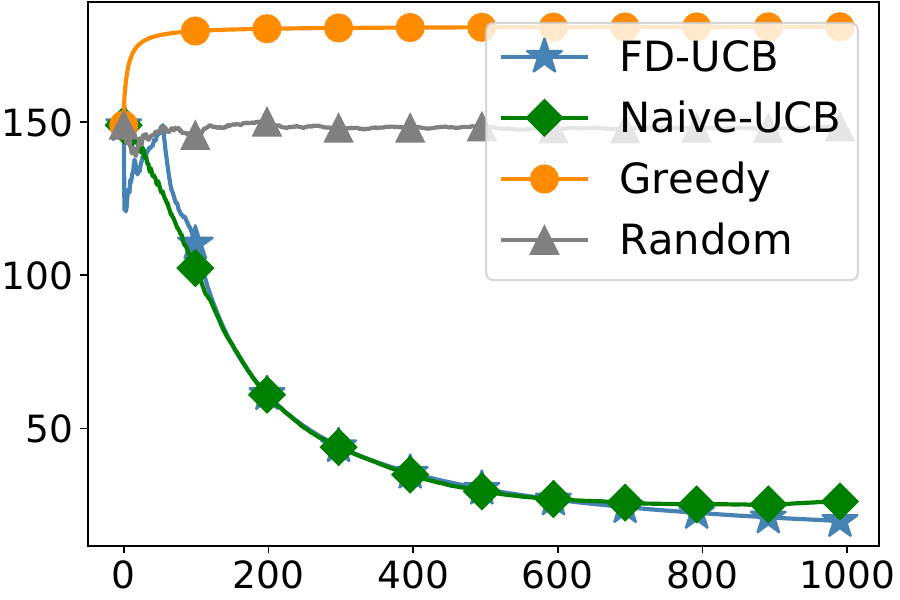}} \\
    
    \makebox[25pt][r]{\makebox[20pt]{\raisebox{50pt}{\rotatebox[origin=c]{90}{\textbf{OPR}}}}}
    \stackunder{\subfigure{\includegraphics[width=0.3\textwidth]{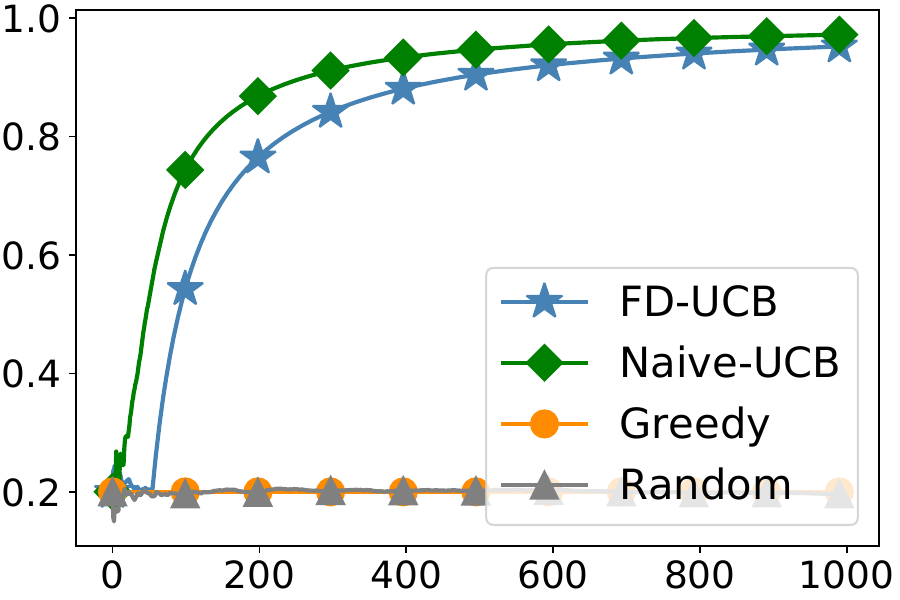}}}{\textbf{CIFAR10}}
    \hfill
    \stackunder{\subfigure{\includegraphics[width=0.3\textwidth]{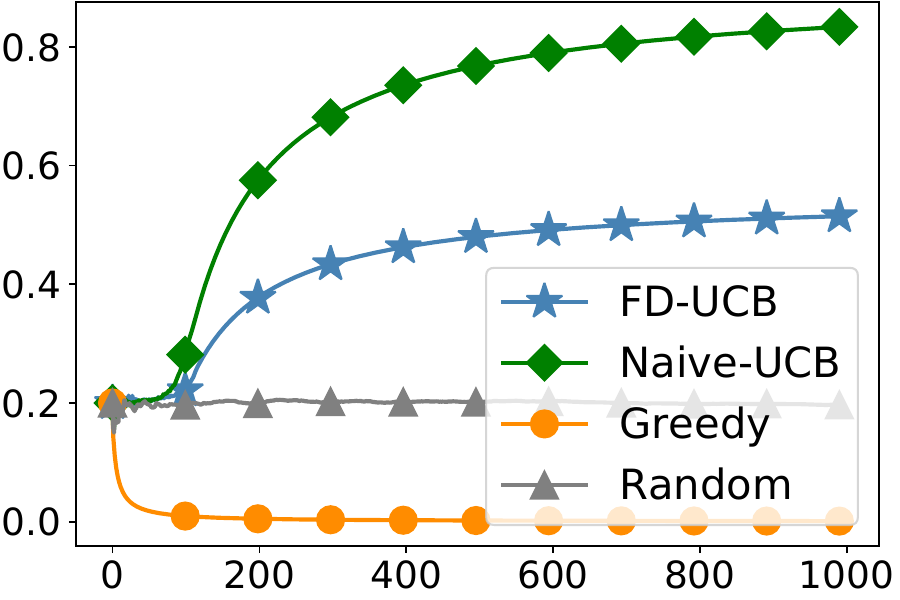}}}{\textbf{ImageNet}}
    \hfill
    \stackunder{\subfigure{\includegraphics[width=0.3\textwidth]{new_results/FD_inception_eval_opr_ffhq.pdf}}}{\textbf{FFHQ}}
\caption{Online FD-based evaluation and selection among standard generative models: The image data embeddings are extracted by DINOv2-ViT-L/14. Results are averaged over 20 trials.}
\label{fd-dinov2-results}
\end{figure*}

\begin{figure*}[!ht]
\centering
    \makebox[25pt][r]{\makebox[20pt]{\raisebox{50pt}{\rotatebox[origin=c]{90}{\textbf{FFHQ (VC)}}}}}
    \subfigure{\includegraphics[width=0.4\textwidth]{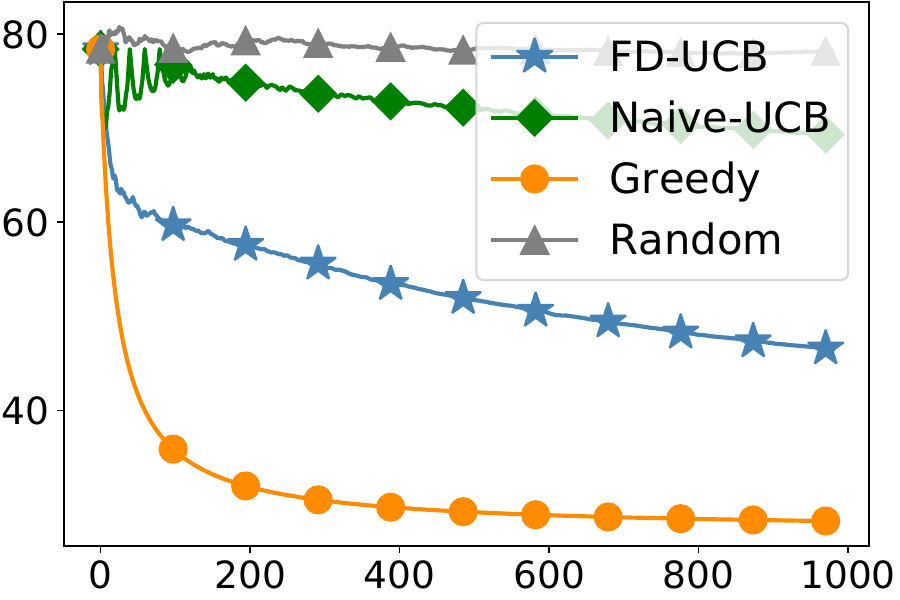}}
    \qquad
    \subfigure{\includegraphics[width=0.4\textwidth]{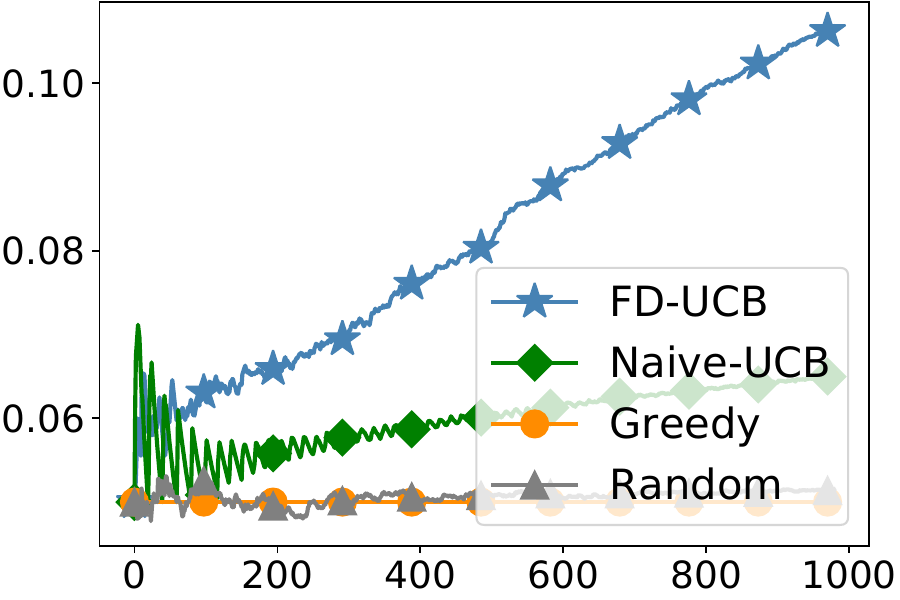}} \\
    \makebox[25pt][r]{\makebox[20pt]{\raisebox{50pt}{\rotatebox[origin=c]{90}{\textbf{AFHQ-Dog (VC)}}}}}
    \stackunder{\subfigure{\includegraphics[width=0.4\textwidth]{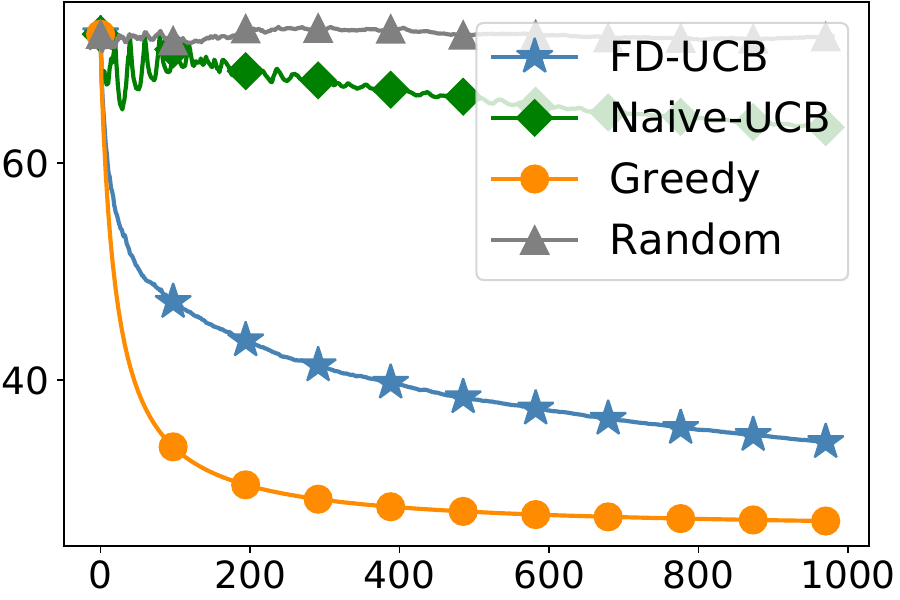}}}{\textbf{Avg. Regret}}
    \qquad
    \stackunder{\subfigure{\includegraphics[width=0.4\textwidth]{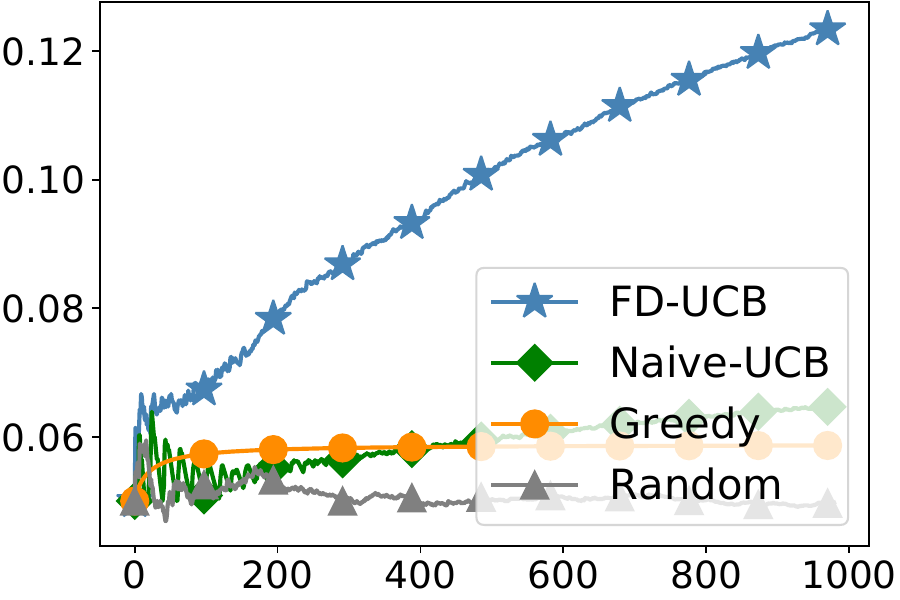}}}{\textbf{OPR}}
\caption{Online FD-based evaluation and selection among variance-controlled~(VC) models: The image data embeddings are extracted by InceptionV3.Net. Results are averaged over 20 trials.}
\label{fd-results-vc-inception}
\end{figure*}

\begin{figure*}[!ht]
\centering
    \makebox[25pt][r]{\makebox[20pt]{\raisebox{50pt}{\rotatebox[origin=c]{90}{\textbf{FFHQ (VC)}}}}}
    \subfigure{\includegraphics[width=0.4\textwidth]{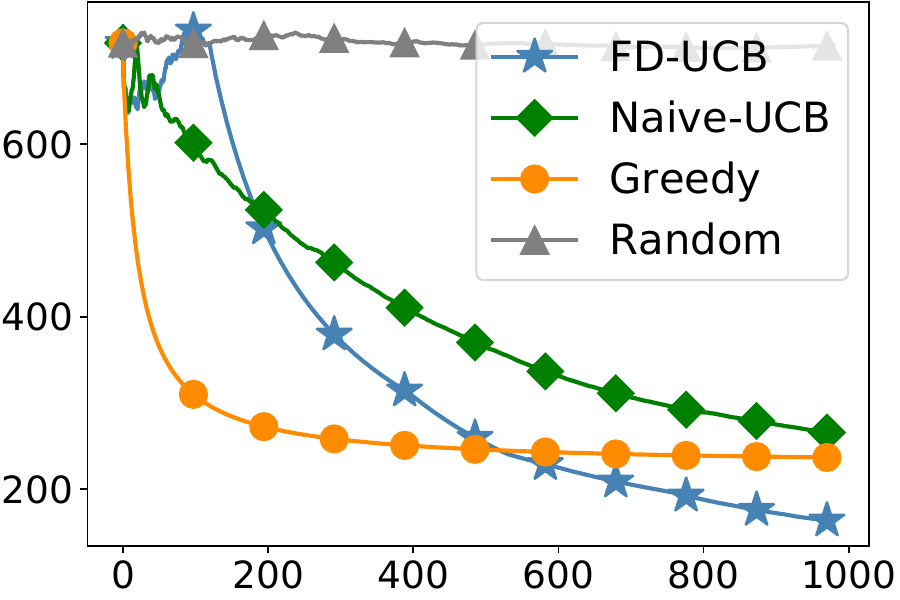}}
    \qquad
    \subfigure{\includegraphics[width=0.4\textwidth]{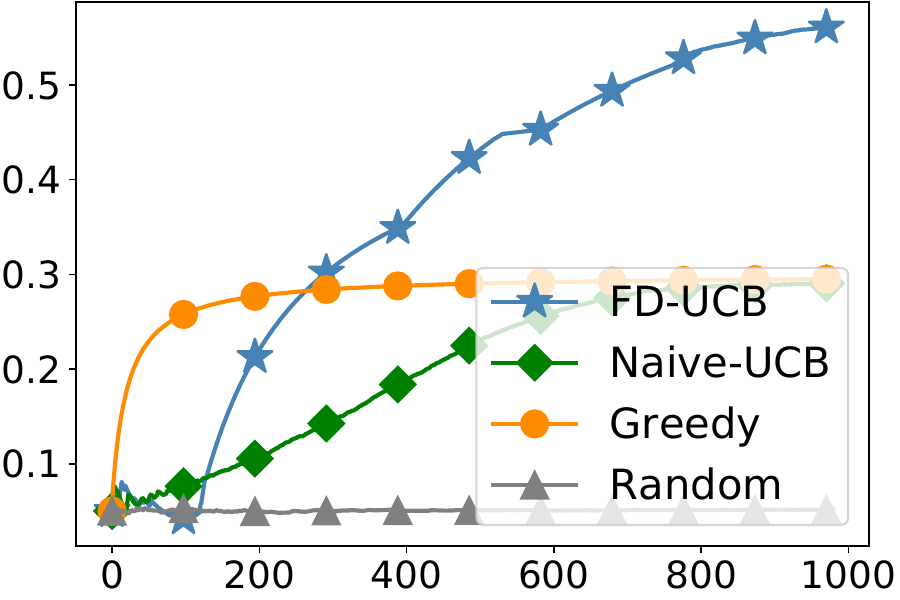}} \\
    \makebox[25pt][r]{\makebox[20pt]{\raisebox{50pt}{\rotatebox[origin=c]{90}{\textbf{AFHQ-Dog (VC)}}}}}
    \stackunder{\subfigure{\includegraphics[width=0.4\textwidth]{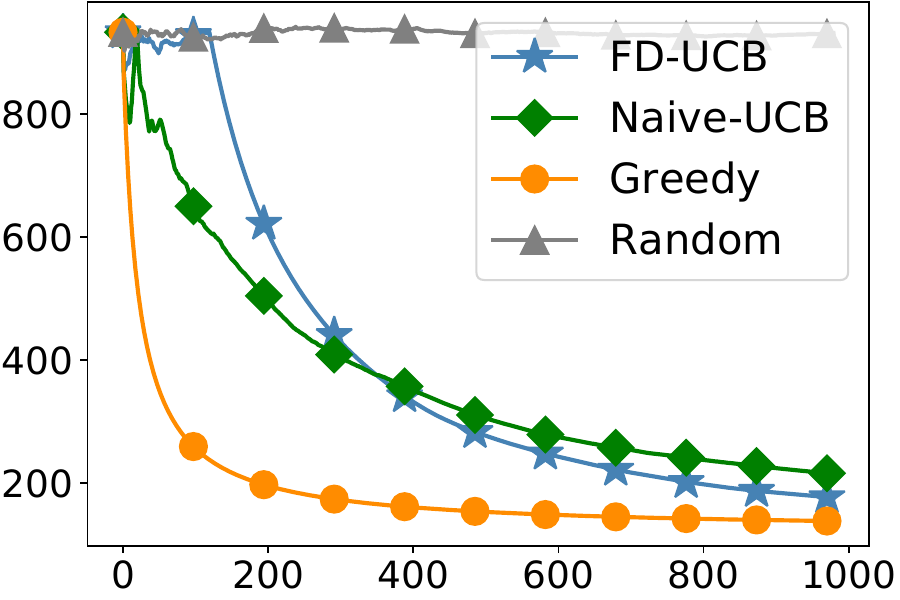}}}{\textbf{Avg. Regret}}
    \qquad
    \stackunder{\subfigure{\includegraphics[width=0.4\textwidth]{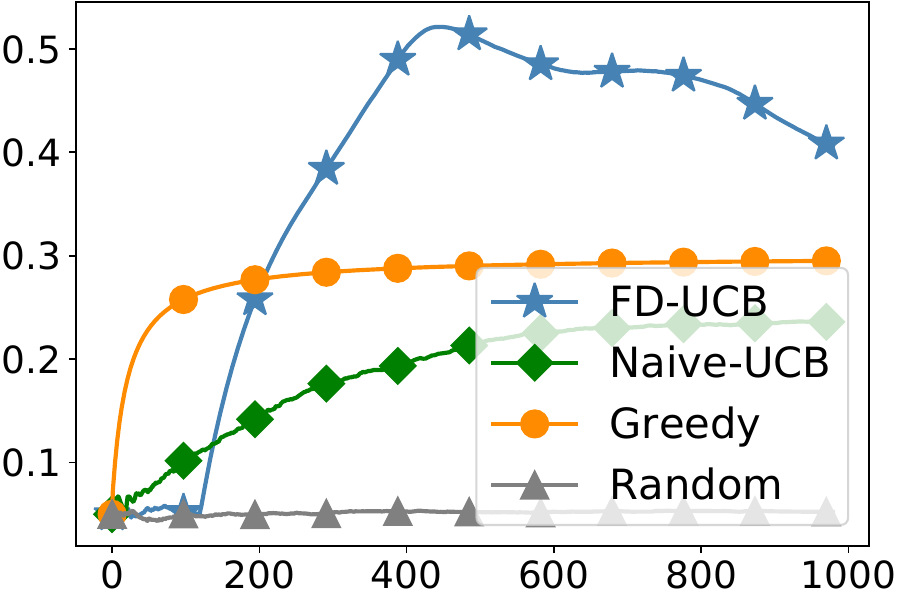}}}{\textbf{OPR}}
\caption{Online FD-based evaluation and selection among variance-controlled~(VC) models: The image data embeddings are extracted by DINOv2. Results are averaged over 20 trials.}
\label{fd-results-vc-dinov2}
\end{figure*}

\begin{figure*}[!ht]
\centering
    \makebox[25pt][r]{\makebox[20pt]{\raisebox{50pt}{\rotatebox[origin=c]{90}{\textbf{FFHQ (VC)}}}}}
    \subfigure{\includegraphics[width=0.4\textwidth]{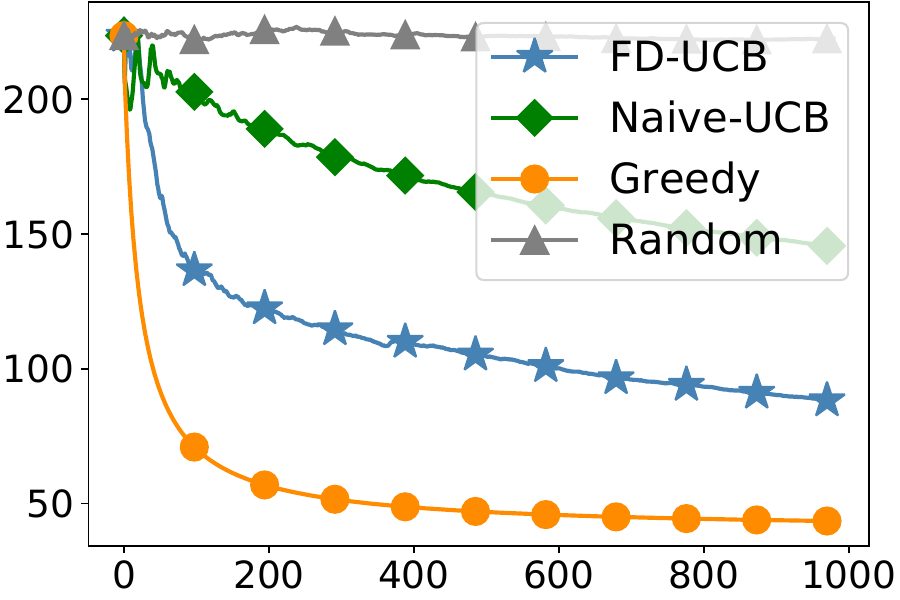}}
    \qquad
    \subfigure{\includegraphics[width=0.4\textwidth]{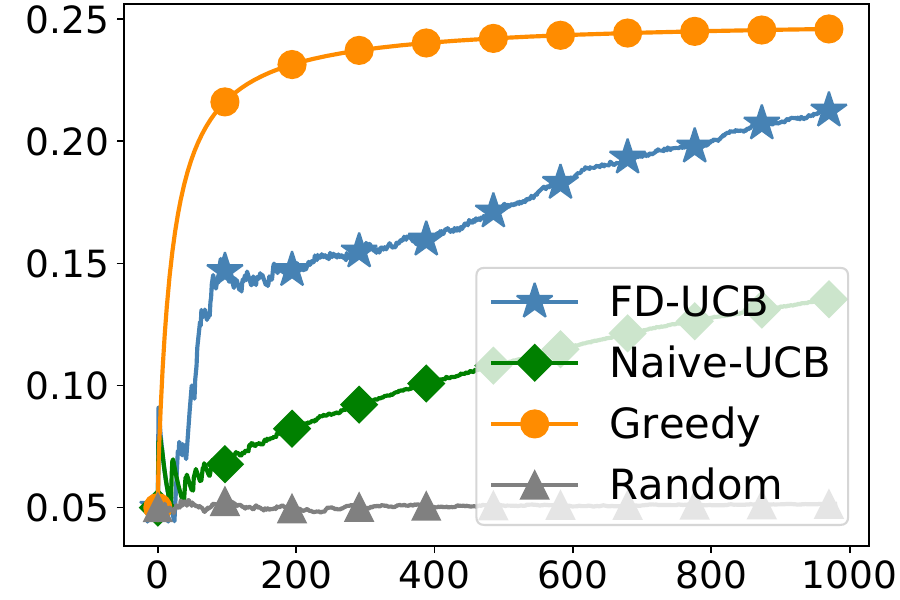}} \\
    \makebox[25pt][r]{\makebox[20pt]{\raisebox{50pt}{\rotatebox[origin=c]{90}{\textbf{AFHQ-Dog (VC)}}}}}
    \stackunder{\subfigure{\includegraphics[width=0.4\textwidth]{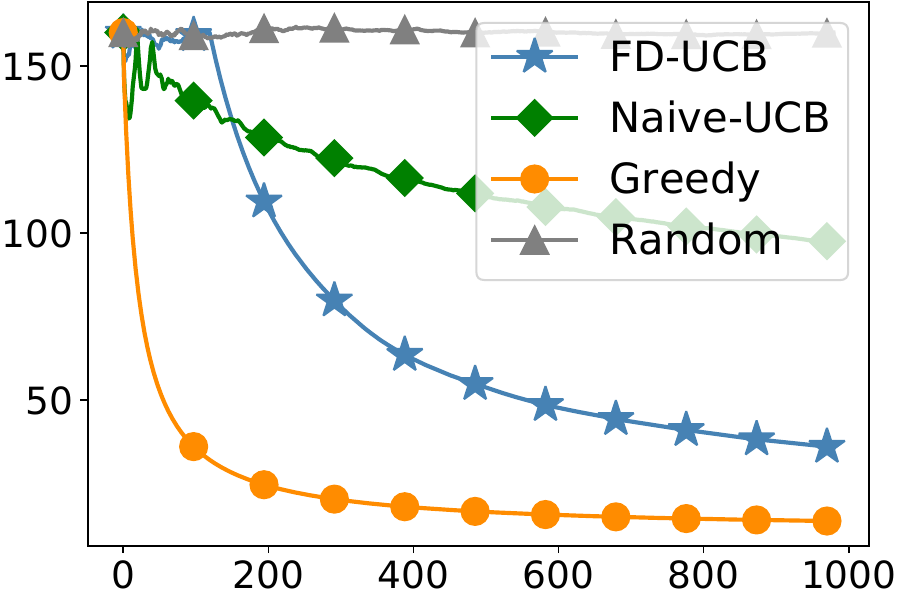}}}{\textbf{Avg. Regret}}
    \qquad
    \stackunder{\subfigure{\includegraphics[width=0.4\textwidth]{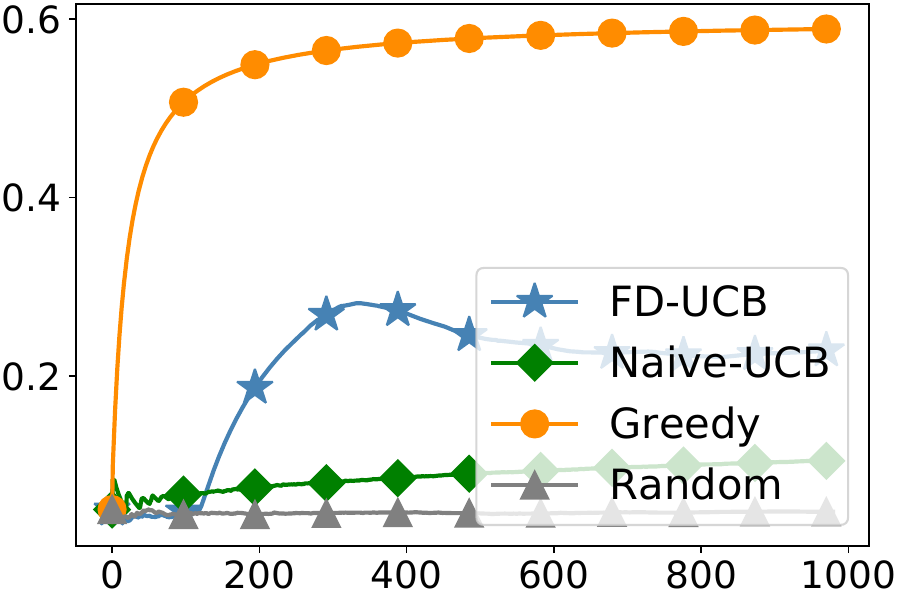}}}{\textbf{OPR}}
\caption{Online FD-based evaluation and selection among variance-controlled~(VC) models: The image data embeddings are extracted by CLIP. Results are averaged over 20 trials.}
\label{fd-results-vc-clip}
\end{figure*}

\begin{figure*}[!ht]
\centering
    \makebox[25pt][r]{\makebox[20pt]{\raisebox{50pt}{\rotatebox[origin=c]{90}{\textbf{MSR-VTT (Video)}}}}}
    \subfigure{\includegraphics[width=0.4\textwidth]{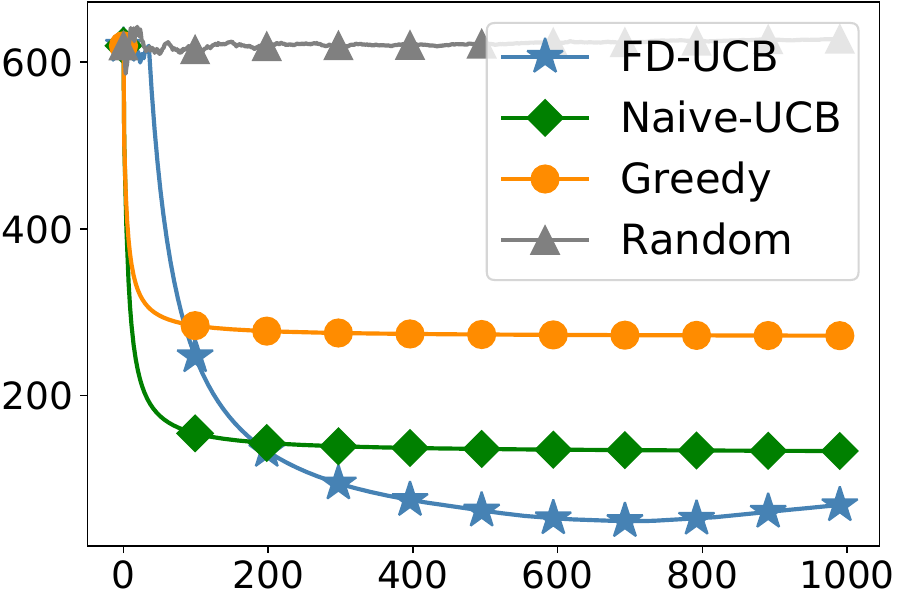}}
    \qquad
    \subfigure{\includegraphics[width=0.4\textwidth]{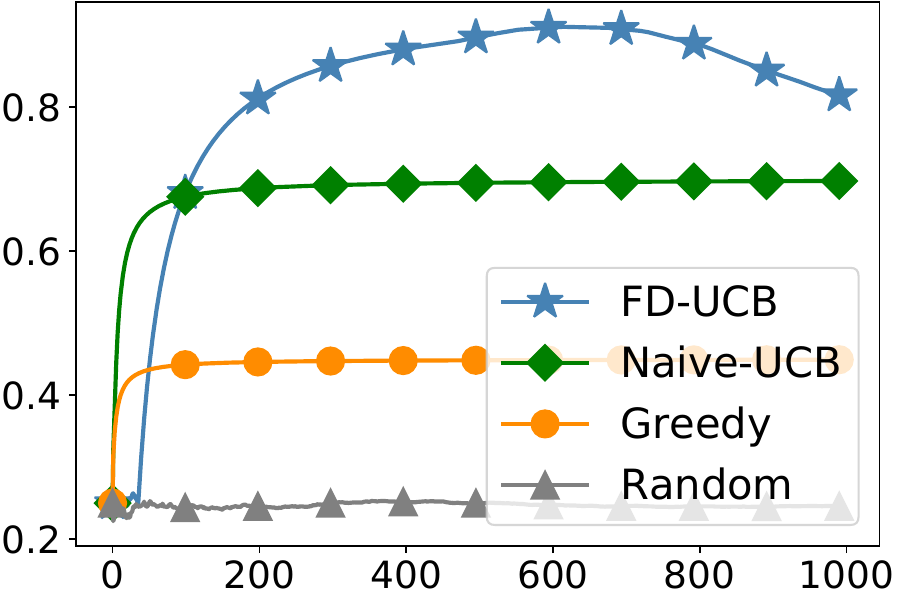}} \\
    \makebox[25pt][r]{\makebox[20pt]{\raisebox{50pt}{\rotatebox[origin=c]{90}{\textbf{Magnatagatune (Audio)}}}}}
    \stackunder{\subfigure{\includegraphics[width=0.4\textwidth]{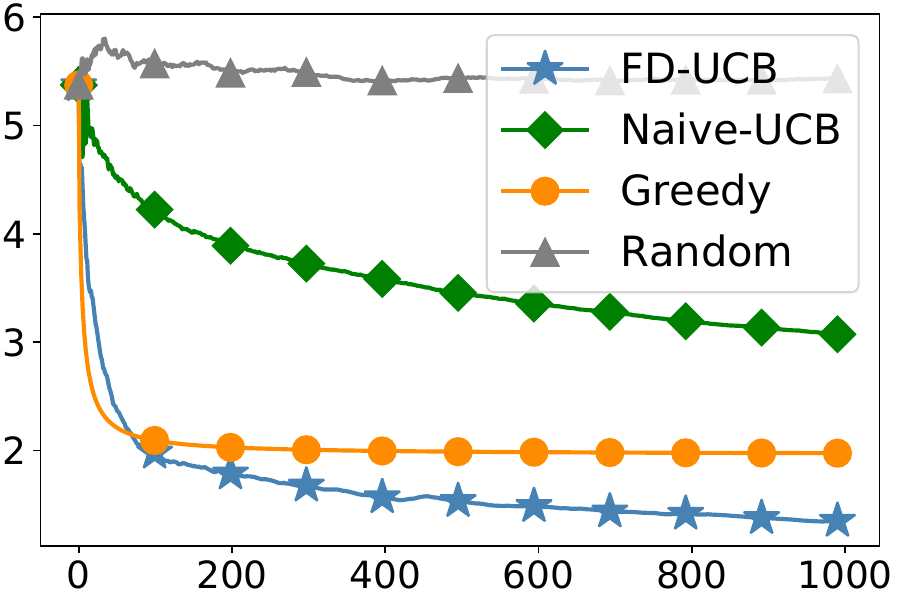}}}{\textbf{Avg. Regret}}
    \qquad
    \stackunder{\subfigure{\includegraphics[width=0.4\textwidth]{new_results/FD_VGGish_eval_opr_Magnatagatune-sync.pdf}}}{\textbf{OPR}}
\caption{Online evaluation and selection on video and audio data: The video data embeddings are extracted by I3D, and the audio data embeddings are extracted by VGGish. Results are averaged over 20 trials.}
\label{fd-results-sync}
\end{figure*}

\begin{figure*}[!ht]
\centering
    \subfigure{\includegraphics[width=0.4\textwidth]{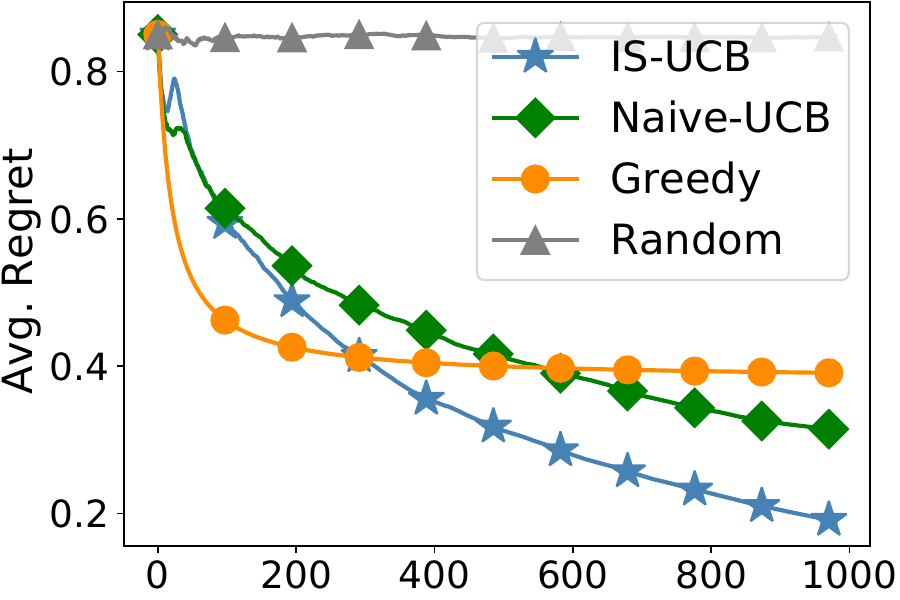}}
    \qquad
    \subfigure{\includegraphics[width=0.4\textwidth]{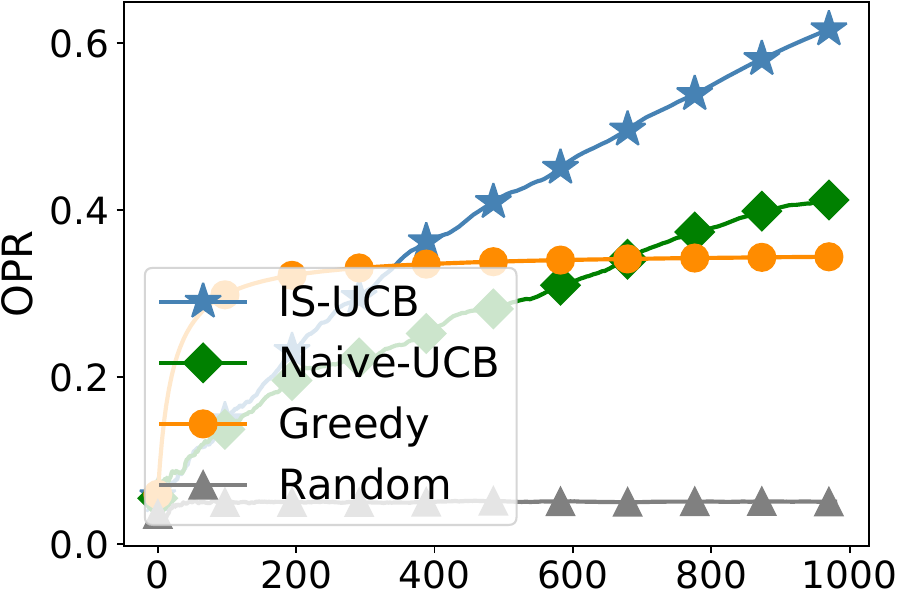}}
\caption{Online IS-based evaluation and selection among variance-controlled models on the AFHQ Dog dataset: IS-UCB can identify models that generate images with more diversity. Results are averaged over 20 trials.}
\label{is-results-afhqdog-vc}
\end{figure*}

\begin{figure*}[!ht]
\centering
    \makebox[25pt][r]{\makebox[20pt]{\raisebox{50pt}{\rotatebox[origin=c]{90}{\textbf{Batch Size}}}}}
    \subfigure{\includegraphics[width=0.4\textwidth]{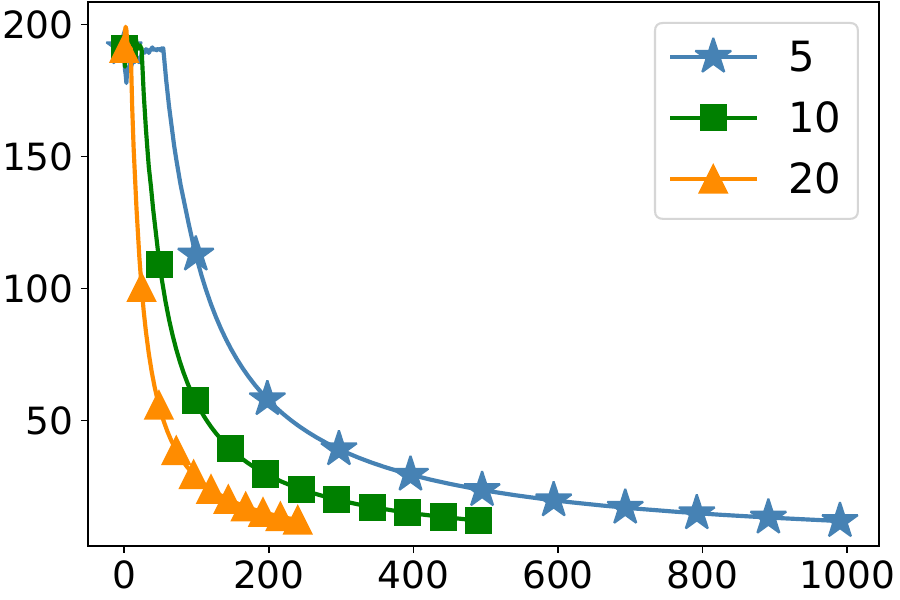}}
    \qquad
    \subfigure{\includegraphics[width=0.4\textwidth]{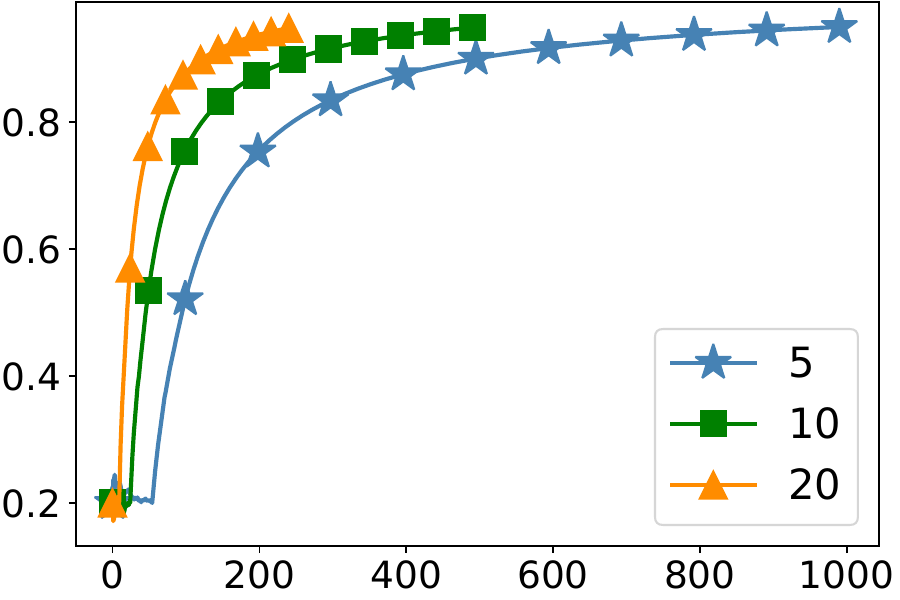}} \\
    
    \makebox[25pt][r]{\makebox[20pt]{\raisebox{50pt}{\rotatebox[origin=c]{90}{\textbf{Parameter $\kappa$}}}}}
    \subfigure{\includegraphics[width=0.4\textwidth]{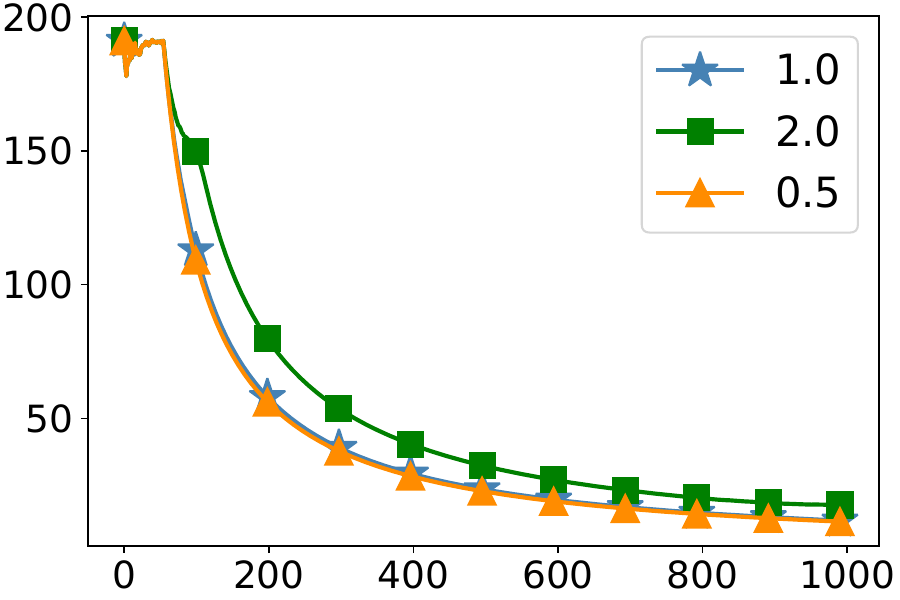}}
    \qquad
    \subfigure{\includegraphics[width=0.4\textwidth]{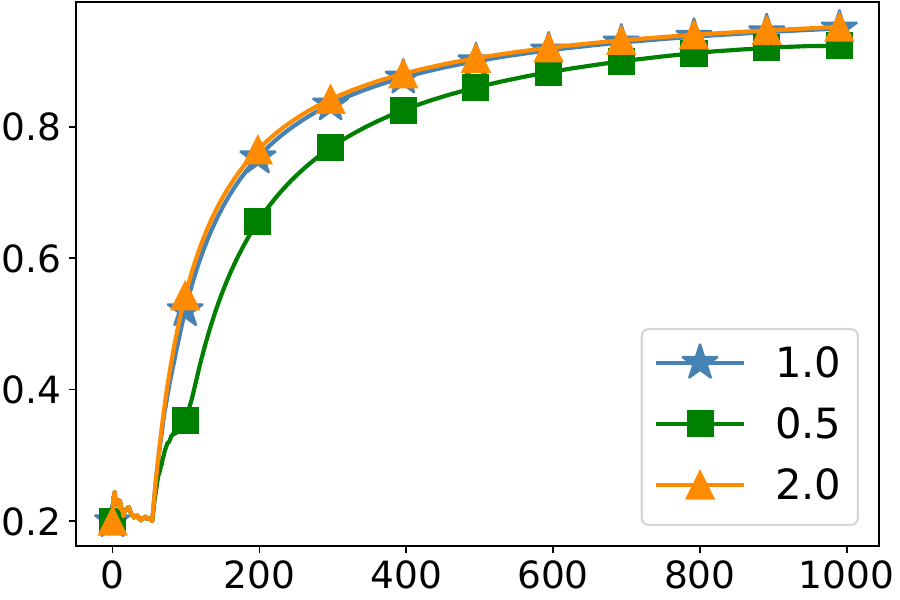}} \\
    
    \makebox[25pt][r]{\makebox[20pt]{\raisebox{50pt}{\rotatebox[origin=c]{90}{\textbf{Parameter $M$}}}}}
    \stackunder{\subfigure{\includegraphics[width=0.4\textwidth]{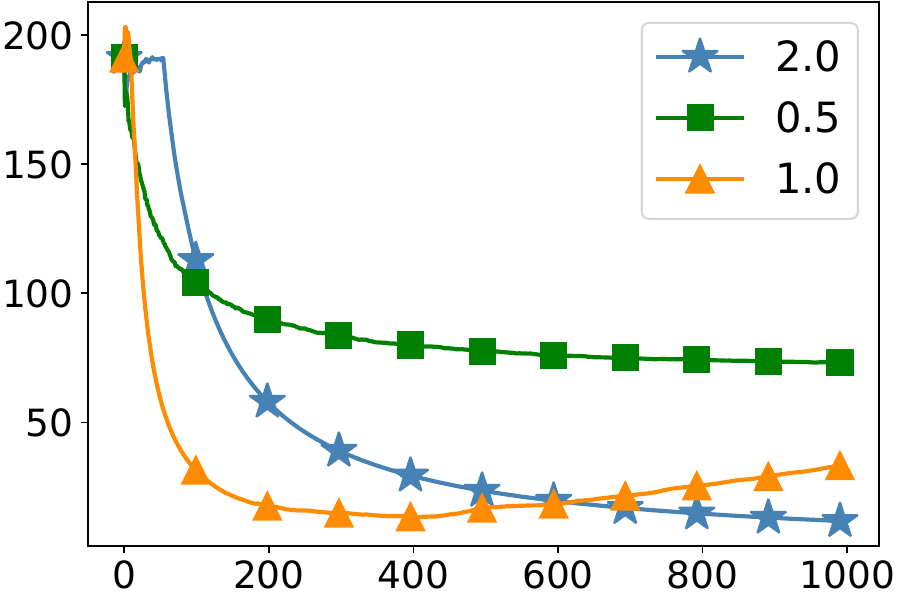}}}{\textbf{Avg. Regret}}
    \qquad
    \stackunder{\subfigure{\includegraphics[width=0.4\textwidth]{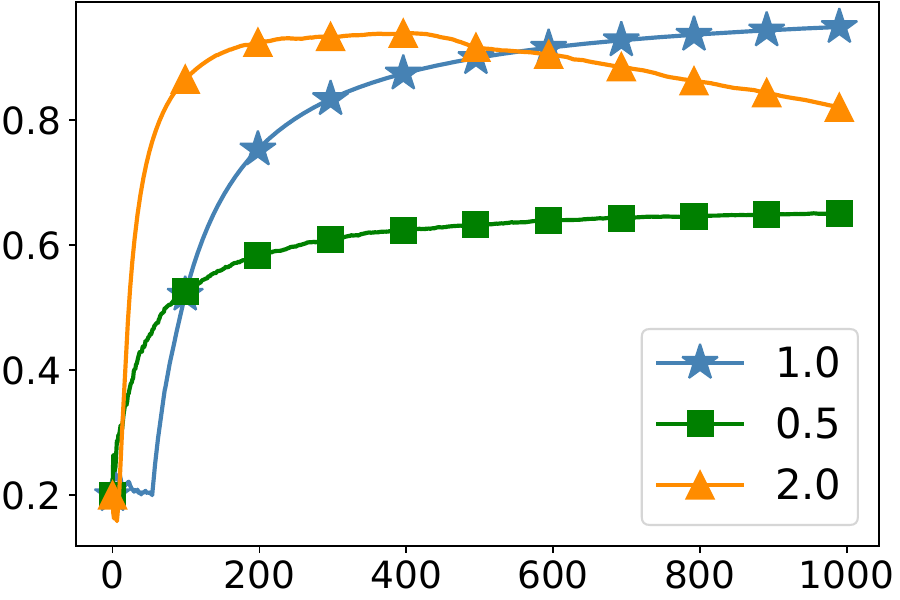}}}{\textbf{OPR}}
\caption{Ablation study on the hyperparameters of FD-UCB on the CIFAR10 dataset: The image data embeddings are extracted by CLIP. Results are averaged over 20 trials.}
\label{fd-ucb-ablation}
\end{figure*}

\begin{figure*}[!ht]
\centering
    \makebox[25pt][r]{\makebox[20pt]{\raisebox{50pt}{\rotatebox[origin=c]{90}{\textbf{Magnatagatune (Audio)}}}}}
    \subfigure{\includegraphics[width=0.4\textwidth]{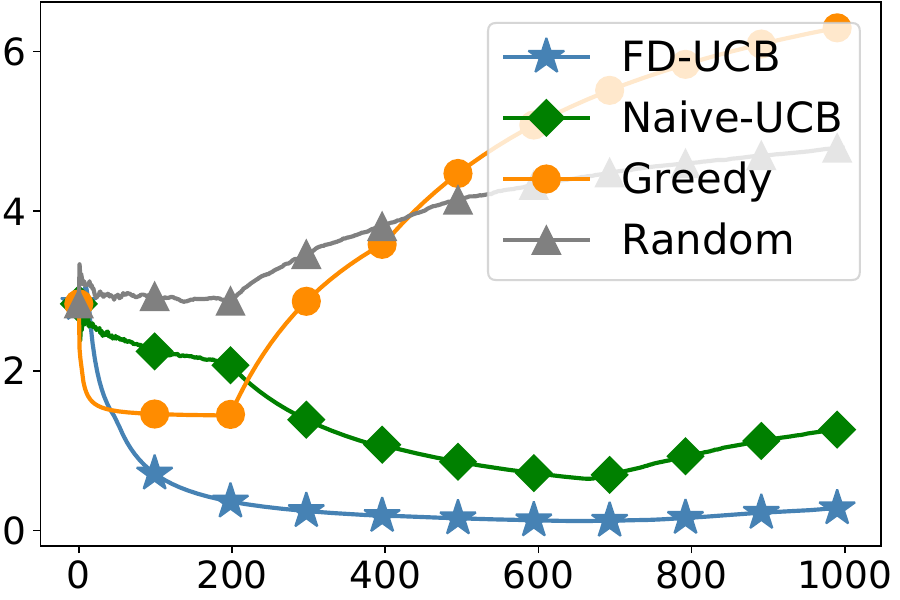}}
    \qquad
    \subfigure{\includegraphics[width=0.4\textwidth]{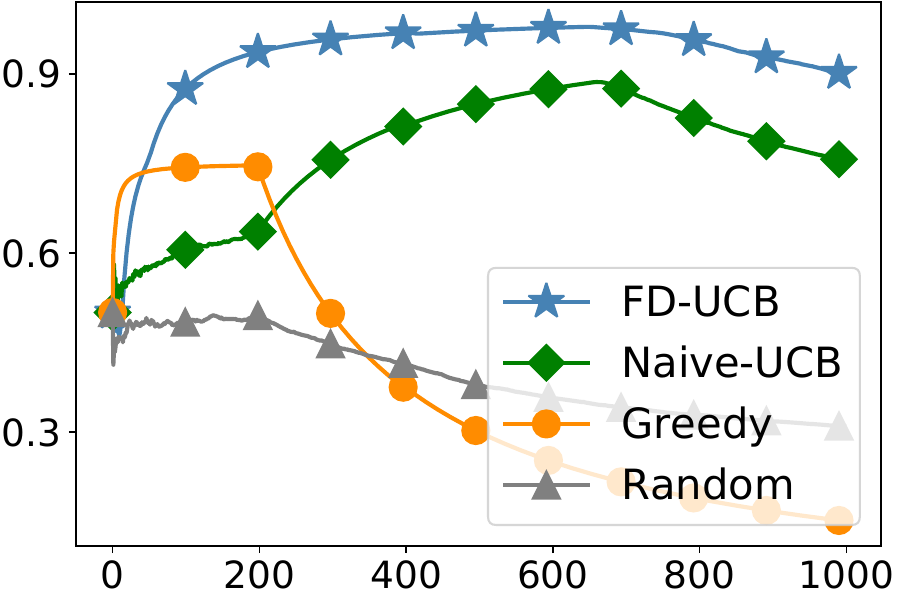}} \\
    \makebox[25pt][r]{\makebox[20pt]{\raisebox{50pt}{\rotatebox[origin=c]{90}{\textbf{CIFAR10}}}}}
    \stackunder{\subfigure{\includegraphics[width=0.4\textwidth]{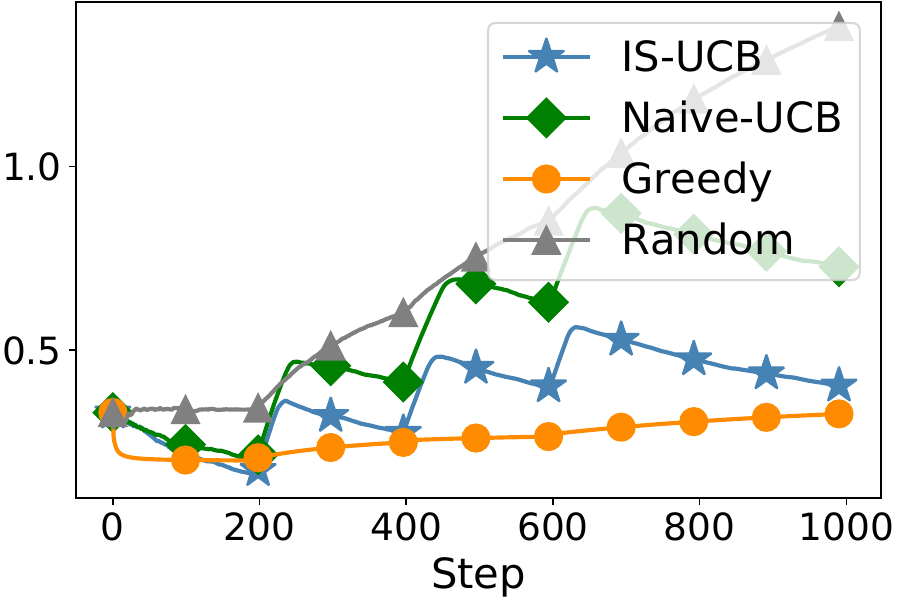}}}{\textbf{Avg. Regret}}
    \qquad
    \stackunder{\subfigure{\includegraphics[width=0.4\textwidth]{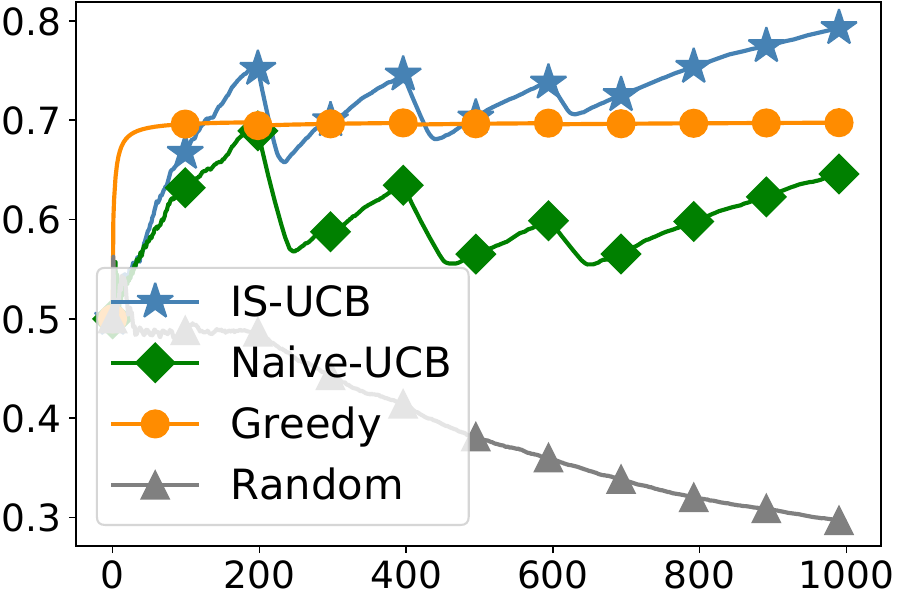}}}{\textbf{OPR}}
\caption{Adaptation to new models: A new model is introduced after each \num{200} steps. Results are averaged over 20 trials.}
\label{new-model}
\end{figure*}

\end{document}